\documentclass{article}
\usepackage{microtype}
\usepackage{graphicx}
\usepackage{diagbox}
\usepackage{booktabs} % for professional tables

\usepackage{hyperref}
\usepackage{multirow}

\usepackage[accepted]{icml2026}

\usepackage{amsmath}
\usepackage{amssymb}
\usepackage{mathtools}
\usepackage{amsthm}
\usepackage{tabularx}

\usepackage[capitalize,noabbrev]{cleveref}

\usepackage{minitoc}
\input{mypackage.sty}
\input{mysymbol.sty}
\input{myacronym.sty}

\newcommand{\AnalogUpdate}{\texttt{Analog\;Update}}

\theoremstyle{plain}
\newtheorem{theorem}{Theorem}[section]

\newtheorem{lemma}[theorem]{Lemma}
\newtheorem{corollary}[theorem]{Corollary}
\theoremstyle{definition}
\newtheorem{definition}[theorem]{Definition}
\newtheorem{assumption}[theorem]{Assumption}
\theoremstyle{remark}
\newtheorem{remark}[theorem]{Remark}

\tcolorboxenvironment{definition}{
  enhanced,
  breakable,
  colback=gray!10,
  colframe=gray!55!black,
  boxrule=0.6pt,
  arc=2mm,
  left=2mm,right=2mm,top=1mm,bottom=1mm,
}

\newcommand\numberthis{\addtocounter{equation}{1}\tag{\theequation}}

\makeatletter
\def\addcontentsline#1#2#3{%
  \addtocontents{#1}{\protect\contentsline{#2}{#3}{\thepage}{}}%
}
\makeatother 

\icmltitlerunning{Dynamic Symmetric Point Tracking:
Tackling Non-ideal Reference in Analog In-memory Training}

\begin{document}

\doparttoc % 
\faketableofcontents % 

\twocolumn[
\icmltitle{Dynamic Symmetric Point Tracking:\\
Tackling Non-ideal Reference in Analog In-memory Training}

% It is OKAY to include author information, even for blind
% submissions: the style file will automatically remove it for you
% unless you've provided the [accepted] option to the icml2025
% package.

% List of affiliations: The first argument should be a (short)
% identifier you will use later to specify author affiliations
% Academic affiliations should list Department, University, City, Region, Country
% Industry affiliations should list Company, City, Region, Country

% You can specify symbols, otherwise they are numbered in order.
% Ideally, you should not use this facility. Affiliations will be numbered
% in order of appearance and this is the preferred way.
\icmlsetsymbol{equal}{*}

\begin{icmlauthorlist}
\icmlauthor{Quan Xiao}{equal,Cornell}
\icmlauthor{Jindan Li}{equal,Cornell}
\icmlauthor{Zhaoxian Wu}{Cornell}
\icmlauthor{Tayfun Gokmen}{IBM}
\icmlauthor{Tianyi Chen}{Cornell,rpi}
% \icmlauthor{Firstname6 Lastname6}{sch,yyy,comp}
% \icmlauthor{Firstname7 Lastname7}{comp}
% %\icmlauthor{}{sch}
% \icmlauthor{Firstname8 Lastname8}{sch}
% \icmlauthor{Firstname8 Lastname8}{yyy,comp}
%\icmlauthor{}{sch}
%\icmlauthor{}{sch}
\end{icmlauthorlist}

\icmlaffiliation{Cornell}{Department of Electrical and Computer Engineering, Cornell University, New York, NY}
\icmlaffiliation{IBM}{IBM T. J. Watson Research Center, Yorktown Heights, NY}
\icmlaffiliation{rpi}{Rensselaer Polytechnic Institute, Troy, NY}

\icmlcorrespondingauthor{Tianyi Chen}{chentianyi19@gmail.com}
% \icmlcorrespondingauthor{Firstname2 Lastname2}{first2.last2@www.uk}

% You may provide any keywords that you
% find helpful for describing your paper; these are used to populate
% the "keywords" metadata in the PDF but will not be shown in the document
\icmlkeywords{Analog AI; in-memory computing; stochastic gradient descent; stochastic optimization}

\vskip 0.3in
]

% this must go after the closing bracket ] following \twocolumn[ ...

% This command actually creates the footnote in the first column
% listing the affiliations and the copyright notice.
% The command takes one argument, which is text to display at the start of the footnote.
% The \icmlEqualContribution command is standard text for equal contribution.
% Remove it (just {}) if you do not need this facility.

% \printAffiliationsAndNotice{}  % leave blank if no need to mention equal contribution
\printAffiliationsAndNotice{\icmlEqualContribution} % otherwise use the standard text. 

\begin{abstract}
Analog in-memory computing (AIMC) performs computation directly within resistive crossbar arrays, offering an energy-efficient platform to scale large vision and language models. However, non-ideal analog device properties make the training on AIMC devices challenging. In particular, its update asymmetry can induce a systematic drift of weight updates towards a device-specific symmetric point (SP), which typically does not align with the optimum of the training objective. To mitigate this bias, most existing works assume the SP is known and pre-calibrate it to zero before training by setting the reference point as the SP. Nevertheless, calibrating AIMC devices requires costly pulse updates, and residual calibration error can directly degrade training performance. In this work, we present the first theoretical characterization of the pulse complexity of SP calibration and the resulting estimation error. We further propose a dynamic SP estimation method that tracks the SP during model training, and establishes its convergence guarantees. In addition, we develop an enhanced variant based on chopping and filtering techniques from digital signal processing. Numerical experiments demonstrate both the efficiency and effectiveness of the proposed method. 
Our code will be released at \url{https://github.com/Jindanli898/E-RIDER}. 
\end{abstract}

\vspace{-2.0em}
\section{Introduction} 

Recent breakthroughs in large vision and language models have been driven by the rapid maturation of modern hardware accelerators, including GPU, TPU \citep{jouppi2023tpu}, NPU \citep{esmaeilzadeh2012neural}, and emerging AI-specific chips such as NorthPole \citep{modha2023neural}. Despite these advances, the energy costs of training and deploying such models still remain prohibitively high \citep{touvron2023llama,brown2020language}. 

% A key reason is that most hardware accelerators still follow the von Neumann paradigm, where physically separated memory and compute units require frequent data movement during training, thereby is energy-costly as model and dataset scales continue to grow. 

To mitigate the energy bottleneck, analog in-memory computing (AIMC) has emerged as a promising platform that leverages resistive crossbar arrays to enable in-situ analog matrix–vector multiplications (MVMs). In AIMC, the weight matrix is stored as the conductance of resistive devices on a crossbar array, while MVM inputs and outputs are encoded as analog voltages and currents. By leveraging Kirchhoff’s and Ohm’s laws, AIMC devices perform MVMs directly without data movement, achieving 10$\times$-10,000$\times$ lower inference energy than GPUs \citep{jain2019neural,cosemans2019towards,papistas202122}.  

% Unlike digital hardware, AIMC updates weights via stochastic voltage pulses: peripheral diamonduits send electrical pulses that change each resistive element’s conductance (i.e., the weight) depending on pulse polarity. 
% However, imperfections in the AIMC device can bias model updates, thereby hindering training accuracy. 
Consider a model training problem with the objective $f(\cdotc):\reals^D\to\reals$ and its model parameters $W\in\reals^D$ by
\begin{align}
    \label{problem}
    W^* := \argmin_{W\in\reals^{D}}~ \big\{f(W) := \mbE_{\xi}[f(W; \xi)]\big\}
\end{align}
where $\xi$ is a random data sample. Different from digital training, on AIMC hardware, the weights are updated by the
so-called \emph{pulse update}. When receiving electrical pulses, a resistive element updates its conductance based on its pulse polarity \citep{gokmen2016acceleration}. 
At each pulse cycle, the weight changes by either $\Delta w_{\min}\cdot q_+(w)$ or $-\Delta w_{\min}\cdot q_-(w)$, where $\Delta w_{\min}>0$ is the known \emph{response granularity}, and $q_+(w)$ and $q_-(w)$ are fixed but unknown \emph{response functions}, that describe how current weight $w$ responds to a small weight change $\Delta w_{\min}$. The response granularity $\Delta w_{\min}$ is one source of training errors, since it quantizes the desired gradients into finite conductance changes. Another key challenge in analog training is \emph{update asymmetry}: at a fixed weight $w$, the conductance change induced by a positive pulse, $q_+(w)$, generally differs from that induced by a negative pulse, $q_-(w)$. This asymmetry stems from the underlying asymmetric physical mechanisms of analog devices, such as unequal conductive filament formation and rupture in HfO$_2$-based ReRAM devices \citep{Jang2014}, or nonreciprocal crystallization and amorphization in PCM devices \citep{Burr2016}.
Although asymmetric devices offer practical advantages, such as faster response and lower pulse voltage \citep{stecconi2024analog}, making them attractive for deployment, their asymmetric update behavior can induce systematic weight drift during analog training.

By decomposing $q_+(w)$ and $q_-(w)$ into a \textit{symmetric component} and an \textit{asymmetric component} that capture the device-specific drift (will explain in \eqref{definition:F-G}), we model the analog update as a scaled desired update plus an asymmetric drift term. 
Mathematically, each resistive element admits a device-specific \textit{symmetric component} $F(\,\cdot\,)$, an \textit{asymmetric component} $G(\,\cdot\,)$. To apply an increment $\Delta W_k=\alpha\nabla f(W_k;\xi_k)$ with stepsize $\alpha$, the analog device performs the following \AnalogUpdate~ \cite{wu2025analog,li2025memory}
\begin{tcolorbox}[emphblock, width=1\linewidth]
    \small
    \vspace{-1.3\baselineskip}
    \begin{align}
        \label{biased-update}
        % \hspace{-3em}
        W_{k+1} &= W_k + \Delta W_k \odot F(W_k)-|\Delta W_k| \odot G(W_k)+b_k
    \end{align}
    \vspace{-1.6\baselineskip}
    %\reqnomode
\end{tcolorbox}
where $|\cdot|$ and $\odot$ denote the coordinate-wise absolute value and multiplication, and $b_k$ quantifies the stochastic discretization error of sending a finite number of pulses of length $\Delta w_{\min}$ to change each $d$-th element of $W_k$. Unlike on digital devices, where $G(\cdot)\equiv 0$, $G(\cdot)\not\equiv 0$ on analog devices induces the \emph{update asymmetry}.  Specifically, even without the discretization error $b_k$, the training optimum $W^*$ is generally not a stationary point of the \AnalogUpdate~ because 
\begin{align*}
\mathbb{E}[W_{k+1}|W_k=W^*]&=W^*-\alpha\mathbb{E}_\xi[|\nabla f(W^*;\xi)|]\odot G(W^*)
\end{align*} 
does not equal $W^*$ whenever the device-asymmetry term is nonzero (i.e., $G(W^*)\neq 0$) as $\mathbb{E}_\xi[|\nabla f(W^*;\xi)|]>0$. This motivates us to study the symmetric points where we do not have device-asymmetry as defined below. 

\begin{definition}[\textbf{Symmetric point}]
% For an AIMC device with the device-specific response functions $F(\cdot)$ and $G(\cdot)$ in \eqref{biased-update}, we define its \emph{symmetric point} (SP) as any point $W^\diamond$ satisfying 
A point is termed \emph{symmetric point} (SP) if the following equation holds
\begin{align}
G(W^\diamond)=0.
\end{align}
\end{definition} 

With this, SGD with {\AnalogUpdate} can be interpreted as implicitly minimizing a regularized objective of the form 
\begin{align}\label{eq:regularized_SP}
\min_W ~~f(W) + {\Theta}(\sigma^2\|W-W^\diamond\|^2)
\end{align}
where $\sigma^2$ denotes the variance of the stochastic gradient noise \citep{wu2025analog}. Consequently, obtaining an accurate estimate of $W^\diamond$ is a key prerequisite for analog training algorithm design, as it enables effective compensation of the drift induced by \emph{update asymmetry}. 

In existing algorithm design and convergence theory of analog training \citep{wu2024towards,wu2025analog}, $W^\diamond$ is assumed to be $0$ for simplicity. However, in practice, the SP is device-specific, so it requires per-device estimation and calibration to zero before training \citep{gokmen2021}. To estimate the SP, a typical zero-shifting (ZS) algorithm has been proposed by \citet{kim2019zero}, which sends alternative up-down pulses to push the device to its SP. However, as shown in Figure \ref{fig:tradeoff_accuracy_pulse_cost}, its pulse count significantly impacts the precision of SP estimation, and higher-precision devices (smaller $\Delta w_{\min}$) require more pulses to achieve a target accuracy. These observations lead to a critical question regarding the trade-off between pulse overhead and accuracy for SP estimation: 
\begin{center}
    \textbf{Q1)} {\em How to quantify the pulse efficiency of the SP estimation algorithm in terms of the number of pulses? }
\end{center} 
After obtaining the estimate of SP and calibrating it to zero (through a reference device), we can leverage existing zero-SP-based analog training algorithms for the model training \citep{wu2025analog,wu2024towards,gokmen2021}. However, this renders the subsequent training phase vulnerable to estimation error propagated from the SP estimation stage when we do not have enough pulse budget; see Figure \ref{fig:training_loss_vs_pulses}. 
Intuitively, the training process can also provide valuable feedback for SP tracking: if $W^\diamond$ is inaccurate, the resulting compensation deviates from the true drift term in \eqref{eq:regularized_SP}, which in turn degrades training. Considering the benefits of integrating SP tracking with training, another natural question is  

\begin{center}
    \textbf{Q2)} {\em Can we develop a dynamic SP tracking algorithm during training to reduce the overall pulse complexity? }
\end{center} 

To address the above two questions, we develop a dynamic SP-tracking algorithm based on multi-sequence update and the filtering theory from digital signal processing. 

\begin{figure}[t]
    \centering 
    \hspace{-0.05\textwidth}
    \begin{subfigure}[t]{0.22\textwidth}
        \centering
        \includegraphics[width=\textwidth]{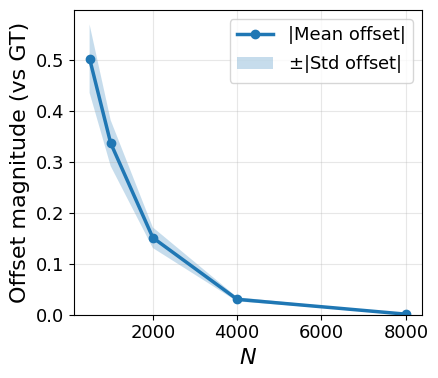}
        \vspace{-0.5cm}
        \caption{Offset vs. pulse budget.}
        \label{fig:sp_mean_offset}
    \end{subfigure}\hspace{0.002\textwidth}
    \begin{subfigure}[t]{0.20\textwidth}
        \centering
        \includegraphics[width=1.25\textwidth,height=0.95\textwidth]{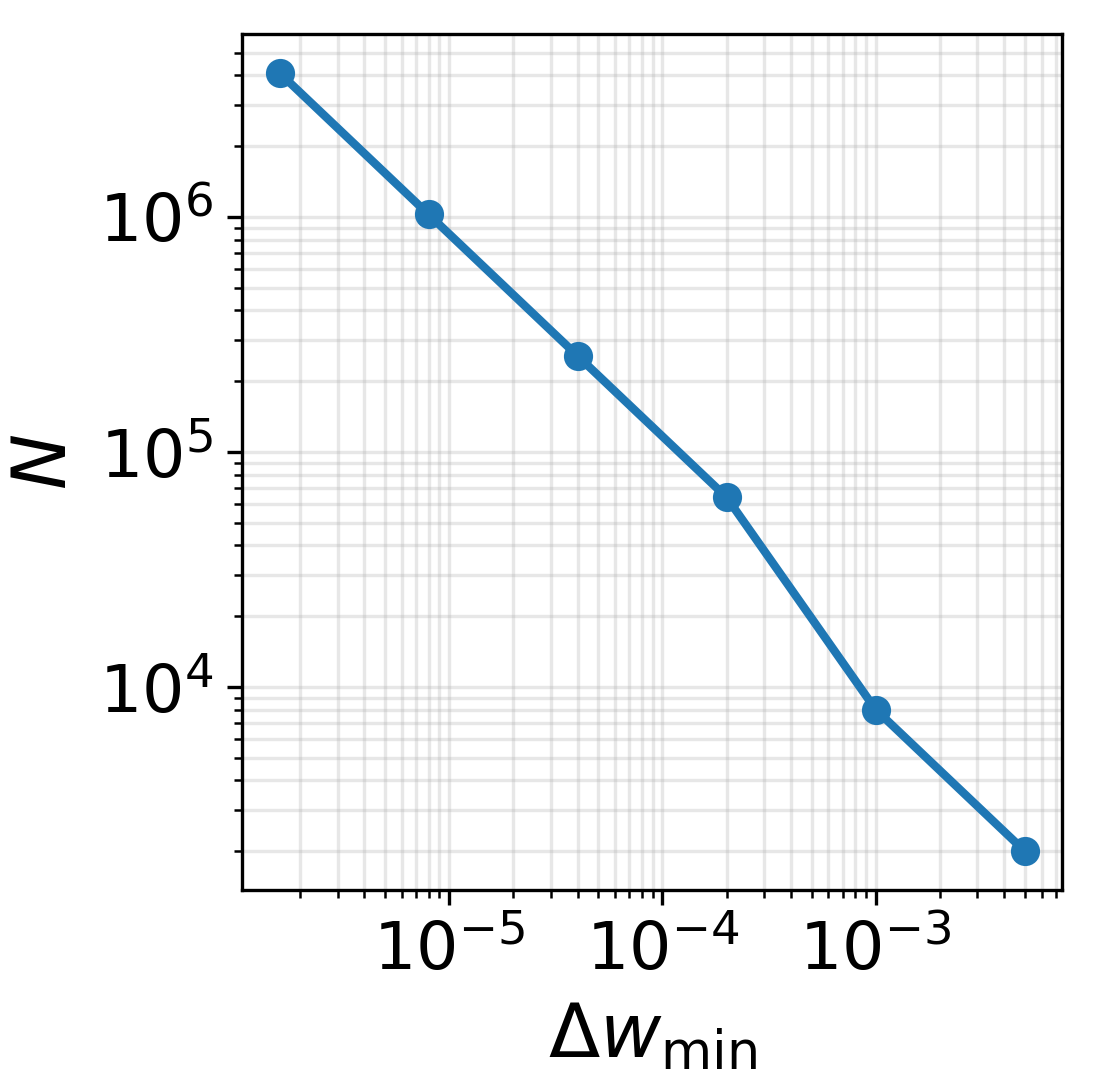}
        \vspace{-0.5cm}
        \caption{Pulse cost v.s. $\Delta w_{\min}$}
        \label{fig:dwmin_vs_pulses} 
    \end{subfigure}

    \caption{{Trade-off between SP estimation accuracy and pulse cost for ZS algorithm.}
   (a) For each $N$, we obtain per-cell SP estimates on a $512\times512$ array,
and compute the mean and standard deviation across all cells.
We plot the offsets of these statistics relative to the ground truth.
    (b) As $\Delta w_{\min}$ decreases, achieving a target accuracy (e.g., $\le 1\%$ relative mean error) needs substantially more pulses. }
    \label{fig:tradeoff_accuracy_pulse_cost}
    \vspace{-0.3cm}
\end{figure}

\subsection{Our contributions} We summarize the contributions of this paper as follows: 

\begin{enumerate}[topsep=0.em]
    \itemsep-0.1em 
    \item [\bf C1)] 
 We provide the first theoretical complexity analysis of the standard ZS algorithm for SP estimation \citep{kim2019zero}. Crucially, we prove that the pulse count required for accurate estimation scales with the inverse device update granularity $\mathcal{O}(\Delta w_{\min}^{-1})$. This reveals a fundamental \textbf{device dilemma}: as hardware becomes more precise, static SP estimation becomes more expensive, motivating the need for dynamic tracking. 
 % For the proposed dynamic, we provide the first convergence analysis of the zero-shifting algorithm, characterizing how many pulses are needed to achieve a desired SP estimation accuracy. 

    \item [\bf C2)] 
    We propose a novel ResIdual learning with Dynamic symmEtric point tRacking (RIDER) algorithm that jointly performs SP tracking and model training on the analog devices. 
    % The RIDER algorithm consists of three sequences: a residual sequence $P_k$ that tracks the SP while improving the objective; a digital buffer sequence $Q_k$ that amplifies the SP tracking; and a weight update sequence that applies the offset determined by $P_k$ and $Q_k$. 
    We proved that the proposed RIDER algorithm can estimate the SP during the model training with reduced pulse complexity, and achieves the same rate $\mathcal{O}(1/\sqrt{K})$ as standard SGD. 

    \item [\bf C3)] Building on C2), we propose an Enhanced variant of RIDER, termed E-RIDER, that leverages the frequency theory to accelerate the SP estimation and a periodic synchronization mechanism to reduce weight programming costs, making the method feasible for energy-constrained edge implementations. 
    % \ZW{Avoid using the term DAC here since DAC is usually a part of circuits that modulate the digital value into electrical signals. It is suggested to use other expressions like "weight programming" to avoid confusion.} 
    % We also introduce a periodic correction mechanism for the $Q_k$ sequence stored at the analog device to reduce the frequent digital-to-analog conversion (DAC) cost when transferring $Q_k$ from the digital buffer to the analog device to calculate the SP offset. 

    \item [\bf C4)] Numerical experiments validate the effectiveness and efficiency of the E-RIDER for simultaneous SP tracking and model training. With a more accurate SP estimation, the proposed method is able to improve the analog training accuracy in neural network training, and outperforms other analog training algorithms. 
\end{enumerate}

% \ZW{We should better position original paper \cite{rasch2024fast} in the introduction.}

\begin{figure}
    \centering
    \includegraphics[width=0.85\linewidth]{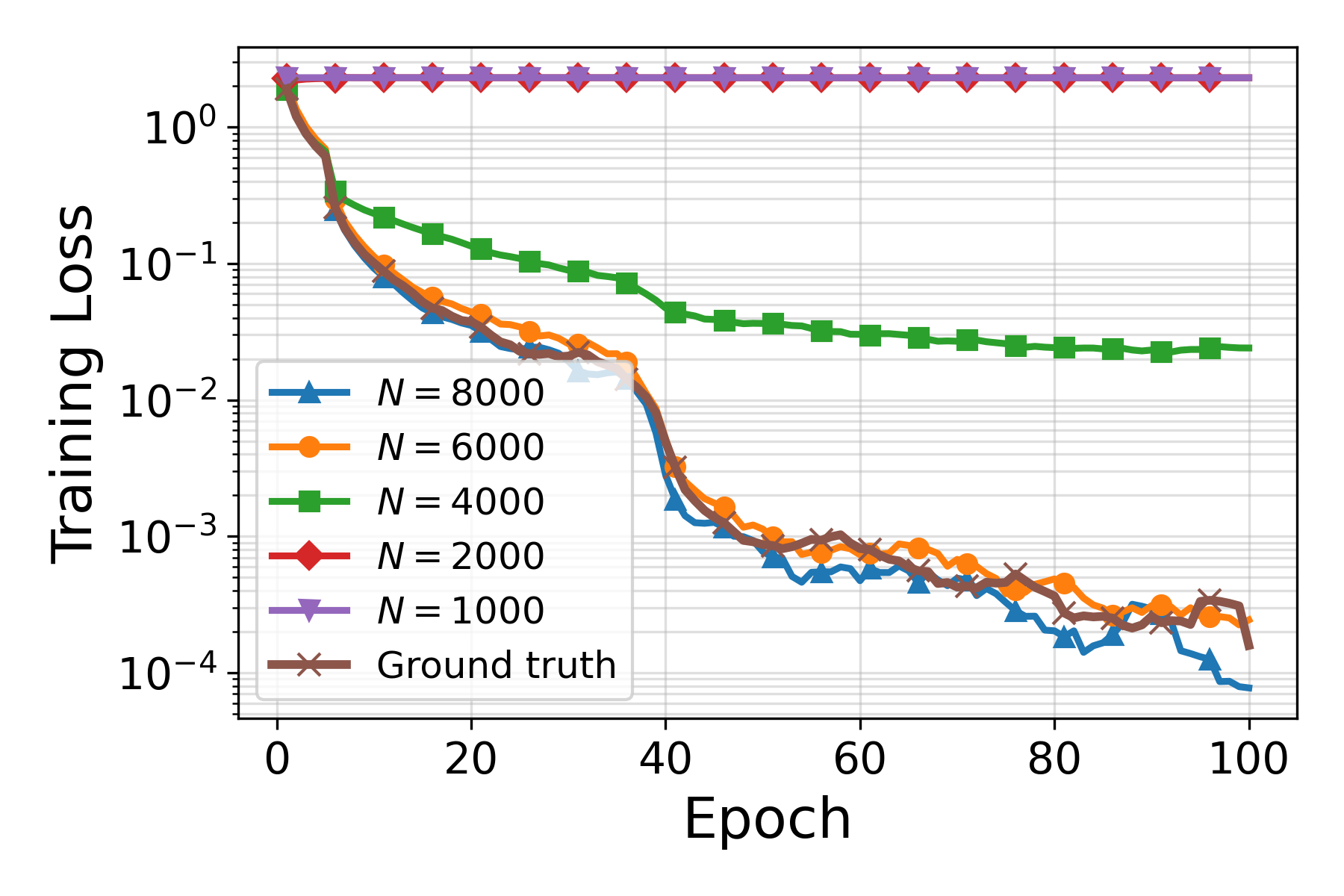}
    \caption{Training loss on MNIST (LeNet-5, TT-v1 \cite{gokmen2020}) using ground-truth SP and SPs estimated with different numbers of pulses $N$ via zero-shifting Algorithm \ref{alg: ZS}. }
    \label{fig:training_loss_vs_pulses}
    \vspace{-0.6cm}
\end{figure}

\subsection{Related works}
AIMC accelerates deep learning by executing compute-intensive MVM operations directly within resistive crossbar arrays \cite{yao2017face, wang2019situ}. However, hardware imperfection introduces errors into the training. A series of works proposes methods to offload parts of error-sensitive operations on digital circuits to mitigate the issues \cite{nandakumar2020, wan2022compute, yao2020fully}. While effective, this digital overhead compromises the intrinsic energy and latency benefits of the architecture. Consequently, this paper focuses on the more challenging regime of fully on-chip training, which maximizes operations in the analog domain to preserve efficiency. 

The primary challenges of on-chip training arise from inherent hardware imperfections, such as asymmetric update \cite{burr2015experimental, chen2015mitigating}, reading/writing noise \cite{agarwal2016resistive, zhang2022statistical, deaville2021maximally}, device/cycle variations \cite{lee2019exploring}.
% , non-linear current response due to IR drop \cite{agarwal2016resistive, chen2013comprehensive, zhang2017memory}.
% Among them, this paper mainly focuses on the asymmetric update. 
Researchers have developed various architectural and algorithmic solutions to mitigate these imperfections. For example, \cite{gokmen2020, gokmen2021, wang2020ssm, huang2020overcoming} introduce an auxiliary analog array to suppress update noise; \citet{li2025memory} addresses the constraints of limited update granularity through a multi-tile orchestration method. Complementing these empirical methods, recent works have established rigorous theoretical frameworks to model these imperfections, proposing \textit{residual learning} as a principled mechanism to recover training performance \cite{wu2024towards, wu2025analog}. However, their theory requires the SP to be exactly zero, and empirically they are not robust to nonzero SP. 
% Complementing these empirical approaches, \cite{wu2024towards, wu2025analog} establish a rigorous theoretical framework to characterize the influence of hardware impairments and propose a \textit{residual learning} mechanism to improve the training performance. Building upon these theoretical insights, this paper investigates the impact of imperfect SP estimation.
Most closely related to our work, \citet{rasch2024fast} proposed a dynamic SP tracking method AGAD with strong empirical performance. However, their methodology lacks theoretical guarantees even without chopping. As will be demonstrated in this paper, omitting a residual learning mechanism results in inferior performance compared to the E-RIDER in this work. 

% \noindent\textbf{Notations. } Let $\mathbb{R},\mathbb{R}_{+}$ be the sets of real and positive real numbers. Let $w\in\mathbb{R}$ and $q_+, q_-:\mathbb{R}\rightarrow\mathbb{R}$ be the element-wise functions of $Q_+$ and $Q_-$ that maps $w$ to its response element which is in $\mathbb{R}$. 

% \noindent\textbf{Technical challenges. } Before proceeding, we want to highlight the technical challenges of 

% \begin{figure}
%     \centering
%     \includegraphics[width=0.95\linewidth]{figs/SP_compare_multi_pulses_shifted.png}
%     \caption{Symmetry-point (SP) distributions on a $512\times512$ device array. 
% The brown shaded curve shows the analytical ground-truth SP distribution obtained
% from the closed-form device model, while the colored curves correspond to SPs
% estimated by applying alternating up and down pulses with different numbers of
% pulses per device. As the pulse budget increases, the empirical distributions
% converge towards the ground truth; using fewer pulses leads to larger SP bias
% and variance but lower programming overhead, illustrating a trade-off between
% accuracy and pulse cost. \QX{[plot the offset instead (i.e. estimated SP-mean of the real SP), plot of how $\Delta w_{\min}$ impacts the required number of pulses. ]}} 

%     \label{fig:SP_compare}
% \end{figure}
% Preamble:
% \usepackage{graphicx}
% \usepackage{subcaption}
% \usepackage{caption}

\begin{algorithm}[tb]
\caption{ZS algorithm \citep{kim2019zero}}
\label{alg: ZS}
\begin{algorithmic}[1]
\STATE \textbf{Inputs:} initialization $W_0$;   response granularity $\Delta w_{\min}$ 
    \FOR{$n = 0, 1, \ldots, N-1$}
        \STATE draw $\epsilon_n\in \mathbb{R}^D$ with $\epsilon_n^{[d]} \sim  \mathcal{U}(\{-\Delta w_{\min},\Delta w_{\min}\})$ 
        \STATE update $W_{n+1}$ via analog pulse update, i.e. \eqref{zero-shifting-stochastic} 
    \ENDFOR
\STATE \textbf{outputs:} $W_N$
\end{algorithmic}
\end{algorithm}

\vspace{-0.2cm}
\section{Pulse Complexity of SP Estimation} 
In this section, we model the ZS algorithm as a discrete-time dynamic using pulse updates, and analyze its convergence as the number of pulses increases.  

\subsection{Mathematical Model of Zero-shifting Algorithm}

We will first introduce the mathematical formulation of \AnalogUpdate~in \eqref{biased-update}, and then abstract the update rule for the ZS algorithm \citep{kim2019zero}. 

For each AIMC device, we stack the response functions $q_-(w)$ and $q_+(w)$ into $Q_+(W)$ and $Q_-(W): \reals^D\rightarrow\reals^D$, and express the analog weight updates as 

\vspace{-0.6cm}
{\small
\begin{align}
    \label{analog-update}
    W_{k+1}^{[d]} = \begin{cases}
        W_k^{[d]} + \Delta W_k^{[d]} \cdot Q_{+}(W_k)^{[d]},~~~\Delta W_k^{[d]} \ge 0, \\
        W_k^{[d]} + \Delta W_k^{[d]} \cdot Q_{-}(W_k)^{[d]},~~~\Delta W_k^{[d]} < 0 
    \end{cases}
\end{align}} 

\vspace{-0.3cm}
where $\Delta W_k\in\mathbb{R}^D$ is the target update increment and $A^{[d]}$ denotes the $d$-th element of $A\in\mathbb{R}^D$. Following \citep{wu2025analog}, the symmetric component $F(\,\cdot\,)$ and the asymmetric component $G(\,\cdot\,)$ of this device are given by
\begin{subequations}
    \label{definition:F-G}
    \begin{align}
    &F(W) := (Q_{-}(W)+Q_{+}(W))/2,\\
    &
    G(W) := (Q_{-}(W)-Q_{+}(W))/2,
\end{align}
\end{subequations}
pluging which into \eqref{analog-update} leads to the \AnalogUpdate~ \eqref{biased-update}.  

% In zero-shifting algorithm \citep{kim2019zero}, the SP of an analog device can be determined numerically by applying alternating positive and negative update pulses until the weight no longer drifts. 
ZS algorithm \citep{kim2019zero} estimates the SP of an analog device by applying positive (up) and negative (down) update pulses alternatively until the weight no longer drifts, which we model as the following stochastic dynamic 

\begin{tcolorbox}[emphblock, width=1\linewidth]
    \vspace{-1.1\baselineskip}
    \begin{align}
        \label{zero-shifting-stochastic}
        % \hspace{-3em}
        W_{n+1} = W_n + \epsilon_n \odot F(W_n) -|\epsilon_n| \odot G(W_n) 
    \end{align}
    \vspace{-1.4\baselineskip}
    %\reqnomode 
\end{tcolorbox} 
where $\epsilon_n\in \mathbb{R}^D$ and $\epsilon_n^{[d]} \sim  \mathcal{U}(\{-\Delta w_{\min},\Delta w_{\min}\})$ is uniformly randomly generated as either up or down pulse, which is summarized in Algorithm \ref{alg: ZS}. Note that the original ZS algorithm proposed in \citep{kim2019zero} is a cyclic sampling version of Algorithm \ref{alg: ZS}; i.e. $\epsilon_{2n^\prime}^{[d]}=\Delta w_{\min}$ and $\epsilon_{2n^\prime+1}^{[d]}=-\Delta w_{\min}$. As the convergence analysis for the cyclic version is built upon the stochastic version \eqref{zero-shifting-stochastic} and the convergence rate order of them are the same, we only present the stochastic case in the main paper and defer the cyclic-case result to the Appendix \ref{section:proof-ZS-convergence-cy}--\ref{section:proof-ZS-linear-convergence-cy}.

\subsection{Convergence Analysis for ZS Algorithm} 

Following \citep{wu2025analog}, we focus on the analog devices with the following training-friendly response functions. 
\begin{definition}[Training-friendly response functions]
    \label{assumption:response-factor}
    Response functions $q_+(\,\cdot\,)$ and $q_-(\,\cdot\,)$ are said to be training-friendly if they satisfy
    % with dynamics range $[\tau^{\min}, \tau^{\max}]$ 
    \vspace{-0.7em}
    \begin{itemize}[leftmargin=2em, itemsep=0.8em]
        \setlength{\itemsep}{-0.01em} 
        % \item \textbf{(Positive-definiteness)} 
        % $q_+(w) > 0, \forall w < \tau^{\max}$ and $q_-(w) > 0, \forall w > \tau^{\min}$;
        \item \textbf{(Positive-definiteness)} there exist $q_{\min}>0$ and $q_{\max}>0$ such that
        % $q_+(w) \ge q_{\min}$ and $q_-(w) \ge q_{\min}, \forall w$;
        $q_{\min} \le q_+(w) \le q_{\max}$ and $q_{\min} \le q_-(w) \le q_{\max}, \forall w$; and,
        % $q_+(w) > 0, \forall w < \tau^{\max}$ and $q_+(w) < 0, \forall w > \tau^{\max}$; $q_-(w) > 0, \forall w > \tau^{\min}$ and $q_-(w) < 0, \forall w < \tau^{\min}$;
        \item \textbf{(Differentiable)} Both $q_+(\,\cdot\,)$ and $q_-(\,\cdot\,)$ are differentiable. 
        % \item \textbf{(Saturation)} $q_+(\tau^{\max})=q_-(\tau^{\min})=0$.
        % \textbf{(Monotonic)} $q_+(\,\cdot\,)$ and $q_-(\,\cdot\,)$ are monotonically decreasing and increasing, respectively;
        % \red{
        % \textbf{(Gradual Saturation)} There exists neighbors of $\tau^{\max}$ and $\tau^{\min}$ such that $q_+''(w)> 0$ and $q_-''(w)>0$ for all $w$ in the neighbors.
        % }
        % \red{\textbf{(Monotonic)} $q_+'(\,\cdot\,) < 0$ and $q_-'(\,\cdot\,) > 0$, i.e. $q_+(\,\cdot\,)$ and $q_-(\,\cdot\,)$ are strictly decreasing and increasing, respectively;}
    \end{itemize}
    % The following statements are valid. 
    % \textbf{(Survival)} For any $w$, $q_+(w)\ne 0$ or $q_-(w)\ne 0$.
    % \textbf{(Non}
\end{definition}
Training-friendly response functions capture a wide range of AIMC devices: the positive lower bound $q_{\min}$ excludes dead-update regions, the upper bound $q_{\max}$ rules out unbounded gains, and the differentiability is motivated by continuous conductance mechanisms (ion migration, phase boundary movement, electrochemical reactions) in {PCM}~\citep{Burr2016,2020legalloJPD}, {ReRAM}~\citep{Jang2014, Jang2015, stecconi2024analog}, {ECRAM}~\citep{tang2018ecram, onen2022nanosecond}. 

For this family of response functions, the convergence of ZS algorithm in Algorithm \ref{alg: ZS} can be characterized below.

\begin{restatable}[Convergence rate of Algorithm \ref{alg: ZS}]{theorem}{Convergencezeroshifting}
    \label{theorem:ZS-convergence}
    Considering the response functions in Definition \ref{assumption:response-factor}, then the iterates given by Algorithm \ref{alg: ZS} with $N$ pulses satisfy 
    \begin{align*}
        &\frac{1}{N}\sum_{n=0}^{N-1} \mathbb{E}\left[\|G(W_{n})\|^2\right]\le {\cal O}\left(\frac{1}{N \Delta w_{\min}}\right)+{\Theta}(\Delta w_{\min}). 
    \end{align*}
\end{restatable}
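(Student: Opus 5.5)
The plan is a potential-function (Lyapunov) argument in which a scalar antiderivative of the drift plays the role of the objective in an SGD descent analysis. The dynamic is coordinate-wise separable, so I work with a single coordinate $w$ and sum over $d$ at the end. Per coordinate the ZS update reads $w_{n+1}=w_n+\epsilon_n F(w_n)-\Delta w_{\min}G(w_n)$, with $\mathbb{E}[\epsilon_n]=0$ and $|\epsilon_n|=\Delta w_{\min}$. This is SGD with stepsize $\Delta w_{\min}$ on a drift field $-G$, perturbed by zero-mean noise of magnitude $\Delta w_{\min}F$.

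Since $F\ge q_{\min}>0$, I take the potential
\[
H(w):=\int_{0}^{w}\frac{G(u)}{F(u)}\,du .
\]
This is differentiable, with $H'=G/F$ bounded by $(q_{\max}-q_{\min})/(2q_{\min})$. The first concrete step is a one-step expansion
\[
H(w_{n+1})\le H(w_n)+H'(w_n)(w_{n+1}-w_n)+\tfrac{L}{2}(w_{n+1}-w_n)^2 .
\]
Taking conditional expectation kills the $\epsilon_n F\cdot G/F$ cross term. It leaves $-\Delta w_{\min}G(w_n)^2/F(w_n)\le-\Delta w_{\min}G(w_n)^2/q_{\max}$. The quadratic term is bounded by $\tfrac{L}{2}\Delta w_{\min}^2(F^2+G^2)\le L\Delta w_{\min}^2q_{\max}^2$.

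Telescoping over $n=0,\dots,N-1$ and dividing by $N\Delta w_{\min}/q_{\max}$ then gives
\[
\frac1N\sum_{n}\mathbb{E}[G(w_n)^2]\le \frac{q_{\max}\big(H(w_0)-\inf H\big)}{N\Delta w_{\min}}+\mathcal{O}(\Delta w_{\min}).
\]
This is exactly the claimed form after summing over the $D$ coordinates. Note that $F$ is used twice: its lower bound $q_{\min}$ makes $H$ well defined, and its upper bound $q_{\max}$ controls the descent term.

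\textbf{Main obstacle: the potential's lower bound and smoothness.} The argument needs two properties of $H$.
\begin{itemize}
\item \emph{Lower bound.} $H$ must be bounded below, or at least $H(w_0)-\mathbb{E}H(w_N)$ must be $\mathcal{O}(1)$. Since $G$ need not change sign, $H$ could decrease linearly. I would handle this with the standard assumption implicit in the device model of \citep{wu2025analog}: a symmetric point exists and $G$ is increasing, with $G/F$ acting as a restoring force. This makes $H$ coercive, so $\inf H>-\infty$. If that assumption is unavailable, I would bound $\mathbb{E}H(w_N)$ using the fact that the iterates stay in a bounded dynamic range.
\item \emph{Smoothness.} $H$ must be $L$-smooth, i.e. $(G/F)'$ bounded. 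This needs bounded derivatives of $q_\pm$. Differentiability alone gives this only on the compact range actually visited, which is again where I would invoke boundedness of the iterates.
\end{itemize}

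\textbf{Tightness of the second term.} The $\Theta(\Delta w_{\min})$ term is not an artifact of the bound. The noise $\epsilon_n F$ has variance $\Delta w_{\min}^2F^2$, and balancing it against the drift $\Delta w_{\min}G$ at stationarity forces $\mathbb{E}G^2\gtrsim\Delta w_{\min}$. Near the SP, a linearized AR(1) computation shows this order is attained, which gives the lower bound.
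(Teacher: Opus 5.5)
Your argument is correct and is essentially the paper's proof: a potential whose derivative has the sign of $G$, the smoothness (descent) inequality, the zero-mean cross term $\langle G(W_n),\epsilon_n\odot F(W_n)\rangle$, and telescoping. The one difference is that the paper takes $\varphi(W)=\int_{W^\diamond}^{W}G$ with $\nabla\varphi=G$ directly. That choice already removes the cross term in expectation and gives descent $-\Delta w_{\min}\|G(W_n)\|^2$, so your $1/F$ reweighting is unnecessary and only worsens constants (an extra $q_{\max}$, plus a smoothness constant for $G/F$ that also involves $F'$ and $q_{\min}$). Your flagged caveats are exactly what the paper uses tacitly when it writes $\varphi(W_0)-\varphi(W^*)$ and a global $L_q$: the potential must be bounded below, which needs a restoring $G$ rather than just Definition~\ref{assumption:response-factor}, and the drift must have a bounded derivative. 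Your tightness remark goes beyond what the paper proves.
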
 
The proof of Theorem~\ref{theorem:ZS-convergence} is deferred to Appendix \ref{section:proof-ZS-convergence}. 
Theorem~\ref{theorem:ZS-convergence} shows that, for a device with response granularity $\Delta w_{\min}$, the minimal achievable SP estimation error is $\delta={\Theta}(\Delta w_{\min})$, meaning that higher-precision devices (smaller $\Delta w_{\min}$) can attain more accurate SP estimates. 

However, it also reveals a \textbf{device dilemma}: to achieve the same fixed estimation error $\delta\geq {\Theta}(\Delta w_{\min})$, the required number of pulses scales inversely with the response granularity $\Delta w_{\min}$, which is $N={\cal O}(\delta^{-1}\Delta w_{\min}^{-1})$, suggesting that high-precision devices with smaller $\Delta w_{\min}$ require more pulses for SP calibration in Algorithm \ref{alg: ZS}.
In practice, the response granularity $\Delta w_{\min}$ of AIMC device is made sufficiently small to ensure the gradient conversion precision \citep{rao2023thousands, sharma2024linear}, so that the computational complexity of Algorithm \ref{alg: ZS} is relatively high. 

\begin{remark}
With additional assumptions on device behavior, we can derive tighter bounds for $N$ w.r.t. target error $\delta$ for specific classes of response functions; we defer these results (Theorem~\ref{theorem:ZS-linear-convergence}) to Appendix~\ref{theorem:ZS-linear-convergence}. Nonetheless, the number of pulses required to achieve a target SP estimation error still remains inversely proportional to $\Delta w_{\min}$. 
\end{remark}

\noindent\textbf{Empirical evidence. } To illustrate the negative impact of $\Delta w_{\min}$ on the pulse complexity of the ZS algorithm, we test its pulse cost on a ReRAM array device presets from IBM AIHWKit simulator~\cite{rasch2021flexible}, which is essentially analog device with linear response functions; see detailed setup in Appendix \ref{sec:implementation_fig1_fig2}. 
Figure~\ref{fig:sp_mean_offset} shows the empirical estimated offset of the SP mean and standard deviation when $\Delta w_{\min}=0.001$, defined as the ground-truth statistic minus the estimated statistic. It can be seen that achieving relative $1\%$ error requires more than $2000$ number of pulses in ZS algorithm, which is computationally heavy. 
In Figure~\ref{fig:dwmin_vs_pulses}, we report the smallest $N$ such that the relative error of the estimated SP mean is within $1\%$ under different device response granularity $\Delta w_{\min}$. 
The result shows that higher-precision devices (smaller $\Delta w_{\min}$) require substantially more pulses to reach the same target error $\delta$. Moreover, the relationship of $\Delta w_{\min}$ and $N$ is nearly inversely linear, as predicted by 
our Theorem \ref{theorem:ZS-convergence}. 

To study the effect of SP estimation error on model training, we train a LeNet-5 on MNIST with TT-v1 using SPs estimated from Algorithm \ref{alg: ZS} with different numbers of pulses and report the results in Figure~\ref{fig:training_loss_vs_pulses}. It shows that a smaller $N$ in Algorithm \ref{alg: ZS} leads to significant training loss degradation and even failure to converge, which motivates the design of dynamic SP tracking algorithm.

\section{Dynamic SP Tracking Algorithm}

In this section, we aim to develop a dynamic SP tracking algorithm that leverages the inherent SP-attraction property of AIMC devices, i.e., repeated gradient pulse updates drive the device state towards its SP. The challenge of algorithm design is deferred to Appendix \ref{sec:failure_ZS}. 

\subsection{Basic Design}
We first design a basic dynamic SP-tracking algorithm based on Residual Learning in \citep{wu2025analog} that rectifies the update asymmetry in AIMC devices. 
Residual Learning introduces another AIMC device to learn the asymmetric residual and compensate for it through gradient updates. However, it \emph{assumes an exactly zero SP}, which in turn requires precise zero-shifting calibration prior to training. To enable dynamic SP tracking, we introduce an SP-tracking variable $Q_k\in\mathbb{R}^D$ that approaches the SP over iterations. For each iteration $k$, we aim to solve the following bilevel problem to correct the asymmetry bias 
% \TC{$P^*(W)$ or $P^*(W,Q_k)$}

\begin{tcolorbox}[emphblock]
\centering
\vspace{-0.8\baselineskip}
    \begin{align}
    &\min_{W\in\reals^D} ~\|P^*(W,Q_k)-Q_k\|^2, \nonumber\\
    &\st
    P^*(W,Q_k) \in \argmin_{P\in\reals^D} ~f(W + \gamma (P-Q_k)) \label{problem:DSPT}
\end{align}
\vspace{-1.0\baselineskip}
\end{tcolorbox} 
where $P-Q_k$ serves as a zero-shifting (residual) vector. The lower-level problem seeks the optimal compensation vector to correct device-induced errors, while the upper-level problem drives the system toward a zero-shifted state. In short, if we can design a SP-tracking sequence $Q_k$ such that $Q_k\rightarrow W^\diamond$, then the optimal solution for \eqref{problem:DSPT} is $P^*=Q^*=W^\diamond$ and $W^*=\argmin_W f(W)$. We make the following assumptions.

\begin{assumption}[$L$-smoothness]  \label{assumption:Lip}
    The objective $f(W)$ is $L$-smooth, that is 
    for any $W, W' \in \reals^{D}$, it follows
    % it holds
    \begin{align}
    \label{eq:L}
    % f(W')\le  F_w(W)+\la \nabla F_w(W), W'-W \ra   + \frac{L}{2}\|W'-W\|^2.
    \|\nabla f(W)-\nabla f(W')\| \le L\|W-W'\|.
    % \quad \forall W, W' \in \reals^{D}.
    \end{align}
    % The objective $F_w(W)$ is coordinate-wise $L$-smooth, i.e., 
    % \begin{align}
    % 	\label{eq:L}
    % 	% f(W')\le  F_w(W)+\la \nabla F_w(W), W'-W \ra   + \frac{L}{2}\|W'-W\|^2.
    % 	|[\nabla F_w(W)]_{ij}-[\nabla f(W')]_{ij}| \le L|[W]_{ij}-[W']_{ij}|
    % 	\quad \forall W, W' \in \reals^{D_1\times D_2}.
    % \end{align}
\end{assumption}

% \begin{assumption}[Lower bounded]
%   \label{assumption:low-bounded}
% %   The objective 
%   $F_w(W)$ is lower bounded by $f^*$, i.e. $F_w(W)\ge f^*, \forall W\in \reals^D$.
%   % , i.e., $F_w(W) \ge f^*$. 
%   % \TC{We can remove this assumption as it is implicitly assumed. This also removes some reading barriers for non-optimizers.}
%   % \red{[But for digital GD, lower bounded isn't implied by other assumptions. Is it better to leave it for rigorousness? Also, our proof needs the notation $f^*$. I guess it's the best place to define it].}
% \end{assumption}

\begin{assumption}[Unbiasness and bounded variance]
    \label{assumption:noise}
    The samples $\{\xi_k\}$ are i.i.d. sampled from a distribution. Moreover, the stochastic gradient is unbiased and has bounded variance, i.e., $\mbE_{\xi_k}[\nabla f(W_k;\xi_k)] = \nabla f(W_k)$ and $\mbE_{\xi_k}[\|\nabla f(W_k;\xi_k)-\nabla f(W_k)\|^2]\le\sigma^2$. 
    % \begin{align}
    % 	\label{eq:innerVariance}
    % 	\mbE_{\varepsilon}[\varepsilon] = 0, ~~~~\mbE_{\varepsilon}[\|\varepsilon\|^2]\le\sigma^2.
    % \end{align}
\end{assumption}

\begin{assumption}[$\mu$-SC condition]  \label{assumption:SC}
    The objective $f(W)$ is strongly convex (SC) with modulus $\mu$. 
\end{assumption}

\begin{assumption}\label{ass:discretization}
The stochastic discretization error $b_k$ in \eqref{biased-update} satisfies $\mathbb{E}[b_k]=0$ and 
$\operatorname{Var}[b_k]={\Theta}(\alpha \Delta w_{\min})$. 
\end{assumption}

Assumptions \ref{assumption:Lip}--\ref{assumption:SC} are standard in the stochastic optimization \citep{bottou2018optimization}, and are all used in the pilot theoretical analysis for analog training \citep{wu2025analog}. Assumption \ref{ass:discretization} is used and verified in \citep{li2025memory}. 

With these assumptions, the lower-level solution of problem \eqref{problem:DSPT} is unique, given by  
\begin{align}\label{eq:optimal-P-solution}
P^*(W,Q_k)=Q_k+(W^*-W)/\gamma.  
\end{align}
Basically, the optimal $P^*(W,Q_k)-Q_k$ tracks the difference of current $W_k$ towards the optimal solution $W^*$. Besides, the gradient of the bilevel problem \eqref{problem:DSPT} with respect to $W$ can be computed via chain rule 
\begin{align*}
\nabla_W\|P^*(W,Q_k)-Q_k\|^2&=-2(P^*(W,Q_k)-Q_k)/\gamma.  
\end{align*}
Let us define $\bar W_k=W_k+\gamma (P_k-Q_k)$. Inspired by alternating bilevel algorithms \citep{chen2021closing,ji2021bilevel,hong2020ac,ghadimi2018approximation,maclaurin2015gradient,franceschi2017forward,franceschi2018bilevel,pedregosa2016hyperparameter} and putting $P_k$ and $Q_k$ on different analog devices, the update rules using \AnalogUpdate~ in \eqref{biased-update} can be written as  
\begin{tcolorbox}[emphblock]
\centering
\vspace{-1\baselineskip}
\begin{subequations}\label{recursion:P-W}
\begin{align}
    P_{k+1} &= \ P_k - \alpha \nabla f(\bar W_k; \xi_k)\odot F_p(P_k) 
    \nonumber\\
    &~~~~~~~- \alpha|\nabla f(\bar W_k; \xi_k)|\odot G_p(P_k)+b_k\label{recursion:HD-P} \\
    W_{k+1} &=\ W_k + \beta (P_{k+1}-Q_k)\odot F_w(W_k) \nonumber\\
    &~~~~~~~- \beta|P_{k+1}-Q_k|\odot G_w(W_k)+b^\prime_k
    \label{recursion:HD-W}
\end{align}
\end{subequations}
\vspace{-1.5\baselineskip}
\end{tcolorbox}
\noindent
where both $b_k, b^\prime_k$ denote the discretization errors, and we denote the corresponding response functions by $F_p, G_p$ for the device of $P_k$, and by $F_w, G_w$ for the device of $W_k$.

\noindent\textbf{Design for $Q_k$ sequence. } We first note that the SP-drifting issue arises only in the presence of stochastic noise (see \eqref{eq:regularized_SP}), i.e., the device used for $P_k$, as the $W_k$ sequence is conditionally deterministic given $P_{k+1}$ and $Q_k$, so we only need to estimate the SP of the device for $P_k$. 
Our key observation is that the update of $P_k$ in \eqref{recursion:HD-P} combines both descent on $f(\bar W)$ via a stochastic-gradient increment, and an additional increment that drives $G_p(P_k)$ towards zero, scaled by a positive stepsize $|\nabla f(\bar W_k;\xi_k)|$. This suggests that the update of $P_k$ contains an inherent component that pulls it towards the SP. Therefore, we propose a moving averaging sequence $Q_k$ to magnify the SP-attracting property of $P_k$ sequence 
\begin{tcolorbox}[emphblock]
\centering
\vspace{-1\baselineskip}
\begin{align}
    Q_{k+1} &= (1-\eta) Q_k+\eta P_{k+1}\label{recursion:HD-Q}
\end{align}
\vspace{-1.5\baselineskip}
\end{tcolorbox}
where the $Q_k$ sequence is updated on the digital device so that there is no analog update bias. The complete ResIdual learning with Dynamic symmEtric point tRacking (RIDER) algorithm is summarized in Algorithm \ref{alg: DSPT}. 
The following lemma formalizes the key insight that moving averaging amplifies the SP-attraction property. 

\begin{lemma}\label{lemma:intuition_MA}
Let $W^\diamond$ be the SP used for $P_k$, i.e. $G_p(W^\diamond)=0$. 
For any iteration $k$, if $\cos(P_{k+1}-W^\diamond,P_{k+1}-Q_k)> 0$, then there exists $\eta\in (0,1)$ in \eqref{recursion:HD-Q} such that
\begin{align}\label{shrinking}
\|Q_{k+1}-W^\diamond\|^2< \|P_{k+1}-W^\diamond\|^2.
\end{align}
\end{lemma}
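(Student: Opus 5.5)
The plan is to write $Q_{k+1}-W^\diamond$ as a perturbation of $P_{k+1}-W^\diamond$ along the direction $P_{k+1}-Q_k$, and then expand the squared norm as a quadratic in $\eta$. The sign of the linear term will be fixed by the cosine hypothesis.

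From \eqref{recursion:HD-Q} we have $Q_{k+1}=P_{k+1}-(1-\eta)(P_{k+1}-Q_k)$. Set $a:=P_{k+1}-W^\diamond$, $v:=P_{k+1}-Q_k$, and $t:=1-\eta\in(0,1)$. Then $Q_{k+1}-W^\diamond=a-tv$, and expanding gives
\begin{align*}
\|Q_{k+1}-W^\diamond\|^2=\|a\|^2-2t\langle a,v\rangle+t^2\|v\|^2 .
\end{align*}
The hypothesis $\cos(a,v)>0$ implicitly requires $a\neq 0$ and $v\neq 0$, so that the cosine is defined. It then gives $\langle a,v\rangle>0$.

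The claim \eqref{shrinking} is therefore equivalent to finding $t\in(0,1)$ with $t\,(t\|v\|^2-2\langle a,v\rangle)<0$. Since $t>0$, this reduces to $t<2\langle a,v\rangle/\|v\|^2$. The right-hand side is strictly positive, so the admissible set is the interval $0<t<\min\{1,\,2\langle a,v\rangle/\|v\|^2\}$, which is nonempty. Any $t$ in it works; a concrete choice is $t=\min\{1/2,\ \langle a,v\rangle/\|v\|^2\}$. The corresponding step size is $\eta=1-t\in(0,1)$.

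It may also be worth noting the optimal choice $t^\star=\min\{1,\langle a,v\rangle/\|v\|^2\}$ (clipped, say, to $t^\star<1$). This gives a decrease of $\langle a,v\rangle^2/\|v\|^2=\|a\|^2\cos^2(a,v)$ when $t^\star<1$, which quantifies the shrinkage.

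No step here is a real obstacle. The only care needed is to check that the interval for $\eta$ lies strictly inside $(0,1)$, which means keeping $t$ strictly below $1$ and strictly above $0$, and to make explicit that the cosine hypothesis implicitly assumes $a,v\neq0$.
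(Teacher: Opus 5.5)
Your argument is correct and is essentially the paper's proof. Both arrive at $\|Q_{k+1}-W^\diamond\|^2=\|a\|^2-2(1-\eta)\langle a,v\rangle+(1-\eta)^2\|v\|^2$ and take any $\eta\in(0,1)$ with $\eta>1-2\langle a,v\rangle/\|v\|^2$, which is exactly the paper's threshold $1-2\|P_{k+1}-W^\diamond\|\cos\theta/\|P_{k+1}-Q_k\|$; the only difference is that you get the identity in one line from $Q_{k+1}=P_{k+1}-(1-\eta)(P_{k+1}-Q_k)$, whereas the paper expands the convex combination $(1-\eta)(Q_k-W^\diamond)+\eta(P_{k+1}-W^\diamond)$ and then substitutes the law of cosines for $\|Q_k-W^\diamond\|^2$.
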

The proof of Lemma \ref{lemma:intuition_MA} is provided in Appendix \ref{sec:proof_lemma_intuition}. Because the update of $P$ includes a gradient-descent term for the objective in addition to SP drift, so that both $W^\diamond$ and $Q_k$ tend to sit on the same side from $P_{k+1}$. This suggests that the angle condition is likely to hold so that $Q_k$ sequence remains closer to $W^\diamond$ than $P_k$ sequence. 

\begin{algorithm}[tb]
\caption{RIDER algorithm}
\label{alg: DSPT}
\begin{algorithmic}[1]
\STATE \textbf{Inputs:} initialization $P_0, Q_0, W_0$; residual parameter $\gamma$; response granularity $F_p(\cdot),F_w(\cdot),G_p(\cdot),G_w(\cdot)$
    \FOR{$k = 0, 1, \ldots, K-1$}
    \STATE evaluate $\bar W_k=W_k+\gamma (P_k-Q_k)$
    \STATE sample stochastic gradient $\nabla f(\bar W_k; \xi_k)$ 
    \STATE update $P_{k+1}$ on analog device via \eqref{recursion:HD-P} 
    \STATE update $Q_{k+1}$ on digital device via \eqref{recursion:HD-Q} 
    \STATE update $W_{k+1}$ on analog device via \eqref{recursion:HD-W} 
    \ENDFOR
\STATE \textbf{outputs:} $\left\{P_{K},Q_{K}, W_{K}\right\}$ 
\end{algorithmic}
\end{algorithm}

To analyze the convergence of Algorithm \ref{alg: DSPT} with respect to three sequences, we define the convergence metric as 
\begin{align}
    E_K =\frac{1}{K}\sum_{k=0}^{K-1}\mbE&\big[ 
         \big\|W_k-W^*\big\|^2+ {\cal O}\left(\|P_k-Q_k\|^2\right)\nonumber\\
         &~~~+  {\cal O}\left(\|G_p(P_k)\|^2\right) \big] 
\end{align}
where the three terms quantifies the convergence of $W_k$, $Q_k$ and $P_k$, respectively. 
For simplicity, the constants in front of some terms in $E_K$ are hidden. When $E_k\rightarrow 0$, it can be seen that $W_k\rightarrow W^*$ and $P_k,Q_k\rightarrow W^\diamond$. To prove the convergence, we need the following assumption. 

\begin{assumption}[Rayleigh-type lower bound]
\label{assump:rayleigh-lower}
There exists $C_\star>0$ such that for all $k$, $\mathbb{E}_{\xi_k}\big[ |\nabla f(\bar W_k;\xi_k)|_d \big]\geq C_*$.
% \[
% \operatorname{diag} \Big(\mathbb{E}_{\xi_k}\big[ |\nabla f(\bar W_k,\xi_k)| \big]\Big)\ \succeq\ C_\star  I 
% \]
% and assume that $C_{\star} \ge \frac{4\sqrt{2}\sigma}{\mu} \lp\frac{q_{\max}}{q_{\min}}\rp^{\frac{3}{2}}$. 
\end{assumption}
Assumption \ref{assump:rayleigh-lower} is mild since the operations in the analog domain intrinsically involve thermal and electrical noise.
% also implicitly required and has been justified in analog training literature \citep{wu2024towards}. 

\begin{restatable}[Convergence of Algorithm \ref{alg: DSPT}]{theorem}{ThmTTConvergenceScvx}
% \begin{theorem}[Convergence of ETA-SGD]
    % [Convergence of ETA-SGD in non-convex case]
    \label{theorem:TT-convergence-scvx}
    Suppose Assumptions 
    % \ref{assumption:Lip}-\ref{assumption:SC}
    \ref{assumption:Lip}-\ref{ass:discretization}, \ref{assump:rayleigh-lower} hold and the response functions $F_p, G_p, F_w, G_w$ satisfy Definition \ref{assumption:response-factor}. 
    If $C_{\star} \ge \frac{4\sqrt{2}\sigma}{\mu} \lp\frac{q_{\max}}{q_{\min}}\rp^{\frac{3}{2}}$, let
    $\alpha=\Theta\lp \frac{1}{\sqrt{K}}\rp$, 
    $\beta= \Theta(\alpha\gamma\mu)$, $\eta= \Theta(\alpha\mu)$, 
    $\gamma = \Theta(1)$, it holds that
    \begin{align}
        \label{inequality:TT-convergence-upperbound-faster}
        &E_K \le 
        \mathcal{O}\left(\frac{\kappa_1\kappa_2^5}{\sqrt{K}}\right)+{\Theta}(\Delta w_{\min}). 
    \end{align}
    where $\kappa_1 := L/\mu$ and $\kappa_2 := q_{\max}/q_{\min}$ are the condition number of the function and device. 
\end{restatable}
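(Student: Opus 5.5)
We will prove Theorem~\ref{theorem:TT-convergence-scvx} with a Lyapunov argument. Take the potential
\[
\Phi_k=\|W_k-W^*\|^2+c_1\|P_k-Q_k\|^2+c_2\,\Psi_p(P_k),
\]
where $\Psi_p$ is a device potential whose gradient is proportional to $G_p(P)\oslash F_p(P)$; this is the same potential used for the ZS analysis in Theorem~\ref{theorem:ZS-convergence} and in \citep{wu2025analog}. Positive-definiteness in Definition~\ref{assumption:response-factor} gives $\Psi_p(P)-\Psi_p(W^\diamond)\ge c\,\|G_p(P)\|^2/q_{\max}^{2}$, so descent of $\Psi_p$ controls $\|G_p(P_k)\|^2$. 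We will show a one-step inequality
\[
\mathbb{E}[\Phi_{k+1}]\le \Phi_k-c\,\alpha\mu\big(\|W_k-W^*\|^2+\|P_k-Q_k\|^2+\|G_p(P_k)\|^2\big)+\mathcal{O}(\alpha^2)+\Theta(\alpha\Delta w_{\min}),
\]
then telescope, divide by $c\alpha\mu K$, and set $\alpha=\Theta(1/\sqrt K)$. The $\Theta(\alpha\Delta w_{\min})$ term comes from $\operatorname{Var}[b_k]$ in Assumption~\ref{ass:discretization}, and it becomes the $\Theta(\Delta w_{\min})$ floor.

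\textbf{Step 1: the $P$-sequence.} Expand \eqref{recursion:HD-P} and split it into a descent part $-\alpha\nabla f(\bar W_k)\odot F_p(P_k)$ and a drift part $-\alpha\,\mathbb{E}|\nabla f(\bar W_k;\xi_k)|\odot G_p(P_k)$.
\begin{itemize}
\item The drift part, tested against $\nabla\Psi_p$, gives a decrease of order $-\alpha C_\star\|G_p(P_k)\|^2/q_{\max}$ by Assumption~\ref{assump:rayleigh-lower}.
\item The descent part produces a cross term $\alpha\langle\nabla f(\bar W_k),\cdot\rangle$. We bound it by Young's inequality, using $\|\nabla f(\bar W_k)\|\le L\|\bar W_k-W^*\|\le L(\|W_k-W^*\|+\gamma\|P_k-Q_k\|)$.
\item The variance $\sigma^2$ enters through $\mathbb{E}|\nabla f(\cdot;\xi)|-|\nabla f(\cdot)|\le\sigma$. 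The condition $C_\star\ge 4\sqrt2\sigma\kappa_2^{3/2}/\mu$ is what lets the SP-attraction dominate this noise-induced coupling.
\end{itemize}

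\textbf{Step 2: the $P_k-Q_k$ and $W$ sequences.}
\begin{itemize}
\item For $P_k-Q_k$, use \eqref{recursion:HD-Q}: $P_{k+1}-Q_{k+1}=(1-\eta)(P_{k+1}-Q_k)$. This gives a contraction factor $(1-\eta)^2$ with $\eta=\Theta(\alpha\mu)$, applied to $\|P_k-Q_k+(P_{k+1}-P_k)\|^2$; the increment is $\mathcal{O}(\alpha)$ and is absorbed.
\item For $W$, write \eqref{recursion:HD-W} as $W_k+\beta(P_{k+1}-Q_k)\odot F_w-\beta|\cdot|\odot G_w$. Since $\gamma(P_{k}-Q_k)$ approximates $P^*-Q_k=(W^*-W_k)/\gamma$ from \eqref{eq:optimal-P-solution}, the $W$-step is a preconditioned step toward $W^*$ with step size $\beta q_{\min}=\Theta(\alpha\gamma\mu q_{\min})$.
\item The $G_w$ term is deterministic given $P_{k+1},Q_k$ and of size $\beta q_{\max}\|P_{k+1}-Q_k\|$. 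It is bounded by the $P_k-Q_k$ potential, which carries a larger coefficient $c_1$ chosen so that it dominates.
\item Strong convexity (Assumption~\ref{assumption:SC}) converts $\langle\nabla f(\bar W_k),\bar W_k-W^*\rangle$ into $\mu\|\bar W_k-W^*\|^2$, which couples the $W$ and $P$ errors.
\end{itemize}

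\textbf{Step 3: choosing the constants and the main obstacle.} Pick $c_1,c_2$ as powers of $\kappa_2$ (and $\kappa_1$) so that every cross term is absorbed; the $\kappa_1\kappa_2^5$ factor reflects these choices. The main obstacle is the three-way coupling. $P_k$ simultaneously descends $f$ and drifts to the SP, while $Q_k$ lags $P_k$ on the slow time scale $\eta\sim\alpha\mu$. The cross term between the gradient component of the $P$-update and $\nabla\Psi_p$ has no sign.

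I would first handle it by treating $\bar W_k-W^*$ as the quantity driven to zero, so that $\nabla f(\bar W_k)$ is itself $\mathcal{O}(\|W_k-W^*\|+\|P_k-Q_k\|)$. Then the cross term is bounded by $\epsilon\|G_p\|^2+\epsilon^{-1}L^2(\cdots)$ and absorbed using the time-scale separation $\beta,\eta\ll\alpha$. If that fails, I would fall back on the residual noise term with the $C_\star$ lower bound, exactly as in the stated condition on $C_\star$.
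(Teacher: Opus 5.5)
\textbf{There is a genuine gap.} Your potential $\Phi_k$ lacks the quantity that drives the whole argument, and the one-step inequality you aim for is false for it.

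Take a state with $P_k=Q_k=W^\diamond$ and $\|W_k-W^*\|^2\gg\Delta w_{\min}/\mu$. Then $\bar W_k=W_k$, $\nabla\Psi_p(P_k)=0$, and $P_{k+1}-Q_k=-\alpha\nabla f(W_k;\xi_k)\odot F_p(P_k)+b_k$. Consequently:
\begin{itemize}
\item $\|P_{k+1}-Q_{k+1}\|^2=(1-\eta)^2\|P_{k+1}-Q_k\|^2\ge 0=\|P_k-Q_k\|^2$, so this term cannot decrease;
\item $\Psi_p$ changes only at second order;
\item $\mathbb{E}[W_{k+1}-W_k]$ has size $\beta\,\mathbb{E}\|P_{k+1}-Q_k\|$, i.e.\ $\mathcal{O}(\alpha\beta)$ apart from discretization noise.
\end{itemize}
Hence, up to additive $\mathcal{O}(\alpha^2)$ and $\mathcal{O}(\alpha\Delta w_{\min})$ terms, $\mathbb{E}[\Phi_{k+1}]-\Phi_k\ge-\mathcal{O}(\alpha\beta L)\|W_k-W^*\|^2$ with $\alpha\beta=\Theta(\alpha^2)$. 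This contradicts a decrease of $c\alpha\mu\|W_k-W^*\|^2$ for small $\alpha$.

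The cause is the two-time-scale structure. $W$ only makes progress after the fast variable $P$ has moved toward its moving target $P^*(W_k,Q_k)=Q_k+(W^*-W_k)/\gamma$. So the potential must contain a term measuring $P_k-P^*(W_k,Q_k)$ (equivalently $\bar W_k-W^*$) that decreases at rate $\alpha$, not $\alpha\mu$. The term $\|P_k-Q_k\|^2$ is not that: the $P$-step pushes $P_k-Q_k$ toward $(W^*-W_k)/\gamma$, not toward $0$. Moreover, expanding $(1-\eta)^2\|P_k-Q_k+(P_{k+1}-P_k)\|^2$ produces the unsigned first-order cross term $-2\alpha\langle P_k-Q_k,\nabla f(\bar W_k)\odot F_p(P_k)+\mathbb{E}|\nabla f(\bar W_k;\xi_k)|\odot G_p(P_k)\rangle$. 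A contraction of size $\eta=\Theta(\alpha\mu)$ cannot absorb it. For the same reason you have nothing to charge two error terms against: the tracking error $P_{k+1}-P^*(W_k,Q_k)$ in the $W$-step, and the $\|\nabla f(\bar W_k)\|^2$ cross term from the $\Psi_p$-descent.

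The paper instead uses the lower-level objective value: $V_k=f(\bar W_k)-f^*+C_1\|P^*(W_k,Q_k)-Q_k\|^2+C_2(\varphi(P_k)-\varphi^*)$ with $\nabla\varphi=G_p$. Simply swapping your $c_1\|P_k-Q_k\|^2$ for $\|\bar W_k-W^*\|^2$ would still fail. The $P$-step moves along $\nabla f(\bar W_k)\odot F_p(P_k)$, and $\langle\nabla f(\bar W_k)\odot F_p(P_k),\bar W_k-W^*\rangle$ is not sign-definite for non-uniform diagonal $F_p$ and non-separable $f$. So strong convexity does not give $\mu\|\bar W_k-W^*\|^2$ as your Step 2 asserts.

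The function value avoids this because $\langle\nabla f,\nabla f\odot F_p\rangle\ge q_{\min}\|\nabla f\|^2$. Combined with the asymmetric and noise terms, this yields $-\frac{\alpha\gamma}{8q_{\max}}\|\nabla f(\bar W_k)\|^2_{M_p(P_k)}$, where $M_p=F_p^2-G_p^2\ge q_{\min}^2$, plus an SP-bias term $\frac{\alpha\gamma\sigma^2}{2}\|G_p(P_k)/\sqrt{F_p(P_k)}\|_\infty^2$. The remaining pieces then fit as follows:
\begin{itemize}
\item The fast-rate gradient term pays for $\frac{C_2\alpha q_{\max}^2}{2C_\star}\|\nabla f(\bar W_k)\|^2$ from the $\varphi$-descent.
\item $C_2\cdot\frac{\alpha C_\star}{2}\|G_p\|^2$ pays for the SP-bias term. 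The lower bound on $C_\star$ is there to make the window $\frac{4\sigma^2}{C_\star q_{\min}}\le C_2\le\frac{\mu q_{\min}^2C_\star}{8q_{\max}^3}$ nonempty.
\item The $W$-step tracking error is converted by quadratic growth into $f(W_k+\gamma(P_{k+1}-Q_k))-f^*$ and bounded by the lower-level descent lemma.
\end{itemize}

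Three further corrections:
\begin{itemize}
\item The $G_w$ term is not a small perturbation; it is as large as the $F_w$ term. It is handled by splitting $P_{k+1}-Q_k=(P^*(W_k,Q_k)-Q_k)+(P_{k+1}-P^*(W_k,Q_k))$ and using $F_w^2-G_w^2=q_+q_-\ge q_{\min}^2$ coordinatewise on the first part.
\item $\|P_k-Q_k\|^2$ in $E_K$ is recovered only a posteriori, via $\|P_k-Q_k\|^2\le 2\gamma^{-2}(\|\bar W_k-W^*\|^2+\|W_k-W^*\|^2)$.
\item Your bound $\Psi_p(P)-\Psi_p(W^\diamond)\ge c\|G_p(P)\|^2/q_{\max}^2$ does not follow from positive-definiteness. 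It is also unnecessary: telescoping only needs $\Psi_p$ bounded below.
\end{itemize}
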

The proof of Theorem \ref{theorem:TT-convergence-scvx} is provided in Appendix \ref{section:proof-TT-convergence-scvx}. 
Theorem~\ref{theorem:TT-convergence-scvx} shows that, Algorithm~\ref{alg: DSPT} can simultaneously track the SP and perform model training, and the minimal achievable training error $\|W_k-W^*\|^2\leq \delta$ is $\delta={\Theta}(\Delta w_{\min})$. Moreover, the convergence rate of Algorithm~\ref{alg: DSPT} matches that of Residual Learning \citep{wu2025analog}, although we address a more challenging setting with nonzero and unknown SP. 

\begin{remark}
As a theoretical baseline, we consider Residual Learning \citep{wu2025analog} paired with SP estimation via the ZS algorithm, which we refer to as two-stage Residual Learning with ZS algorithm. This two-stage method first estimates a static SP $\hat W^\diamond$ with ZS and then fixes $Q_k\equiv \hat W^\diamond$ during training. We provide its pseudocode in Appendix~\ref{sec:RLZS}. 
\end{remark}

\begin{corollary}[Overall pulse complexity]\label{cor:overall_pulse}
To achieve training accuracy $\delta\geq {\Theta}(\Delta w_{\min})$, RIDER in Algorithm \ref{alg: DSPT} requires ${\cal O}(K)={\cal O}(\delta^{-2})$ number of pulses, but the two-stage Residual Learning with ZS algorithm requires ${\cal O}(K+N)={\cal O}(\delta^{-2}+\delta^{-1}\Delta w_{\min}^{-1})$ number of pulses. 
\end{corollary}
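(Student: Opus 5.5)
The plan is to count pulses per iteration and invert the rates from Theorem~\ref{theorem:TT-convergence-scvx} and Theorem~\ref{theorem:ZS-convergence}.

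\textbf{Pulse cost of RIDER.} Each iteration of Algorithm~\ref{alg: DSPT} makes one analog update of $P_k$ via \eqref{recursion:HD-P} and one of $W_k$ via \eqref{recursion:HD-W}; the $Q_k$ update \eqref{recursion:HD-Q} is digital and costs no pulses. I will treat each analog update as costing ${\cal O}(1)$ pulse cycles per element, in the same sense that one ZS step \eqref{zero-shifting-stochastic} is one pulse. This is consistent with Assumption~\ref{ass:discretization}, where the increment is discretized into a bounded number of $\Delta w_{\min}$-pulses. Under this convention, $K$ iterations cost ${\cal O}(K)$ pulses.

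\textbf{Iteration count for RIDER.} Theorem~\ref{theorem:TT-convergence-scvx} gives $E_K\le {\cal O}(\kappa_1\kappa_2^5/\sqrt K)+\Theta(\Delta w_{\min})$. Since $\|W_k-W^*\|^2$ is one of the summands of $E_K$, the averaged training error is at most $\delta$ once $\delta\ge\Theta(\Delta w_{\min})$ and $K={\cal O}(\delta^{-2})$, treating $\kappa_1,\kappa_2$ as constants. So RIDER needs ${\cal O}(\delta^{-2})$ pulses in total.

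\textbf{Two-stage baseline.} The ZS stage must produce $\hat W^\diamond$ with $\|G_p(\hat W^\diamond)\|^2\lesssim\delta$. Otherwise the residual SP error enters the training bound as an additive bias: fixing $Q_k\equiv\hat W^\diamond$ in the Residual Learning analysis leaves the term $\|G_p(\hat W^\diamond)\|^2$ that RIDER instead drives to zero. By Theorem~\ref{theorem:ZS-convergence}, the average of $\mathbb E\|G(W_n)\|^2$ is at most $\delta$ once $N={\cal O}(\delta^{-1}\Delta w_{\min}^{-1})$. Selecting a random iterate, or invoking the remark for the last iterate, turns this averaged guarantee into a bound on the output $\hat W^\diamond$. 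The ZS stage therefore costs $N={\cal O}(\delta^{-1}\Delta w_{\min}^{-1})$ pulses. The subsequent Residual Learning stage has the same rate as RIDER, ${\cal O}(1/\sqrt K)$ from \citep{wu2025analog}, so it needs another $K={\cal O}(\delta^{-2})$ pulses. The total is ${\cal O}(\delta^{-2}+\delta^{-1}\Delta w_{\min}^{-1})$.

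\textbf{Main obstacle.} The hardest step is justifying that the ZS error must be of order $\delta$, i.e.\ showing that the training error of the two-stage method really degrades additively with the SP estimation error. I would try to re-run the proof of Theorem~\ref{theorem:TT-convergence-scvx} with $\eta=0$ and $Q_k\equiv\hat W^\diamond$. The stationary point then shifts by an amount proportional to $G_p(\hat W^\diamond)$, giving an extra term $\Theta(\|\hat W^\diamond-W^\diamond\|^2)$, and this term must be at most $\delta$. A secondary point needing care is the per-iteration ${\cal O}(1)$ pulse convention, which makes the two counts comparable.
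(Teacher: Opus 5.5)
Your proposal is correct and follows the paper's route: the paper's only justification is that the corollary ``directly follows'' from Theorem~\ref{theorem:TT-convergence-scvx} and Theorem~\ref{theorem:ZS-convergence}, i.e.\ inverting the $\mathcal{O}(1/\sqrt{K})$ and $\mathcal{O}(1/(N\Delta w_{\min}))$ terms under the implicit $\mathcal{O}(1)$-pulses-per-iteration convention, then adding $N$ to $K$ for the two-stage method. The additive SP-error bias you flag as the main obstacle is also left unproved in the paper, so your write-up is more explicit than the source; note, however, that rerunning the appendix proof verbatim with $\eta=0$ and $Q_k\equiv\hat W^\diamond$ would not produce that term, because every condition on $\eta$ there is an upper bound, so the bias would have to come from a separate analysis.
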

Corollary \ref{cor:overall_pulse} directly follows from Theorem \ref{theorem:TT-convergence-scvx} and Theorem \ref{theorem:ZS-convergence}. 
This corollary suggests that when the target training error $\delta>\Theta(\Delta w_{\min})$, RIDER offers a clear benefit in pulse complexity. In practical high-precision AIMC devices, $\Delta w_{\min}$ is typically engineered to be as small as possible (e.g. $\Delta w_{\min}=~10^{-4}$) to preserve gradient-conversion fidelity \citep{rao2023thousands, sharma2024linear}, whereas the acceptable training error $\delta$ can be substantially larger to enhance generalization. Consequently, RIDER is expected to require fewer pulses than two-stage approaches. 

\begin{algorithm}[tb]
\caption{Enhanced version of RIDER (E-RIDER)}  
\label{alg: C-DSPT}
\begin{algorithmic}[1]
\STATE \textbf{Inputs:} initialization $P_0, \tilde Q_0, W_0$ on analog device and $Q_0=\tilde Q_0$ on digital device; residual parameter $\gamma$; response granularity $F_p(\cdot),F_w(\cdot),G_p(\cdot),G_w(\cdot)$; chopper initialization $c_0=1$ 
    \FOR{$k = 0, 1, \ldots, K-1$}
    \STATE draw the chopper variable $c_k$ via \eqref{def:chopper} 
    \IF{$\operatorname{sign}(c_k)\neq \operatorname{sign}(c_{k-1})$}
    \STATE correct $\tilde Q_k=Q_k$ via weight programming 
    \ENDIF
    % \STATE evaluate $\bar W_k=W_k+\gamma c_k (P_k-\tilde Q_k)$
    \STATE sample stochastic gradient $\nabla f(\bar W_k; \xi_k)$ 
    \STATE update $P_{k+1}$ on analog device via \eqref{recursion:HD-P-chopper} 
    \STATE update $Q_{k+1}$ on digital device via \eqref{recursion:HD-Q} 
    \STATE update $W_{k+1}$ on analog device via \eqref{recursion:HD-W-chopper} 
    \ENDFOR
\STATE \textbf{outputs:} $\left\{P_{K},Q_{K}, W_{K}\right\}$ 
\end{algorithmic}
\end{algorithm}

\vspace{-0.2cm}
\subsection{Enhancement via Chopping and Filtering}
\vspace{-0.1cm}
To enhance the empirical performance, we will design a variant of Algorithm \ref{alg: DSPT} through chopping and filtering, inspired by \citep{rasch2023fast}.  

\begin{table*}[t]
  \centering
  \small
  \caption{Test accuracy on LeNet-5 (MNIST) of different methods under different reference mean/std. Best results are highlighted in bold.}
  \label{tab:tt_rl_ref_mean_std}
  \setlength{\tabcolsep}{3.5pt}
  \renewcommand{\arraystretch}{1.05}
  \begin{tabular}{c|c|cccccc}
    \hline
    \textbf{Method} & 
    % \diagbox[dir=SW,width=6.5em]{\textbf{Std}}{\textbf{Mean}}
   {\diagbox[dir=SE,width=6.2em,height=1.2em]{Mean}{Std}}

    & 0.05 & 0.2 & 0.3 & 0.4 & 0.7 & 1.0 \\
    \hline

    TT-v2
      & \multirow{3}{*}{0}
      & 75.19{\tiny$\pm1.0$} & 72.62{\tiny$\pm0.9$} & 72.39{\tiny$\pm1.1$} & 71.50{\tiny$\pm0.6$} & 68.55{\tiny$\pm0.9$} & 66.96{\tiny$\pm1.3$} \\
    AGAD
      &
      & 90.81{\tiny$\pm0.2$} & 90.69{\tiny$\pm1.1$} & 91.02{\tiny$\pm0.7$} & 90.42{\tiny$\pm0.7$} & 90.15{\tiny$\pm0.3$} & 88.11{\tiny$\pm0.3$} \\
    E-RIDER
      &
      & \textbf{93.75{\tiny$\pm0.1$}} & \textbf{93.71{\tiny$\pm0.2$}} & \textbf{94.15{\tiny$\pm0.6$}} & \textbf{93.26{\tiny$\pm1.3$}} & \textbf{91.67{\tiny$\pm0.3$}} & \textbf{89.02{\tiny$\pm0.3$}} \\
    \hline

    TT-v2
      & \multirow{3}{*}{0.2}
      & 75.01{\tiny$\pm1.4$} & 72.60{\tiny$\pm2.1$} & 71.68{\tiny$\pm2.2$} & 70.70{\tiny$\pm1.1$} & 66.54{\tiny$\pm2.9$} & 66.43{\tiny$\pm1.3$} \\
    AGAD
      &
      & 91.14{\tiny$\pm0.8$} & 90.66{\tiny$\pm0.9$} & 91.61{\tiny$\pm0.1$} & 90.61{\tiny$\pm0.9$} & 88.59{\tiny$\pm0.6$} & 87.73{\tiny$\pm1.1$} \\
    E-RIDER
      &
      & \textbf{93.90{\tiny$\pm0.6$}} & \textbf{93.06{\tiny$\pm1.0$}} & \textbf{93.33{\tiny$\pm0.8$}} & \textbf{93.15{\tiny$\pm0.5$}} & \textbf{91.99{\tiny$\pm0.1$}} & \textbf{89.41{\tiny$\pm0.8$}} \\
    \hline

    TT-v2
      & \multirow{3}{*}{0.3}
      & 73.64{\tiny$\pm0.5$} & 72.50{\tiny$\pm1.1$} & 72.66{\tiny$\pm0.3$} & 70.70{\tiny$\pm1.7$} & 65.43{\tiny$\pm2.5$} & 66.78{\tiny$\pm1.6$} \\
    AGAD
      &
      & 90.86{\tiny$\pm0.4$} & 89.87{\tiny$\pm0.4$} & 90.37{\tiny$\pm1.2$} & 89.42{\tiny$\pm0.8$} & 89.43{\tiny$\pm1.2$} & 86.76{\tiny$\pm0.4$} \\
    E-RIDER
      &
      & \textbf{92.45{\tiny$\pm1.9$}} & \textbf{92.15{\tiny$\pm1.5$}} & \textbf{92.27{\tiny$\pm0.6$}} & \textbf{91.45{\tiny$\pm1.1$}} & \textbf{89.74{\tiny$\pm0.7$}} & \textbf{90.26{\tiny$\pm1.2$}} \\
    \hline

    TT-v2
      & \multirow{3}{*}{0.4}
      & 71.71{\tiny$\pm1.8$} & 71.89{\tiny$\pm3.3$} & 70.83{\tiny$\pm3.1$} & 71.01{\tiny$\pm2.8$} & 66.63{\tiny$\pm1.2$} & 67.08{\tiny$\pm1.6$} \\
    AGAD
      &
      & 89.63{\tiny$\pm1.3$} & 89.79{\tiny$\pm1.1$} & 89.99{\tiny$\pm1.9$} & 90.16{\tiny$\pm0.8$} & 87.32{\tiny$\pm1.2$} & 86.54{\tiny$\pm1.1$} \\
    E-RIDER
      &
      & \textbf{91.72{\tiny$\pm0.4$}} & \textbf{91.76{\tiny$\pm0.4$}} & \textbf{91.29{\tiny$\pm1.1$}} & \textbf{91.54{\tiny$\pm1.5$}} & \textbf{91.17{\tiny$\pm0.5$}} & \textbf{88.02{\tiny$\pm0.3$}} \\
    \hline
  \end{tabular}
  \vspace{-0.2cm}
\end{table*}

From the digital signal processing perspective, we can treat $P_k$ and $Q_k$ as two time-domain signals, and view the mapping from $P_k$ to $Q_k$ as a difference equation system \citep{proakis2007digital}. Then the following lemma shows that this system can filter out the high-frequency signal components in $P_k$. 
\begin{lemma}
\label{lemma:DSP}
The moving average update \eqref{recursion:HD-Q}
defines a stable low-pass filter from $P_k$ to $Q_k$ with the following magnitude of the frequency response 
\begin{align}
&|H(e^{j\omega})|^2=\frac{\eta^2}{1+(1-\eta)^2-2(1-\eta)\cos\omega}.
\end{align}
\end{lemma}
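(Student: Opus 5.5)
The plan is to treat \eqref{recursion:HD-Q} as a linear time-invariant difference equation $Q_{k+1}-(1-\eta)Q_k=\eta P_{k+1}$, acting coordinate-wise (each coordinate is a scalar LTI system, so it suffices to argue for scalar signals), and to compute its transfer function via the $z$-transform. Writing $\mathcal{Z}\{P_k\}=P(z)$ and $\mathcal{Z}\{Q_k\}=Q(z)$, and using that a one-step shift multiplies the transform by $z$, the equation gives $zQ(z)-(1-\eta)Q(z)=\eta zP(z)$, up to initial-condition terms that only produce a transient and do not affect the steady-state frequency response. Hence the transfer function will be
\begin{align*}
H(z)=\frac{\eta}{1-(1-\eta)z^{-1}}.
\end{align*}
Equivalently, the impulse response is $h_n=\eta(1-\eta)^n$ for $n\ge 0$.

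For stability, I will note that the only pole of $H$ is $z=1-\eta$. For $\eta\in(0,1)$, as is the case in Theorem \ref{theorem:TT-convergence-scvx}, where $\eta=\Theta(\alpha\mu)$ is small, this pole lies strictly inside the unit circle. The system is therefore BIBO stable, which I can also check directly from $\sum_n|h_n|=\eta\sum_n(1-\eta)^n=1<\infty$.

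Next I evaluate on the unit circle $z=e^{j\omega}$ and compute
\begin{align*}
|1-(1-\eta)e^{-j\omega}|^2=\bigl(1-(1-\eta)\cos\omega\bigr)^2+(1-\eta)^2\sin^2\omega=1+(1-\eta)^2-2(1-\eta)\cos\omega,
\end{align*}
which yields the stated $|H(e^{j\omega})|^2$.

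For the low-pass property, the denominator $1+(1-\eta)^2-2(1-\eta)\cos\omega$ is increasing in $\omega\in[0,\pi]$ because $1-\eta>0$. So $|H|^2$ is decreasing in $\omega$: it equals $1$ at $\omega=0$ (DC passes with unit gain) and attains its minimum $\eta^2/(2-\eta)^2<1$ at $\omega=\pi$. High-frequency components of $P_k$ are thus attenuated, and more strongly so for small $\eta$.

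There is no real obstacle here. The only point needing care is to state explicitly that the frequency response describes the steady state, with initial-condition terms decaying geometrically at rate $(1-\eta)^k$ by stability, and to require $\eta\in(0,1)$ so that the pole lies strictly inside the unit circle.
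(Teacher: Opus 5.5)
Your proposal is correct and takes the same route as the paper: $z$-transform the recursion to get $H(z)=\eta/(1-(1-\eta)z^{-1})$ with its single pole at $z=1-\eta$, then evaluate on the unit circle. Your explicit modulus computation, the requirement $\eta\in(0,1)$ for stability, the monotonicity argument for the low-pass claim, and the remark on initial-condition transients fill in details the paper leaves implicit.
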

The proof of this lemma is provided in Appendix \ref{sec:proof_DSP}. 
Observing that the magnitude response is maximized at zero frequency ($\omega=0$) and minimized at the high frequency ($\omega=\pm\pi$), the moving average operator functions as a low-pass filter. Consequently, it effectively attenuates the high-frequency (sign-flipping) components of $P_k$.

% Since the denominator of the magnitude is minimized at $\omega=0$ and maximized at $\omega=\pm\pi$, the moving average operator acts as a low-pass filter that attenuates sign-flipping (high-frequency) components in $P_k$ to $Q_k$. 

% Therefore, we can use chopping to push the effects in the update of $P_k$ into the high-frequency band except SP drifting terms, so that they can be filtered out by $Q_k$. In this way, $Q_k$ can track the SP faster, and thus $P_{k+1}-Q_k$ provides a better zero-shifting vector. 

This motivates us to use a chopper variable to further diversify the frequency for the two components in the $P_k$ update and keep the SP drifting part in the low-frequency band. Specifically, let us define the chopper variable $c_k$ that flips the sign with probability (w. p.) $p\in (0,1)$, i.e. 
\begin{align}\label{def:chopper} 
    c_{k+1} = \begin{cases}
        c_k, ~~~~&\text{w. p. }~~~1-p,\\
        -c_k, ~~~~&\text{w. p. }~~~p.
    \end{cases}
\end{align}
With $\bar W_k=W_k+\gamma c_k (P_k-Q_k)$, the update \eqref{recursion:P-W} becomes
\begin{tcolorbox}[emphblock]
\centering
\vspace{-0.8\baselineskip}
\begin{subequations}
\begin{align}
    \label{recursion:HD-P-chopper}
    \vspace{0.5em}
    P_{k+1} &= \ P_k - \alpha c_k\nabla f(\bar W_k; \xi_k)\odot F_p(P_k) 
    \nonumber\\
    &~~~~~- \alpha|c_k\nabla f(\bar W_k; \xi_k)|\odot G_p(P_k)+b_k \\
    \label{recursion:HD-W-chopper}
    W_{k+1} &=\ W_k + \beta c_k (P_{k+1}-Q_k)\odot F_w(W_k) \nonumber\\
    &~- \beta|c_k(P_{k+1}-Q_k)|\odot G_w(W_k)+b^\prime_k. 
\end{align}
\end{subequations}
\vspace{-1.5\baselineskip}
\end{tcolorbox}

\textbf{Frequency domain interpretation.}
In the frequency domain, $P_k$ 
comprises two distinct components: a high-frequency, sign-flipping term (corresponding to $c_k\nabla f(\bar W_k; \xi_k)\odot F_p(P_k)$) and a low-frequency, slowly varying term (corresponding to $|c_k\nabla f(\bar W_k; \xi_k)|\odot G_p(P_k)$). The crucial distinction is that the absolute value in the second term eliminates sign changes, causing it to accumulate rather than oscillating.  Following Lemma \ref{lemma:DSP}, applying a moving average to $P_k$ acts as a low-pass filter, which results in the $Q_k$ sequence suppressing the sign-flipping component and retaining the low-frequency signals only; see Figure \ref{fig:DSP_intuition}. As the low-frequency component of the $P_k$ update is a scaled increment proportional to $G_p(P_k)$, it forces the $Q_k$ drifting towards the SP of the $P$ device faster. 

Furthermore, to stabilize the updates of $\bar W_k$, we incorporate the chopper $c_k$ directly into the $\bar W_k$ update rule, which is a novel modification compared to prior works \citep{wu2025analog,rasch2023fast}.
 This is because the update term $c_k(P_k-Q_k)$ is stable and (approximately) aligned in sign of $\nabla f(\bar W_k)$, which is independent of the flipping. Together, chopping and filtering help the $Q_k$ sequence converge to $W^\diamond$ faster while ensuring the objective descent for $\bar W_k$, yielding enhanced empirical performance. 

\noindent\textbf{Practical implementation. }
As $Q_k$ is stored on the digital device, but $P_k$ and $W_k$ are stored on the analog device, computing $P_k-Q_k$ and $P_{k+1}-Q_k$ requires frequent weight programming. To reduce this cost, we store a fake $\tilde Q_k$ on an additional analog device and only periodically correct it using the digitally stored $Q_k$ when the sign of $c_k$ flips. 
Overall, the weight programming cost of the E-RIDER  is the same order as the existing dynamic SP tracking method AGAD \citep{rasch2023fast}, and the difference of them is summarized in Appendix \ref{section:comparison-RIDER-vs-AGAD}.  We summarize the Enhanced RIDER (E-RIDER) algorithm in Algorithm \ref{alg: C-DSPT}. 
 
\begin{figure}
    \centering
    \hspace{-0.2cm} 
    \includegraphics[width=1.02\linewidth]{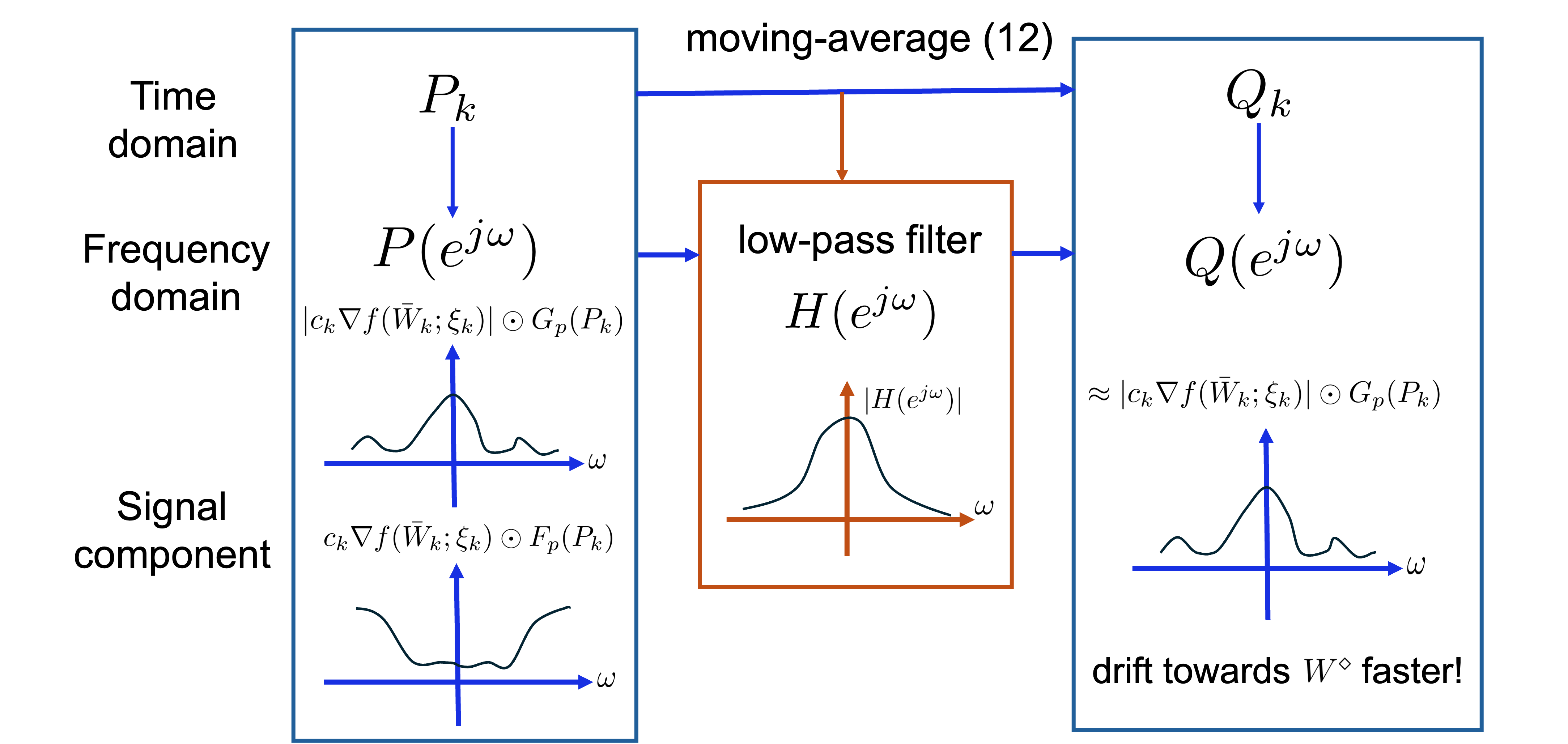}
    \caption{Chopping and filtering via moving average. } 
    \label{fig:DSP_intuition}
    \vspace{-0.7cm}
\end{figure}

% \newpage
\section{Experiments}

% \TC{Need to expand the experiment part.}
% \textcolor{blue}{I'm conducting one more task.}
% In preamble (recommended):
% \usepackage{graphicx}
% \usepackage{subcaption}

\begin{figure*}[t]
  \centering
  \begin{subfigure}[t]{0.36\textwidth}
    \centering
    \includegraphics[width=\linewidth]{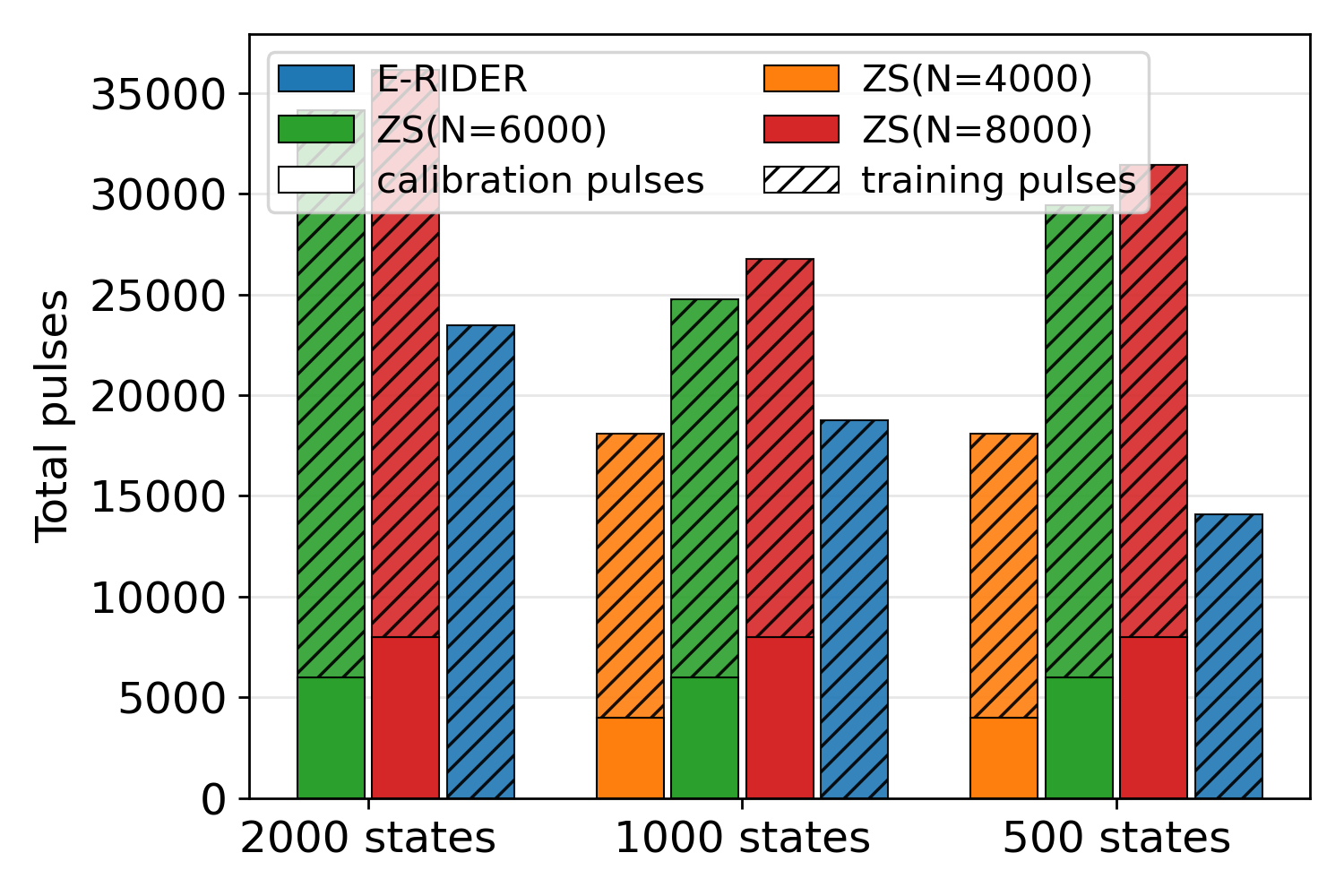}
    \label{fig:pulse_cost_metricA}
  \end{subfigure}\hfill
  \begin{subfigure}[t]{0.31\textwidth}
    \centering
\includegraphics[width=\linewidth]{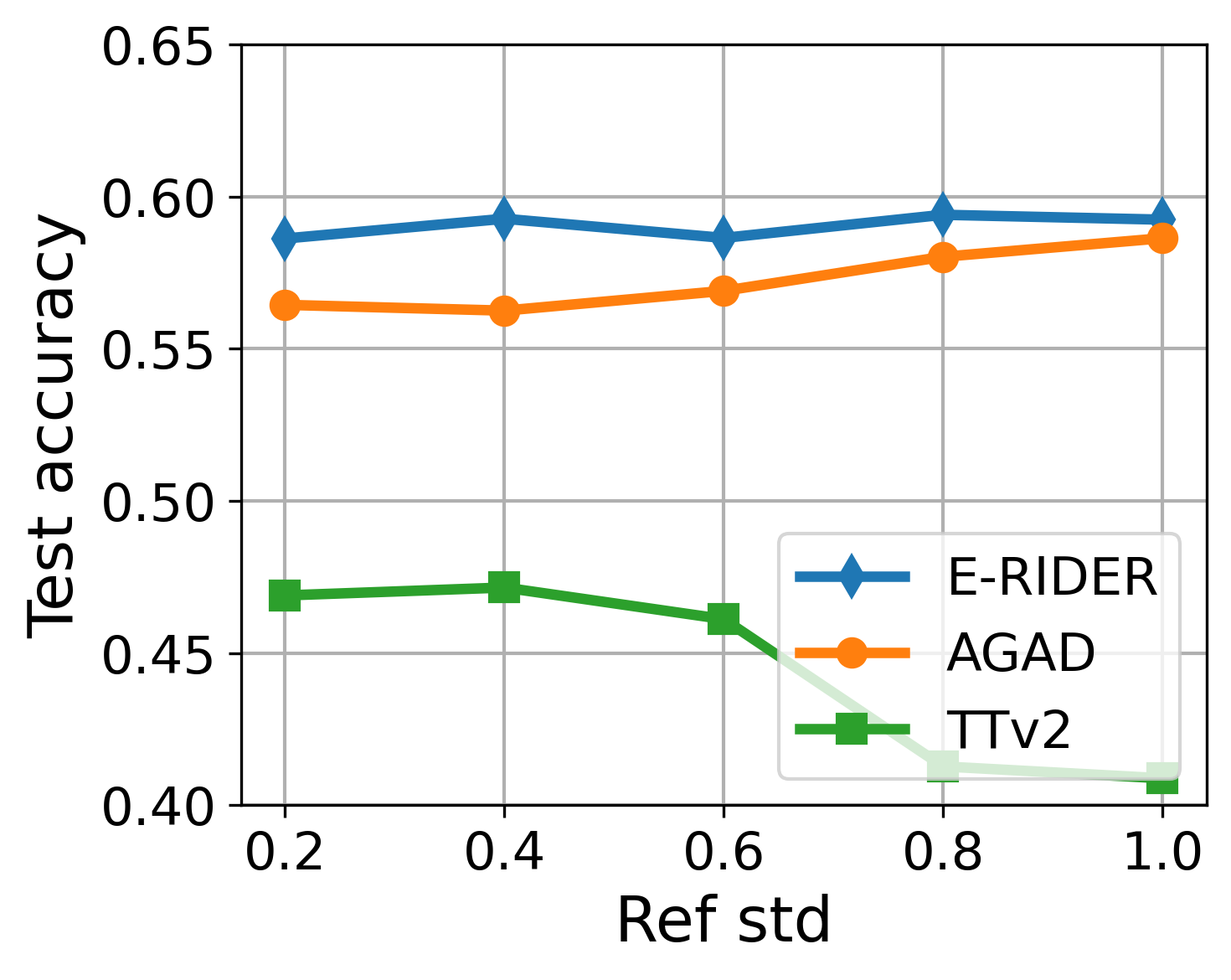}
    \label{fig:std}
  \end{subfigure}
  \begin{subfigure}[t]{0.31\textwidth}
    \centering
\includegraphics[width=\linewidth]{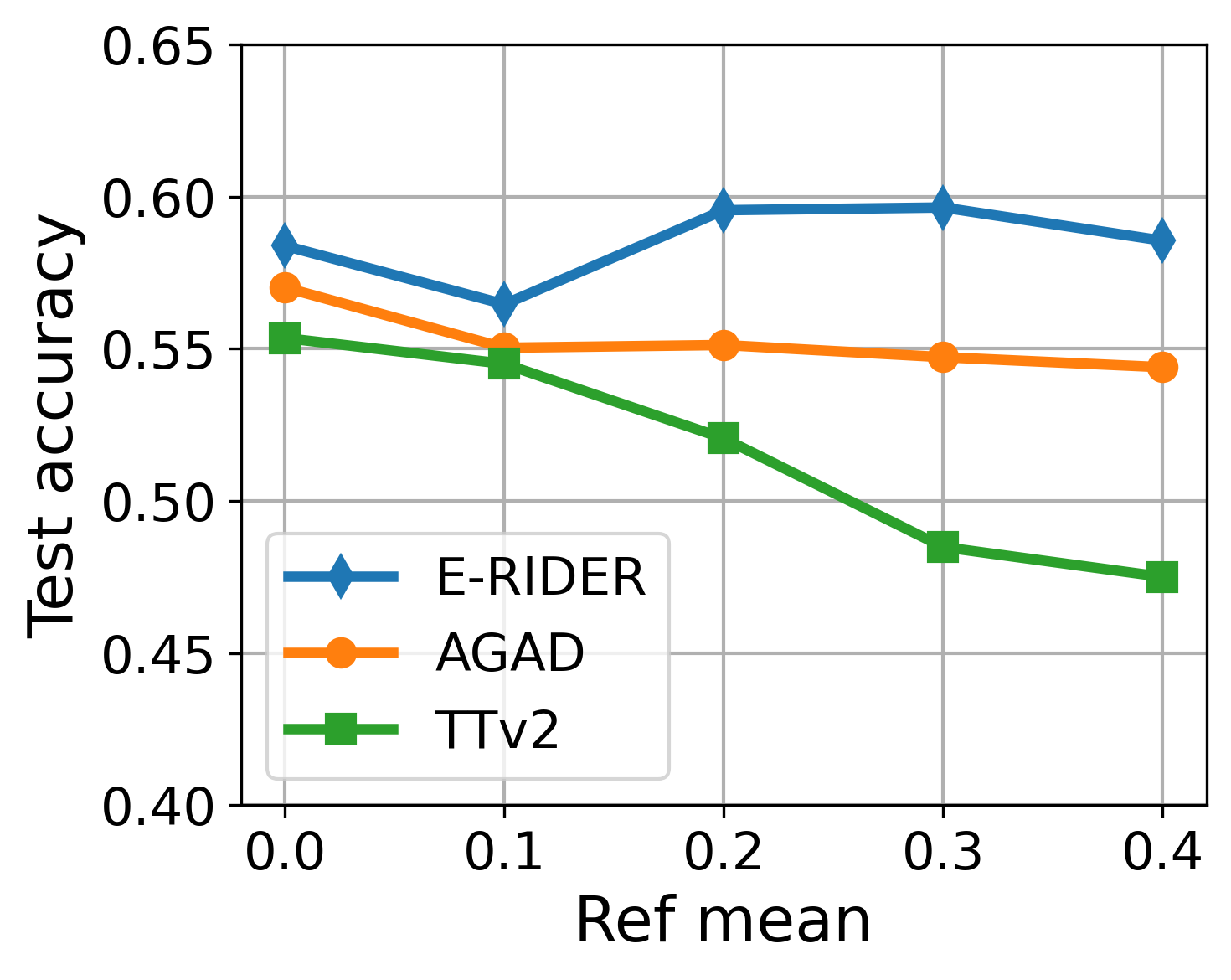}
    \label{fig:std_1}
  \end{subfigure}
  \vspace{-1.5em}
  \caption{(Left) total pulse cost to reach the target training loss $0.2$ on LeNet-5 (MNIST) across different number of states settings. Solid bars indicate the number of pulses using ZS algorithm, while hatched bars indicate the training cost computed as \(\text{epochs}\times\lceil \text{data size}/B\rceil\times \mathrm{BL}\), with batch size \(B=64\) and an average update pulse length \(\mathrm{BL}=5\). {For $2000$ states, ZS ($N=4000$) fails to reach the target loss.} 
 (Middle \& right) training loss of E-RIDER and baselines under different reference std/mean on ResNet-18 (CIFAR-100) after 80 epochs. 
}
  \label{fig:tradeoff_accuracy_pulsecost}
  \vspace{-0.1cm}
\end{figure*}

We evaluate our method on MNIST dataset \citep{mnistdata} using a fully analog LeNet-5 \citep{lecun1998gradient} and a fully analog fully connected network, and on CIFAR-100 dataset \citep{krizhevsky2009learning} by training a ResNet-18 model \citep{he2016deep} with the fully connected layer and the last residual block implemented in analog. All experiments are implemented in the AIHWKit simulator~\cite{rasch2021flexible}. Note that RIDER is a special case of E-RIDER with $p=0$. As shown in the Figure \ref{fig:chop_prob}, using a small $p>0$ yields a clear improvement in test accuracy, so we use E-RIDER with the best-tuned $p$ in all subsequent experiments. For fairness, we also tune the chopper probability $p>0$ for the dynamic SP tracking baseline AGAD \citep{rasch2023fast}. Larger-scale experiments of analog ImageNet-1K fine-tuning \citep{deng2009imagenet}, hyperparameters choices for different methods and ablation studies for key hyperparameters for E-RIDER are provided in Appendix \ref{sec:hyperparameter_t1_t2}--\ref{sec:hyperparameter_ablations}. 

\noindent\textbf{Device model. } We use two ReRAM array device presets from IBM AIHWKit simulator~\cite{rasch2021flexible}: the HfO$_2$-based RRAM preset and the ReRamArrayOM preset in ~\citep{gong2022deep}. Both presets inherit from the \texttt{SoftBoundsReferenceDevice} model, where the ReRAM update
response is modeled by state-dependent linear soft-bound functions. Let
$-\tau_{\min}$ and $\tau_{\max}$ denote the lower and upper weight bounds,
respectively, with $\tau_{\min},\tau_{\max}>0$. For a scalar weight
$w\in[-\tau_{\min},\tau_{\max}]$, we write the normalized potentiation and
depression response functions as
\begin{align*}
q_{+}(w)
=
\alpha_{+}\left(1-\frac{w}{\tau_{\max}}\right),
~~
q_{-}(w)
=
\alpha_{-}\left(1+\frac{w}{\tau_{\min}}\right),
\end{align*}
where $\alpha_{+},\alpha_{-}>0$ determine the effective update magnitudes
for potentiation and depression, respectively. The specific device parameters are provided in Appendix \ref{sec:implementation_device}. 

\begin{figure}[h]
    \centering
    \includegraphics[width=0.74\linewidth]{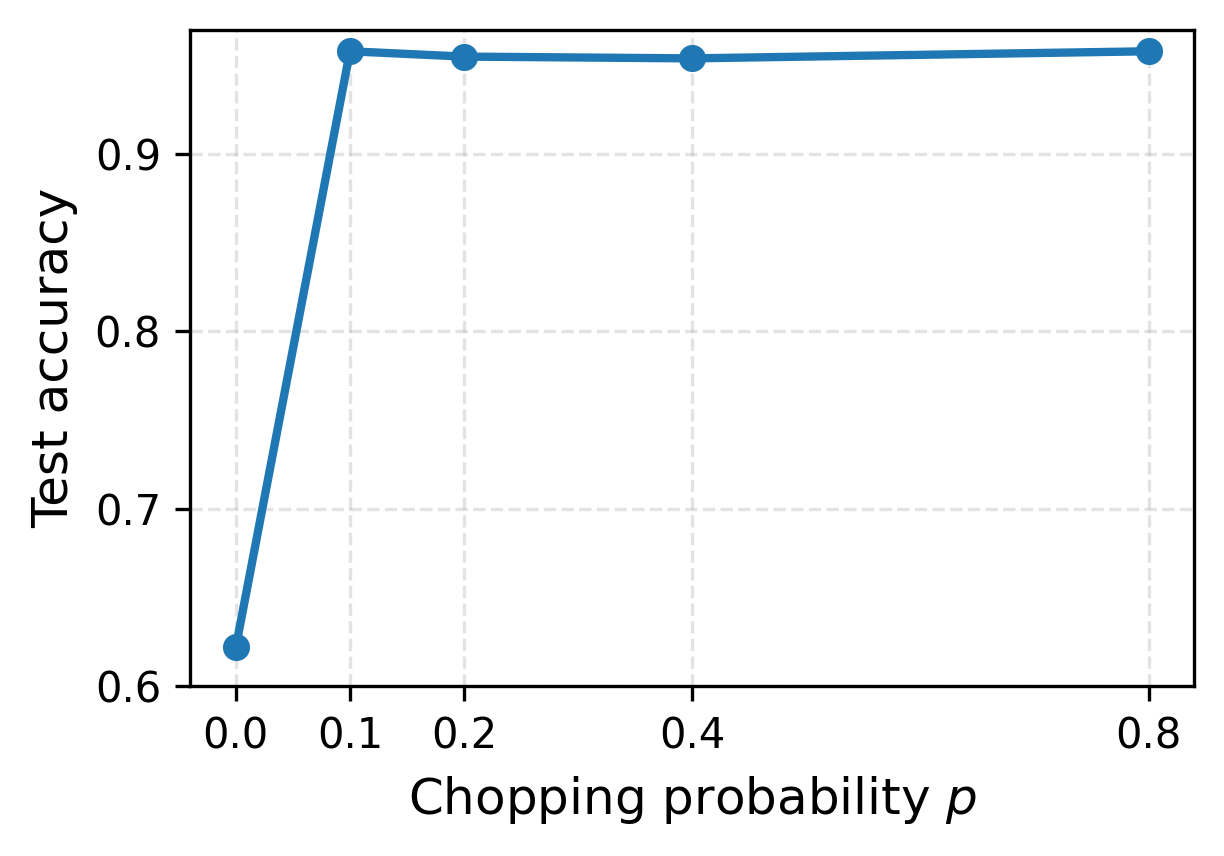}
    \caption{Test accuracy of E-RIDER on MNIST-FCN after 50 epochs under different chopper probabilities \(p\).
}
    \label{fig:chop_prob}
    \vspace{-0.6cm}
\end{figure}

\begin{table*}[t]
  \centering
  % \vspace{-0.1cm}
  \small
  \caption{Test accuracy on FCN (MNIST) for different algorithms under different reference mean/std. Best results are highlighted in bold.}
  \label{tab:tt_rl_ref_mean_std_fcn}
  \setlength{\tabcolsep}{3.5pt}
  \renewcommand{\arraystretch}{1.05}
  \begin{tabular}{c|c|cccccc}
    \hline
    \textbf{Method} &
    {\diagbox[dir=SE,width=6.2em,height=1.2em]{Mean}{Std}}
    & 0.05 & 0.2 & 0.3 & 0.4 & 0.7 & 1.0 \\
    \hline

    TT-v2
      & \multirow{3}{*}{0}
      & 90.26{\tiny$\pm0.4$} & 88.75{\tiny$\pm0.6$} & 87.40{\tiny$\pm0.7$} & 85.98{\tiny$\pm0.2$} & 82.85{\tiny$\pm2.4$} & 79.20{\tiny$\pm3.2$} \\
    AGAD
      &
      & 92.59{\tiny$\pm0.2$} & 91.96{\tiny$\pm0.8$} & 92.11{\tiny$\pm0.9$} & 92.48{\tiny$\pm0.1$} & 92.19{\tiny$\pm0.4$} & 91.37{\tiny$\pm0.4$} \\
    E-RIDER
      &
      & \textbf{95.45{\tiny$\pm0.2$}} & \textbf{95.48{\tiny$\pm0.1$}} & \textbf{95.39{\tiny$\pm0.1$}} & \textbf{95.34{\tiny$\pm0.4$}} & \textbf{95.42{\tiny$\pm0.3$}} & \textbf{93.86{\tiny$\pm0.2$}} \\
    \hline

    TT-v2
      & \multirow{3}{*}{0.2}
      & 73.37{\tiny$\pm0.3$} & 73.99{\tiny$\pm0.7$} & 71.88{\tiny$\pm0.4$} & 70.40{\tiny$\pm0.8$} & 68.63{\tiny$\pm1.1$} & 62.35{\tiny$\pm0.6$} \\
    AGAD
      &
      & 92.86{\tiny$\pm1.3$} & 91.95{\tiny$\pm0.4$} & 92.63{\tiny$\pm0.7$} & 92.80{\tiny$\pm0.2$} & 92.22{\tiny$\pm0.3$} & 90.77{\tiny$\pm0.3$} \\
    E-RIDER
      &
      & \textbf{95.78{\tiny$\pm0.1$}} & \textbf{95.63{\tiny$\pm0.1$}} & \textbf{95.84{\tiny$\pm0.3$}} & \textbf{95.82{\tiny$\pm0.3$}} & \textbf{95.82{\tiny$\pm0.3$}} & \textbf{95.18{\tiny$\pm0.3$}} \\
    \hline

    TT-v2
      & \multirow{3}{*}{0.3}
      & 72.86{\tiny$\pm1.3$} & 72.65{\tiny$\pm0.5$} & 69.70{\tiny$\pm1.6$} & 69.96{\tiny$\pm1.5$} & 66.44{\tiny$\pm2.0$} & 61.45{\tiny$\pm1.0$} \\
    AGAD
      &
      & 92.31{\tiny$\pm0.4$} & 92.52{\tiny$\pm0.4$} & 92.47{\tiny$\pm0.5$} & 92.88{\tiny$\pm0.3$} & 91.68{\tiny$\pm0.5$} & 91.35{\tiny$\pm0.4$} \\
    E-RIDER
      &
      & \textbf{95.76{\tiny$\pm0.1$}} & \textbf{95.81{\tiny$\pm0.5$}} & \textbf{95.91{\tiny$\pm0.6$}} & \textbf{95.88{\tiny$\pm0.1$}} & \textbf{95.79{\tiny$\pm0.2$}} & \textbf{94.84{\tiny$\pm0.2$}} \\
    \hline

    TT-v2
      & \multirow{3}{*}{0.4}
      & 72.00{\tiny$\pm2.8$} & 68.08{\tiny$\pm2.1$} & 68.80{\tiny$\pm1.9$} & 67.85{\tiny$\pm1.4$} & 66.15{\tiny$\pm0.9$} & 57.79{\tiny$\pm0.8$} \\
    AGAD
      &
      & 92.55{\tiny$\pm0.2$} & 92.42{\tiny$\pm0.6$} & 92.42{\tiny$\pm0.8$} & 92.06{\tiny$\pm0.4$} & 90.62{\tiny$\pm0.3$} & 90.43{\tiny$\pm0.3$} \\
    E-RIDER
      &
      & \textbf{96.07{\tiny$\pm0.2$}} & \textbf{96.21{\tiny$\pm0.1$}} & \textbf{96.20{\tiny$\pm0.1$}} & \textbf{96.23{\tiny$\pm0.3$}} & \textbf{95.87{\tiny$\pm0.3$}} & \textbf{95.14{\tiny$\pm0.3$}} \\
    \hline
  \end{tabular}
  % \vspace{-0.4cm}
\end{table*}

\noindent\textbf{E-RIDER achieves better overall pulse complexity. } We conduct an ablation study on LeNet-5 (MNIST) with a convergence criterion of training loss $\le 0.2$. Figure~\ref{fig:tradeoff_accuracy_pulsecost} (left) reports the total pulse cost required to reach this target for E-RIDER and the two-stage TT-v2 \cite{gokmen2021} with ZS algorithm under different numbers of device states. For the two-stage ZS approach, the total pulse cost is the sum of $N$ pulses for SP estimation and the subsequent training pulses, determined by the number of training epochs $K$ when training with TT-v2 after calibration. When the number of states is low ($\Delta w_{\min}$ is large), SP estimation with fewer pulses (e.g., $N=4000$) leads to better convergence than using larger $N$, which consists with Theorem~\ref{theorem:ZS-convergence}. However, as the number of states increases (smaller $\Delta w_{\min}$), the two-stage ZS approach becomes more expensive because accurate SP estimation requires a larger pulse budget; for $2000$ states, ZS with $N=4000$ pulses fails to meet the training target loss. Meanwhile, E-RIDER  consistently achieves a lower total pulse cost across all settings. 

\noindent\textbf{E-RIDER is robust to nonzero SP. } We conduct an ablation study on the robustness of E-RIDER to a nonzero SP reference. 
The device model follows the RRAM-RfO$_2$ preset~\cite{10019569} to emulate practical update non-idealities in filamentary RRAM.  We focus on a limited-state scenario where the number of states is $\sim$4--5. It also mimics the strong device-to-device mismatch and cycle-to-cycle writing noise specified in Appendix \ref{sec:implementation_device}.  We initialize $W^\diamond$ by sampling each entry $W_{ij}$ i.i.d. from a Gaussian distribution with varying reference mean (Ref Mean) and standard deviation (Ref Std) to model different nonzero SP scenarios. We compare E-RIDER with both TT-v2 and AGAD, summarizing the test accuracy after 40 epochs in Tables~\ref{tab:tt_rl_ref_mean_std} and~\ref{tab:tt_rl_ref_mean_std_fcn}.
Each setting is repeated three times, and we report the sample mean and standard deviation results; the hyperparameters are both tuned to optimum (See Appendix~\ref{sec:hyperparameter_t1_t2} for details). The results demonstrate that: \textbf{1)} TT-v2 exhibits a substantial accuracy gap to both AGAD and E-RIDER, and its performance degrades markedly as the reference mean/std offsets increase. This is expected because TT-v2 cannot compensate for nonzero reference offsets.
\textbf{2)} E-RIDER consistently outperforms TT-v2 and AGAD in all reference mean/std offset settings, especially with larger SP offset. This is because E-RIDER dynamically tracks SP, and further improves training under low-state devices, which increases the effective weight resolution and enables more accurate gradient updates. 

We further evaluate the robustness of our method on CIFAR-100 using a ResNet-18 trained for 80 epochs. We first fix the reference mean to 0.4 and sweep the reference standard deviation, and then fix the reference standard deviation to 0.4 and sweep the reference mean. The results are summarized in Figure~\ref{fig:tradeoff_accuracy_pulsecost} (middle \& right).
% We put the simulation details for Figure~\ref{fig:tradeoff_accuracy_pulsecost} in Appendix \ref{sec:figure4}.
We observe similar trends: \textbf{1)} TT-v2 suffers from a significant degrade in testing accuracy as the reference standard deviation and mean grows, whereas AGAD and E-RIDER remain more stable.
\textbf{2)} E-RIDER consistently achieves higher testing accuracy than both TT-v2 and AGAD, especially for large SP offset.

We additionally provide large-scale ImageNet-1K \citep{deng2009imagenet} fine-tuning experiments with VGG-11-BN \citep{simonyan2014very}, a 132M-parameter model with 8 convolutional and 3 fully connected layers, for two dynamic SP-tracking methods, AGAD and E-RIDER, in Appendix~\ref{sec:imagenet_experiment}. The results further demonstrate the superiority of E-RIDER over AGAD in the large-scale setting. 

\section{Conclusions}
In this paper, we characterize the pulse complexity of ZS algorithm and quantify how the required number of pulse updates for achieving a certain target SP estimation error scales with device response granularity $\Delta w_{\min}$. Building on these insights, we propose a dynamic SP tracking algorithm, RIDER, which estimates the SP during training, achieving target training accuracy with substantially fewer pulse updates than the two-stage analog training approaches theoretically. We further develop an  variant of RIDER, E-RIDER, that accelerates SP tracking through chopper-and-filtering, while reducing weight programming overhead by periodical synchronization. Numerical experiments validate the efficiency and effectiveness of the proposed E-RIDER. 

\section*{Acknowledgment}
The work was supported by the National Science Foundation Projects
2401297 and 2532349, by NVIDIA Academic Grant, by IBM through the IBM-Rensselaer Future of Computing Research Collaboration, and by Cisco Research. 

% \newpage
\section*{Impact Statement}
This paper aims to advance AIMC training via dynamic SP tracking, providing a principled algorithmic framework to mitigate update asymmetry and device drift during learning. By addressing the key challenges in dynamic SP estimation and calibration, our work contributes to the broader development of more robust and hardware-aware training methods for energy-efficient AIMC accelerators. Potential societal impacts include enabling lower-power AI training and adaptation at the edge, reducing the energy and carbon footprint of AI workloads. While we acknowledge the possibility of unintended uses, we do not identify any specific societal risks that need to be highlighted in this context.  

% \clearpage

\bibliography{
    bibs/abrv,
    % bib, analog
    % bibs/bib,
    bibs/optimization,
    % analog, bib
    bibs/analog,
    bibs/LLM,
    bibs/textbook,
    bibs/math,
    bibs/publication,
    bibs/bilevel,
    bibs/bib
}
\bibliographystyle{icml2026}

%%%%%%%%%%%%%%%%%%%%%%%%%%%%%%%%%%%%%%%%%%%%%%%%%%%%%%%%%%%%%%%%%%%%%%%%%%%%%%%
%%%%%%%%%%%%%%%%%%%%%%%%%%%%%%%%%%%%%%%%%%%%%%%%%%%%%%%%%%%%%%%%%%%%%%%%%%%%%%%
% APPENDIX
%%%%%%%%%%%%%%%%%%%%%%%%%%%%%%%%%%%%%%%%%%%%%%%%%%%%%%%%%%%%%%%%%%%%%%%%%%%%%%%
%%%%%%%%%%%%%%%%%%%%%%%%%%%%%%%%%%%%%%%%%%%%%%%%%%%%%%%%%%%%%%%%%%%%%%%%%%%%%%%
\newpage
\appendix
\onecolumn

\begin{center}
\Large \textbf{Appendix for  ``Dynamic Symmetric Point Tracking: \\
Tackling Non-ideal Reference in Analog In-memory Training''} \\ 
\end{center}

\vspace{-1.5cm}

\addcontentsline{toc}{section}{} % Add the appendix text to the document TOC

\makeatletter
\renewcommand{\partname}{} % removes "Part"
\renewcommand{\thepart}{}  % removes "I"
\makeatother

\part{} % Start the appendix part
\parttoc % Insert the appendix TOC 

% \makeatletter
% \renewcommand{\partname}{}
% \renewcommand{\thepart}{}
% \makeatother

% \part{Appendix}

% % Keep changes local to the appendix parttoc:
% \begingroup
%   \renewcommand{\ptctitle}{Appendix contents} % or whatever you want
%   \noptcrule % optional: remove the horizontal rules around the parttoc
%   \parttoc
% \endgroup

% \begin{center}
% {\Large\bfseries Appendix for ``Physical Point Tracking: Remedy for Non-Zero Shift in Analog In-memory Training''\par}
% \end{center}
% \vspace{0.75em} 

\section{Notations and Preliminaries}
\label{section:notations}
In this section, we define a series of notations that will be used in the analysis.

\textbf{Pseudo-inverse of diagonal matrix or vector.} For a given diagonal matrix $U\in\reals^{D\times D}$ with its $d$-th diagonal element $[U]_d$, we define the pseudo-inverse of a diagonal matrix $U$ as $U^\dag$, which is also a diagonal matrix with its $d$-th diagonal element 
\begin{align}
    \label{definition:pseudo-inverse}
    [U^\dag]_d := \begin{cases}
        1 / [U]_d, ~~~&{[U]_d \ne 0}, \\
        0, ~~~&{[U]_d = 0}.
    \end{cases}
\end{align}
By definition, the pseudo-inverse satisfies $U U^\dag V = U^\dag U V$ for any diagonal matrix $U\in\reals^D$ and any matrix $V\in\reals^D$. With a slight abuse of notation, we also define the pseudo-inverse of a vector $W\in\reals^D$ as $W^\dag:= \diag(W)^\dag$.

\textbf{Weighted norm. } For a weight $M\in\reals^D_+$, the weighted norm $\|\cdot\|_M$ of  $W\in\reals^D$ is defined by
\begin{align}
    \|W\|_M := \sqrt{\sum_{d=1}^D [M]_d[W]_d^2}
    = \sqrt{W^\top \Diag(M) W}
\end{align}
where $\Diag(M)\in\reals^{D\times D}$ rearranges the vector $M\in\reals^D$ into a diagonal matrix. 

\begin{lemma}[{\citep[Lemma 1]{wu2025analog}}]
    \label{lemma:properties-weighted-norm}
    $\|W\|_M$ has the following properties: 
    (a) $\|W\|_M=\|W\odot\sqrt{M}\|$; 
    (b) $\|W\|_M\le\|W\| \sqrt{\|M\|_{\infty}}$; 
    (c) $\|W\|_M\ge\|W\| \sqrt{\min\{[M]_d: k\in[K], d\in [D]\}}$.
\end{lemma}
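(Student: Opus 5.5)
The plan is to verify the three properties directly from the definition $\|W\|_M^2=\sum_{d=1}^D [M]_d[W]_d^2$. Every entry $[M]_d$ is nonnegative because $M\in\reals^D_+$, so all the square roots below are well defined. Each claim reduces to comparing two nonnegative sums term by term and then taking square roots. Since $t\mapsto\sqrt t$ is monotone on $[0,\infty)$, it suffices to prove each statement for squared norms.

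For (a), I will expand the right-hand side: $\|W\odot\sqrt M\|^2=\sum_{d}\big([W]_d\sqrt{[M]_d}\big)^2=\sum_d [M]_d[W]_d^2=\|W\|_M^2$. This is an identity and needs nothing beyond the coordinate-wise definition of $\odot$ and $\sqrt{\cdot}$. It also matches the matrix form $W^\top\Diag(M)W$ already recorded in the definition.

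For (b), I will bound each weight by its maximum, $[M]_d\le\|M\|_\infty$, which gives $\sum_d [M]_d[W]_d^2\le \|M\|_\infty\sum_d[W]_d^2=\|M\|_\infty\|W\|^2$. Taking square roots yields the claim.

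For (c), I will bound each weight from below by $m:=\min_d [M]_d$, which gives $\|W\|_M^2\ge m\|W\|^2$. In the statement the minimum is written over $k\in[K]$ and $d\in[D]$. This reflects the intended use, where $M=M_k$ may vary along the iterates and a uniform lower bound over iterations is desired. The argument is unchanged: a minimum over a larger index set is no larger than $\min_d [M_k]_d$, so the inequality still holds for each $k$.

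There is no real obstacle here. The only point needing care is that the weights are nonnegative, so the term-wise inequalities hold and the square roots are legitimate.
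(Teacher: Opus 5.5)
Your proposal is correct: (a)--(c) each follow from the coordinatewise identity $\|W\|_M^2=\sum_d [M]_d[W]_d^2$ together with $0\le \min_d [M]_d\le [M]_d\le \|M\|_\infty$. Your remark on the extra index $k\in[K]$ (a minimum over a larger set can only be smaller, and here $M$ does not even depend on $k$) correctly handles the one oddity in the statement. The paper gives no proof of its own, only the citation \citep[Lemma 1]{wu2025analog}, and your direct term-by-term verification is the standard argument for this elementary fact.
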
 

\begin{lemma}[{\citep[Lemma 2]{wu2025analog}}]
    \label{lemma:lip-analog-update}
    Consider response functions in Definition \ref{assumption:response-factor}, the increment defined in \eqref{biased-update} is Lipschitz continuous with respect to $\Delta W$ under any weighted norm $\|\cdot\|_M$, i.e., for any $W, \Delta W, \Delta W'\in\reals^D$ and $M\in\reals^{D}_+$, it holds
    \begin{align}
        \|\Delta W \odot F(W) -|\Delta W| \odot G(W)-(\Delta W' \odot F(W) -|\Delta W'| \odot G(W))\|_M
        \le&\ q_{\max}\|\Delta W-\Delta W'\|_M.
        \nonumber
    \end{align}
\end{lemma}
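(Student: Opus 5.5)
The plan is to reduce the statement to a scalar, coordinate-wise Lipschitz bound and then sum with the weights. Both sides of the inequality decompose over coordinates. By the definition of $\|\cdot\|_M$, it suffices to show that for every $d$,
\begin{align*}
\big|h_d(\Delta W^{[d]})-h_d(\Delta W'^{[d]})\big|\le q_{\max}\,\big|\Delta W^{[d]}-\Delta W'^{[d]}\big|,
\end{align*}
where $h_d(x):=x\,F(W)^{[d]}-|x|\,G(W)^{[d]}$. Once this holds, we square it, multiply by $[M]_d\ge 0$ and sum over $d$, which gives the claim with the same constant $q_{\max}$.

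The key observation is that, by \eqref{definition:F-G}, $F-G=Q_+$ and $F+G=Q_-$. Hence $h_d(x)=x\,q_+(w)$ for $x\ge 0$ and $h_d(x)=x\,q_-(w)$ for $x<0$, with $w=W^{[d]}$ fixed. In other words, $h_d$ is exactly the piecewise-linear pulse response \eqref{analog-update}. It is continuous at $0$, and both of its slopes lie in $[q_{\min},q_{\max}]$ by Definition \ref{assumption:response-factor}.

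We then split into two cases according to the signs of $x=\Delta W^{[d]}$ and $x'=\Delta W'^{[d]}$.
\begin{itemize}
\item \emph{Same sign.} The difference $h_d(x)-h_d(x')$ equals $q_\pm(w)(x-x')$ for the corresponding slope, so its absolute value is at most $q_{\max}|x-x'|$.
\item \emph{Opposite signs.} Without loss of generality $x\ge 0>x'$. Then $h_d(x)-h_d(x')=q_+(w)\,x+q_-(w)\,|x'|$. Both terms are nonnegative because of positive-definiteness, so this is at most $q_{\max}(x+|x'|)=q_{\max}|x-x'|$.
\end{itemize}

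The only point needing care is the opposite-sign case. There the positivity $q_\pm\ge q_{\min}>0$ is what makes the two contributions add with the same sign rather than partially cancel in an uncontrolled way, so that the bound by $q_{\max}(x+|x'|)$ is valid. The rest of the argument is routine bookkeeping.
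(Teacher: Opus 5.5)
Your proof is correct and complete. The paper does not prove this lemma itself; it imports it verbatim from \citep[Lemma 2]{wu2025analog}, so there is no in-text argument to compare against. Your argument is a self-contained verification with the sharp constant $q_{\max}$: the reduction to coordinates, the identities $F-G=Q_+$ and $F+G=Q_-$ (which show that the increment is exactly the two-slope pulse response \eqref{analog-update}), and the sign-case split.

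If you want to avoid cases, write the $d$-th coordinate of the difference as $(x-x')F(W)^{[d]}-(|x|-|x'|)G(W)^{[d]}$ and use $\bigl||x|-|x'|\bigr|\le|x-x'|$. This bounds the coordinate by $\bigl(|F(W)^{[d]}|+|G(W)^{[d]}|\bigr)|x-x'|$. The identity $|a|+|b|=\max\{|a+b|,|a-b|\}$ then shows the factor equals $\max\{q_+(w),q_-(w)\}\le q_{\max}$.

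This view also corrects a small misstatement in your last paragraph. Positivity of $q_\pm$ is not what makes the opposite-sign case work, since partial cancellation could only decrease $|h_d(x)-h_d(x')|$. The inequality $\bigl|q_+(w)\,x+q_-(w)\,|x'|\bigr|\le |q_+(w)|\,x+|q_-(w)|\,|x'|$ already gives the bound whenever $|q_\pm|\le q_{\max}$. Positivity only lets you drop the absolute values.
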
 

\section{Discussion of Algorithms}
In this section, we discuss some implementation details of algorithms. 
\subsection{Two-stage analog training approach: Residual Learning with ZS algorithm}\label{sec:RLZS}

The two-stage analog training algorithm consists of an independent SP estimation stage that uses Algorithm~\ref{alg: ZS} to obtain a static SP estimate $\hat W^\diamond$, and an independent training stage that applies residual learning with $\hat W^\diamond$ fixed; i.e. we remove the $Q_k$-sequence update in Algorithm~\ref{alg: DSPT} and set $Q_k \equiv \hat W^\diamond$. We give a pseudo-code for this algorithm in Algorithm \ref{alg: RLZS}.

\begin{algorithm}[tb]
\caption{Two-stage analog training algorithm}
\label{alg: RLZS}
\begin{algorithmic}[1]
\STATE \textbf{Inputs:} initialization $P_0, Q_0, W_0$; residual parameter $\gamma$; response granularity $F_p(\cdot),F_w(\cdot),G_p(\cdot),G_w(\cdot)$
\STATE estimate $\hat W^\diamond$ by Algorithm \ref{alg: ZS} with $N$ number of pulses 
    \FOR{$k = 0, 1, \ldots, K-1$}
    \STATE evaluate $\bar W_k=W_k+\gamma (P_k-Q_k)$
    \STATE sample stochastic gradient $\nabla f(\bar W_k; \xi_k)$ 
    \STATE update $P_{k+1}$ on analog device via \eqref{recursion:HD-P} 
    \STATE set $Q_{k+1}=\hat W^\diamond$  
    \STATE update $W_{k+1}$ on analog device via \eqref{recursion:HD-W} 
    \ENDFOR
\STATE \textbf{outputs:} $\left\{P_{K},Q_{K}, W_{K}\right\}$ 
\end{algorithmic}
\end{algorithm}

\subsection{Comparison of E-RIDER, Residual Learning/TT-v2 and AGAD}
\label{section:comparison-RIDER-vs-AGAD}
The proposed E-RIDER has a similar form of AGAD \cite{rasch2023fast}. However, AGAD uses the gradient $\nabla f(W_k; \xi_k)$ that are solely computed on the main array $W_k$. Instead, E-RIDER computes gradient on a mixed weight $\bar W_k = W_k + \gamma c_k (P_k-Q_k)$ so that achieves better performance in simulation. The key reason is that introducing the residual term $\gamma c_k (P_k-Q_k)$ with a nonzero $\gamma$ effectively rescales the update, yielding a finer effective dynamic range and granularity than the raw device response. Moreover, this zero-shifting vector mitigates update asymmetry through the bilevel optimization. In simulation, we show that E-RIDER which uses the gradient evaluated at the mixed weight improves test accuracy, especially when the SP is substantially nonzero. 

Compared with Residual Learning \citep{wu2025analog}, E-RIDER uses a moving average sequence to track the SP and adds a chopper mechanism to amplify the SP tracking performance. From a theoretical perspective, Residual Learning fails to converge when the SP is nonzero, whereas E-RIDER with $p=0$ is guaranteed to converge in our paper. Empirically, Residual Learning performs similarly to TT-v2, and both suffer substantial performance degradation under nonzero SP.

\subsection{Failure of dynamic zero-shifting algorithm design. } \label{sec:failure_ZS}
From an optimization perspective, to balance the pulse overhead of SP estimation and model training, a natural approach is to integrate Algorithm \ref{alg: ZS} into analog training algorithms via multi-sequence updates \cite{yang2019multilevel,shen2022single}, enabling dynamic SP estimation during the model training process. However, this approach is infeasible in analog training because each update sequence is executed on a different device, and the SP is device-specific. As a result, the SP tracked by the zero-shifting Algorithm \ref{alg: ZS} cannot be directly transferred to the sequence used for model training. 

\section{Proof of Convergence Rate of Algorithm \ref{alg: ZS} and Cyclic Version}
\subsection{Proof of Theorem \ref{theorem:ZS-convergence}: Convergence rate of Algorithm \ref{alg: ZS}}
\label{section:proof-ZS-convergence}
\Convergencezeroshifting*
\begin{proof}[Theorem \ref{theorem:ZS-convergence}]
Define the following notations:
\begin{align}
\label{eq:notations}
    \varphi(W) := \int_{W^\diamond}^{W} G(W')dW',
\qquad \nabla \varphi(W) := G(W),
% \qquad L := \max_{W} G(W),
% \qquad
% \Delta G := \nabla q_{+} - \nabla q_{-},
\qquad L_q := \max_{W} \Delta G(W).
\end{align}

Under gradient boundedness implied by Definition \ref{assumption:response-factor}, it holds that:
\begin{align}
\varphi(W_{n+1})
&\le \varphi(W_n)
+ \left\langle \nabla \varphi(W_n),\, W_{n+1}-W_n \right\rangle
+ \frac{L_q}{2}\left\|W_{n+1}-W_n\right\|^2 \nonumber\\
&= \varphi(W_n)
+ \left\langle G(W_n),\, \varepsilon_n F(W_n) - |\varepsilon_n| G(W_n) \right\rangle
+ \frac{L_q}{2}\left\|\varepsilon_n F(W_n) - |\varepsilon_n| G(W_n)\right\|^2. 
\end{align}
where $L_q$ is the gradient boundedness constant implied by Definition \ref{assumption:response-factor}. 
The equality holds by substituting the stochastic updating equation \eqref{zero-shifting-stochastic}. Taking expectation over $\varepsilon_n$ on both sides, we get:
\begin{align}
\label{eq:sto_zeroshift_nc}
\mathbb{E}\!\left[\varphi(W_{n+1})\mid W_n\right]
&\le \varphi(W_n)
- \mathbb{E}[|\varepsilon_n|]\;\|G(W_n)\|^2
+ \frac{L_q}{2}\Delta w_{\min}^2 q_{\max}^2 \nonumber\\
&= \varphi(W_n)
- \Delta w_{\min}\;\|G(W_n)\|^2
+ \frac{L_q}{2}\Delta w_{\min}^2 q_{\max}^2 \nonumber\\
&\le \varphi(W_n)
- \Delta w_{\min}\;\|G(W_n)\|^2
+ \frac{L_q}{2}\Delta w_{\min}^2 q_{\max}^2 .
\end{align}
Taking total expectation \(\mathcal{F}_n\) to both sides of \eqref{eq:sto_zeroshift_nc} and using
\(
\mathbb{E}\left[\mathbb{E}\left[\varphi(W_{n+1})\mid \mathcal{F}_n\right]\right]
  =\mathbb{E}\left[\varphi(W_{n+1})\right]
  \) , we get:
\begin{align}
% \label{eq:sto_zeroshift_nc}
\mathbb{E}\!\left[\varphi(W_{n+1})\right]
&\le \mathbb{E}\left[\varphi(W_n)
 \right]- \Delta w_{\min} \mathbb{E}\left[\|G(W_n)\|^2\right]
+ \frac{L_q}{2}\Delta w_{\min}^2 q_{\max}^2 .
\end{align}
Taking average over all $n$ form $0$ to $N-1$, we get:
\begin{align}
\label{eq:sto_zeroshift_nc_final}
\frac{1}{N}\sum_{n=0}^{N-1}\mathbb{E}\left[\|G(W_n)\|^2\right]
&\le \frac{\varphi (W_0) - \varphi (W^*) }{N \Delta w_{\min}}
+ \frac{L_q}{2}\Delta w_{\min} q_{\max}^2 
\end{align}
which completes the proof.
\end{proof}

% \subsection{Lower bound of the pulse complexity for Algorithm \ref{alg: ZS}}

% \begin{proof}
% For every step in the recursion~\eqref{zero-shifting-stochastic}, we have 
% \[
% \|W_{n+1}-W_n\|=\|\epsilon_n \odot F(W_n) -|\epsilon_n| \odot G(W_n) \|
% \le \Delta w_{\min}q_{\max}. 
% \]
% Telescoping from $n=0,\cdots, N-1$ yields  
% \[
% \|W_N-W_0\|\leq \sum_{n=0}^{N-1}\|W_{n+1}-W_n\| \le N\Delta w_{\min}q_{\max}.
% \]
% For any target error $\delta>0$ that $\|W_N-W^\diamond\|\le \delta$, we have 
% \[
% \|W_0-W^\diamond\|
% \le \|W_0-W_N\|+\|W_N-W^\diamond\|
% \le N\Delta w_{\min}q_{\max}+\delta, 
% \]
% and therefore the pulse complexity must satisfy
% \[
% N \;\ge\; \frac{\|W_0-W^\diamond\|-\delta}{\Delta w_{\min} q_{\max}}. 
% \]
% In particular, for fixed $\|W_0-W^\star\|>\delta$, this implies
% $N=\Omega(1/\Delta w_{\min})$. 

% \end{proof}

\subsection{Proof of Theorem \ref{theorem:ZS-linear-convergence}: Last-iterate convergence of Algorithm \ref{alg: ZS}}
\label{section:proof-ZS-linear-convergence}

To get the enhanced last-iterate pulse complexity for Algorithm \ref{alg: ZS}, we focus on the family of devices with the following monotone response functions. 

\begin{definition}[Monotone response functions]
    \label{assumption:response-factor-optional}
    Response functions $q_+(\,\cdot\,)$ and $q_-(\,\cdot\,)$ are said to be monotone with modulus $\mu_q>0$ if 
    \begin{align}
    \nabla q_-(w)\geq \mu_q, ~~\text{ and }~~ \nabla q_+(w)\leq -\mu_q. 
    \end{align}
\end{definition}
Definition \ref{assumption:response-factor-optional} suggests that $q_-(\cdot)$ is monotonically increasing and $q_+(\cdot)$ is monotonically decreasing. The response functions of a wide range of AIMC devices, such as linear, exponential, and power device, satisfy Definition \ref{assumption:response-factor-optional} \citep{wu2025analog}. With Definition \ref{assumption:response-factor-optional}, $G(W)$ is strongly monotone with $\mu_q$ so that we have the following last-iterates convergence. 

\begin{restatable}[Last-iterate convergence of Algorithm \ref{alg: ZS}]{theorem}{LinearConvergencezeroshifting}
    \label{theorem:ZS-linear-convergence}
    Considering the response functions satisfying Definition \ref{assumption:response-factor} and \ref{assumption:response-factor-optional} and assuming $\mu_g<\frac{1}{2\Delta w_{\min}}$, then to achieve $\mathbb{E}[\|W^N-W^\diamond\|^2]\le \delta$, we need   
    \begin{align*}
    N\leq \frac{1}{2\mu_q\Delta w_{\min}}\log\left(\frac{2\|W^0-W^\diamond\|^2}{\delta}\right). 
    \end{align*} 
    and the minimal achievable error $\delta$ is $\delta=\frac{2q_{\max}^2\Delta w_{\min}}{\mu_q}$. 
\end{restatable}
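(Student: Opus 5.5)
I will run a one-step contraction argument on the squared distance $\|W_n-W^\diamond\|^2$, working coordinatewise because the dynamic \eqref{zero-shifting-stochastic} acts elementwise. For one coordinate, write $w_n$, $\epsilon_n\in\{\pm\Delta w_{\min}\}$ and $w^\diamond$, with $G(w^\diamond)=0$. Expanding the square gives
\begin{align*}
(w_{n+1}-w^\diamond)^2=(w_n-w^\diamond)^2+2(w_n-w^\diamond)\big(\epsilon_nF(w_n)-|\epsilon_n|G(w_n)\big)+\big(\epsilon_nF(w_n)-|\epsilon_n|G(w_n)\big)^2 .
\end{align*}
I take the conditional expectation over $\epsilon_n$. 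Since $\mathbb{E}[\epsilon_n]=0$, the $F$ cross term vanishes. Since $|\epsilon_n|=\Delta w_{\min}$, the drift term becomes $-2\Delta w_{\min}(w_n-w^\diamond)G(w_n)$. For the quadratic term, the increment is $\Delta w_{\min}q_\pm(w_n)$ in absolute value, so it is at most $\Delta w_{\min}^2q_{\max}^2$ by Definition \ref{assumption:response-factor}.

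The key structural step uses Definition \ref{assumption:response-factor-optional}. It gives $G'=(q_-'-q_+')/2\ge\mu_q$. Together with $G(w^\diamond)=0$ and the mean value theorem, this yields $(w-w^\diamond)G(w)\ge\mu_q(w-w^\diamond)^2$. Summing over coordinates and taking total expectation gives the recursion
\begin{align*}
\mathbb{E}\|W_{n+1}-W^\diamond\|^2\le(1-2\mu_q\Delta w_{\min})\,\mathbb{E}\|W_n-W^\diamond\|^2+D\,q_{\max}^2\Delta w_{\min}^2 .
\end{align*}
The per-coordinate form has no factor $D$; I will state it per coordinate or absorb $D$ into $q_{\max}^2$. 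The hypothesis $\mu_q<1/(2\Delta w_{\min})$ makes $\rho:=1-2\mu_q\Delta w_{\min}\in(0,1)$. This is the only place that hypothesis enters.

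Next I unroll the recursion:
\begin{align*}
\mathbb{E}\|W_N-W^\diamond\|^2\le\rho^N\|W_0-W^\diamond\|^2+\frac{q_{\max}^2\Delta w_{\min}^2}{2\mu_q\Delta w_{\min}} .
\end{align*}
The stationary term equals $q_{\max}^2\Delta w_{\min}/(2\mu_q)$, which sits below the stated floor $\delta=2q_{\max}^2\Delta w_{\min}/\mu_q$. So whenever $\delta$ is at least that floor, the noise term is at most $\delta/2$. It then suffices that $\rho^N\|W_0-W^\diamond\|^2\le\delta/2$. Using $\rho^N\le e^{-2\mu_q\Delta w_{\min}N}$, this holds once $N\ge\frac{1}{2\mu_q\Delta w_{\min}}\log\frac{2\|W_0-W^\diamond\|^2}{\delta}$. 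That is the claimed pulse count, read as a sufficient bound: taking $N$ equal to this value suffices.

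The main obstacle is bookkeeping rather than anything conceptual. The constants in the stationary error have to match the stated floor, including the factor $D$ (or a per-coordinate normalization) and the sign convention that $G$ increases through $W^\diamond$. I will first verify $G'\ge\mu_q$ directly from the definition $G=(Q_--Q_+)/2$. If the paper intends $\mu_q$ to bound $G'$ only up to the factor $1/2$, I will carry that factor through; it changes the constants but not the $\Delta w_{\min}^{-1}\log(1/\delta)$ rate.
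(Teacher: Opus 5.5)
Your proposal is correct and follows essentially the same route as the paper. You expand the squared distance, kill the $F$ cross term with $\mathbb{E}[\epsilon_n]=0$, and use strong monotonicity of $G$ to get the factor $1-2\mu_q\Delta w_{\min}$. You then unroll the geometric sum, split the budget into $\delta/2+\delta/2$, and invert the logarithm, arriving (as the paper does) at the sufficient condition $N\ge\frac{1}{2\mu_q\Delta w_{\min}}\log\frac{2\|W_0-W^\diamond\|^2}{\delta}$, which confirms that the ``$N\le$'' in the statement is a typo.

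The differences are cosmetic. Your per-coordinate second-moment bound $\Delta w_{\min}^2q_{\max}^2$ is a factor $2$ tighter than the paper's $2\Delta w_{\min}^2q_{\max}^2$; the paper's requirement $\Delta w_{\min}q_{\max}^2/\mu_q\le\delta/2$ is exactly where the stated floor $2q_{\max}^2\Delta w_{\min}/\mu_q$ comes from. The paper's proof also silently drops the dimension factor $D$ you flag, so your per-coordinate normalization matches its constants.
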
 
From Theorem~\ref{theorem:ZS-linear-convergence}, we know that the zero-shifting Algorithm \ref{alg: ZS} achieves a sublinear dependence on the target error $\delta$, and that the required number of pulses scales linearly with $1/\Delta w_{\min}$. 
In practice, the response granularity $\Delta w_{\min}$ of AIMC device is made sufficiently small to ensure the gradient conversion precision \citep{rao2023thousands, sharma2024linear}, so that the computational complexity of Algorithm \ref{alg: ZS} is relatively high. 

% \LinearConvergencezeroshifting*
\begin{proof}[Theorem \ref{theorem:ZS-linear-convergence}]
We begin the proof by one step descent form stochastic updating equation \eqref{zero-shifting-stochastic} and taking expectation over $\varepsilon_n$ on both sides:
\begin{align}
\label{eq:sto_zeroshift_sc}
& \quad \mathbb{E}\!\left[\left\|W_{n+1}-W^{\diamond}\right\|^{2}\,|\,W_{n}\right] \nonumber\\
&=
\mathbb{E}\!\left[\left\|W_{n}-\Delta w_{\min} \varepsilon_n F(W_{n})-\Delta w_{\min} G(W_{n})-W^{\diamond}\right\|^{2}\,|\,W_{n}\right] \nonumber\\
&=
\left\|W_{n}-W^{\diamond}\right\|^{2}
-2\Delta w_{\min} \left\langle W_{n}-W^{\diamond},\, G(W_{n}) - G_w(W^{\diamond})\right\rangle
+\Delta w_{\min}^{2}\,\mathbb{E}\!\left[\left\|\varepsilon_n F(W_{n})+G(W_{n})\right\|^{2}\,|\,W_{n}\right] \nonumber\\
&\le (1-2\Delta w_{\min} \mu_q)\left\|W_{n}-W^{\diamond}\right\|^{2}+2\Delta w_{\min}^{2}q_{\max}^{2}.
\end{align}
The second equality holds for $\mathbb{E}\!\left[\varepsilon_n F(W_{n}) \right]=0 $ and $G_w(W^{\diamond})=0$. The inequality holds for the strongly convex assumptions in definition \ref{assumption:response-factor-optional} and bounded response functions in definition \ref{assumption:response-factor}:
\[\left\langle W_{n}-W^{\diamond},\,G(W_{n})-G_w(W^{\diamond})\right\rangle \ge \mu_q \left\|W_{n}-W^{\diamond}\right\|^{2},
\qquad
\mathbb{E}\!\left[\left\|\varepsilon_n F(W_{n})+G(W_{n})\right\|^{2}\,|\,W_{n}\right]\le 2q_{\max}^{2}.
\]
Taking total expectation \(\mathcal{F}_n\) to both sides of \eqref{eq:sto_zeroshift_sc} and iterating this recursion for \(n=0,\ldots,N-1\) gives:
  \begin{align}
\mathbb{E}\!\left[\left\|W_{N}-W^{\diamond}\right\|^{2}\right]
&\le (1-2\Delta w_{\min}\mu_q)^{N}\left\|W_{0}-W^{\diamond}\right\|^{2}
+2\Delta w_{\min}^{2}q_{\max}^{2}\sum_{n=0}^{N-1}(1-2\Delta w_{\min}\mu_q)^{n}\nonumber \\
&\le (1-2\Delta w_{\min}\mu_q)^{N}\left\|W_{0}-W^{\diamond}\right\|^{2}
+ \frac{\Delta w_{\min} q_{\max}^{2}}{\mu_q}.
  \end{align}
To ensure that the error bound is below a given tolerance $\delta$, we split the upper bound into two terms and require each of them to be at most $\frac{\delta}{2}$, which means $\frac{\Delta w_{\min} q_{\max}^{2}}{\mu_q}\le \frac{\delta}{2}$
and $ (1-2\mu_q\Delta w_{\min})^{N}\,\|W_{0}-W^{\diamond}\|^{2}\le \frac{\delta}{2}$.

The first inequality holds when $\Delta w_{\min} \le \frac{\delta \mu_q}{2q_{\max}^2}$.
Taking logarithms on both sides of the second inequality gives:
\begin{align}
    \log\!\left(\frac{2\|W_{0}-W^{\diamond}\|^{2}}{\delta}\right)
\le N\,\log\!\left(\frac{1}{1-2\mu_q\Delta w_{\min}}\right).
\end{align}
Using the inequality $\log(\frac{1}{p})\ge 1-p$, when \(\mu_q <\frac{1}{2 \Delta w_{\min}}\), it suffices to require
\begin{align}
    N \ge \frac{1}{2\mu_q\Delta w_{\min}}
\log\!\left(\frac{2\|W_{0}-W^{\diamond}\|^{2}}{\delta}\right)
\end{align}
which completes the proof.
\end{proof}

\subsection{Proof of Theorem \ref{theorem:ZS-convergence_cy}: Convergence rate of Algorithm \ref{alg: ZS}, cyclic version}
\label{section:proof-ZS-convergence-cy}
In this section, we model the empirical implementation of the zero-shifting technique, where cyclic alternating pulses are applied. We also establish a convergence guarantee analogous to Theorem~\ref{theorem:ZS-convergence}.
We first show the update dynamics below:
\begin{align}
\label{zero-shifting-cyclic}
W_{2n} 
&= W_{2n-1}
-\Delta w_{\min} F(W_{2n-1})
-\Delta w_{\min} G(W_{2n-1}), \nonumber\\
W_{2n+1}
&= W_{2n}
+\Delta w_{\min} F(W_{2n})
-\Delta w_{\min} G(W_{2n}).
\end{align}

\begin{restatable}[Convergence rate of Algorithm \ref{alg: ZS}, cyclic version]{theorem}{Convergencezeroshiftingcy}
    \label{theorem:ZS-convergence_cy}
    Considering the response functions satisfying Definition \ref{assumption:response-factor}, then when $N={\cal O}\left(\Delta w_{\min}^{-2}\kappa_2^{-2}\right)$, it holds that 
    \begin{align}\label{eq:bound_G_convergence_cy}
        &\frac{1}{2N}\sum_{n=0}^{2N-1} \|G(W_{n})\|^2\le 
        \frac{\varphi(W_{0})-\varphi^\ast}{2N\,\Delta w_{\min}}+\frac{3\,\Delta w_{\min}\,q_{\max}^{2}\,L_q}{2}
    \end{align}
    where $L_q$ is the gradient boundedness constant implied by Definition \ref{assumption:response-factor}. Therefore, the minimal achievable SP estimation error $\frac{1}{2N}\sum_{n=0}^{2N-1} \mathbb{E}\left[\|G(W_{n})\|^2\right]\le\delta$ is $\delta={\cal O}(\Delta w_{\min})$. To achieve error $\delta\geq{\cal O}(\Delta w_{\min})$, it requires $N={\cal O}(\frac{1}{\delta\Delta w_{\min}})$ number of pulses.  
\end{restatable}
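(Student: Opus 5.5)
We mirror the potential-function argument used for Theorem~\ref{theorem:ZS-convergence}, but apply it to a full up--down cycle $W_{2n-1}\to W_{2n}\to W_{2n+1}$. The potential is again $\varphi(W)=\int_{W^\diamond}^{W}G(W')dW'$ (coordinate-wise), so that $\nabla\varphi=G$ is $L_q$-Lipschitz. In the cyclic scheme there is no random sign to average out, so the first-order $F$ terms no longer vanish in expectation. Instead, the plan is to show that they nearly cancel across the two half-steps of a pair, leaving a residual of order $\Delta w_{\min}^2$.

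\textbf{Step 1 (one half-step descent).} For each half-step, apply the descent inequality
\[
\varphi(W_{m+1})\le\varphi(W_m)+\langle G(W_m),W_{m+1}-W_m\rangle+\tfrac{L_q}{2}\|W_{m+1}-W_m\|^2 ,
\]
together with $\|W_{m+1}-W_m\|\le\Delta w_{\min}q_{\max}$. Substituting \eqref{zero-shifting-cyclic}, each half-step contributes
\[
-\Delta w_{\min}\|G(W_m)\|^2 \pm\Delta w_{\min}\langle G(W_m),F(W_m)\rangle+\tfrac{L_q}{2}\Delta w_{\min}^2q_{\max}^2 ,
\]
with sign $-$ on the down-pulse and $+$ on the up-pulse.

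\textbf{Step 2 (cancellation over a pair).} Sum the down-step at $W_{2n-1}$ and the up-step at $W_{2n}$. The cross terms combine into
\[
\Delta w_{\min}\bigl(\langle G(W_{2n}),F(W_{2n})\rangle-\langle G(W_{2n-1}),F(W_{2n-1})\rangle\bigr).
\]
The map $W\mapsto G(W)\odot F(W)$ is Lipschitz, since $q_\pm$ are bounded by $q_{\max}$ and differentiable with derivatives bounded by $L_q$. Its Lipschitz constant is of order $L_q q_{\max}$. Combined with $\|W_{2n}-W_{2n-1}\|\le\Delta w_{\min}q_{\max}$, this makes the difference at most about $L_q q_{\max}^2\Delta w_{\min}^2$. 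So each pair loses $\Delta w_{\min}(\|G(W_{2n-1})\|^2+\|G(W_{2n})\|^2)$ and gains at most $c\,L_q q_{\max}^2\Delta w_{\min}^2$ with $c\approx 3$. This accounts for the constant $3/2$ per pulse in \eqref{eq:bound_G_convergence_cy}.

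\textbf{Step 3 (telescoping).} Sum over $n$ and use $\varphi\ge\varphi^\ast$. Dividing by $2N\Delta w_{\min}$ then gives \eqref{eq:bound_G_convergence_cy}, and the pulse-count conclusions follow as in Theorem~\ref{theorem:ZS-convergence}. The condition $N={\cal O}(\Delta w_{\min}^{-2}\kappa_2^{-2})$ enters in two places. It is needed if a boundary half-step (an unmatched first or last pulse) must be controlled. It is also needed if one instead bounds the cross terms via Young's inequality, absorbing part of $\|G\|^2$ and using $q_{\min}$ to relate $F$ and $G$.

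\textbf{Main obstacle.} The main difficulty is Step 2. Bounding $\langle G,F\rangle$ directly gives only an ${\cal O}(\Delta w_{\min})$ error per pulse, which would destroy the rate. The paired-difference argument is what reduces this error to ${\cal O}(\Delta w_{\min}^2)$. I would first try the Lipschitz difference argument as described, and fall back on Young's inequality plus the horizon restriction on $N$ if the constants do not close.
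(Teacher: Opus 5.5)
Your proposal is correct and is essentially the paper's proof: the same potential $\varphi$ with $\nabla\varphi=G$, one $L_q$-smoothness step per pulse, cancellation of the alternating $\pm\Delta w_{\min}\langle G,F\rangle$ terms across consecutive pulses via the Lipschitz bound $2q_{\max}^2L_q\Delta w_{\min}$ on the difference of $\langle G,F\rangle$ (giving $3L_q q_{\max}^2\Delta w_{\min}^2$ per pair), and telescoping. One small adjustment: pair the up-pulse from $W_{2n}$ with the down-pulse from $W_{2n+1}$, as the paper does, so the $2N$ pulses split into exactly $N$ pairs with no unmatched boundary step; the argument then never uses the restriction $N={\cal O}(\Delta w_{\min}^{-2}\kappa_2^{-2})$, and your Young's-inequality fallback is unnecessary (it would in any case leave an ${\cal O}(\Delta w_{\min})$ error per pulse, as you yourself note).
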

% \Convergencezeroshiftingcy*
\begin{proof}[Theorem \ref{theorem:ZS-convergence_cy}]
We begin the proof by using the same notations defined in equation \eqref{eq:notations}.
Under gradient boundedness implied by Definition \ref{assumption:response-factor}, it holds that:
\begin{align}
\label{eq:cyc1}
\varphi(W_{n+1})
&\le \varphi(W_{n})
+ \left\langle \nabla \varphi(W_{n}),\, W_{n+1}-W_{n}\right\rangle
+ \frac{L_q}{2}\left\|W_{n+1}-W_{n}\right\|^{2} \nonumber\\
&= \varphi(W_{n})
+ \Delta w_{\min}\left\langle G(W_{n}),\, F(W_{n})-G(W_{n})\right\rangle
+ \frac{L_q}{2}\Delta w_{\min}^{2}\left\|F(W_{n})-G(W_{n})\right\|^{2} \nonumber\\
&\le \varphi(W_{n})
-\Delta w_{\min}\left\|G(W_{n})\right\|^{2}
+\Delta w_{\min}\left\langle G(W_{n}),\,F(W_{n})\right\rangle
+\frac{L_q}{2}\Delta w_{\min}^{2}q_{\max}^{2}. 
\end{align}
The second equality holds by substituting the stochastic updating equation \eqref{zero-shifting-cyclic}. The inequality holds for $F(W_{n})-G(W_{n}) = \frac{q_{+}(W_{n}) + q_{-}(W_{n})}{2}-\frac{q_{+}(W_{n}) - q_{-}(W_{n})}{2} = q_{+}(W_{n}) \le q_{\max}$. Similarly we have:
\begin{align}
\label{eq:cyc2}
\varphi(W_{n+2})
&\le \varphi(W_{n+1})
+ \Delta w_{\min}\left\langle G(W_{n+1}),\, -F(W_{n+1})-G(W_{n+1})\right\rangle
+ \frac{L_q}{2}\Delta w_{\min}^{2}q_{\max}^{2} \nonumber\\
&= \varphi(W_{n+1})
-\Delta w_{\min}\left\|G(W_{n+1})\right\|^{2}
-\Delta w_{\min}\left\langle G(W_{n+1}),\,F(W_{n+1})\right\rangle
+\frac{L_q}{2}\Delta w_{\min}^{2}q_{\max}^{2}. 
\end{align}
Combining \eqref{eq:cyc1} and \eqref{eq:cyc2}, we get: 
\begin{align}
\label{eq:cyc}
\varphi(W_{n+2})
&\le \varphi(W_{n})
-\Delta w_{\min}\left\|G(W_{n})\right\|^{2}
-\Delta w_{\min}\left\|G(W_{n+1})\right\|^{2} \nonumber\\
&\quad
+\Delta w_{\min}\Big(
\left\langle G(W_{n}),\,F(W_{n})\right\rangle
-
\left\langle G(W_{n+1}),\,F(W_{n+1})\right\rangle
\Big)
+ L_q\,\Delta w_{\min}^{2}q_{\max}^{2}\nonumber \\
&\le
\varphi(W_{n})
-\Delta w_{\min}\|G(W_{n})\|^{2}
-\Delta w_{\min}\|G(W_{n+1})\|^{2}
+3\,\Delta w_{\min}^{2}\,q_{\max}^{2}\,L_q .
\end{align}
The second inequality holds for 
\begin{align}
&\quad\left\langle G(W_{n}),\,F(W_{n})\right\rangle
-
\left\langle G(W_{n+1}),\,F(W_{n+1})\right\rangle  \nonumber \\
&=
\left\langle G(W_{n}),\,F(W_{n})-F(W_{n+1})\right\rangle
-\left\langle G(W_{n+1})-G(W_{n}),\,F(W_{n+1})\right\rangle \nonumber \\
&\le
\|G(W_{n})\|\,\|F(W_{n})-F(W_{n+1})\|
+\|G(W_{n+1})-G(W_{n})\|\,\|F(W_{n+1})\| \nonumber  \\
&\le 2 q_{\max} L_q \,\|W_{n+1}-W_{n}\| \nonumber \\
&\le 2 q_{\max} L_q \,\Delta w_{\min}\, q_{\max}.
\end{align}
By summing the recursion in Eq.~\eqref{eq:cyc} over the iterations, we obtain:
\begin{align}
\varphi(W_{2N})
&\le \varphi(W_{0})
-\sum_{n=0}^{2N-1}\Delta w_{\min}\,\|G(W_{n})\|^{2}
+3\,\Delta w_{\min}^{2}\,q_{\max}^{2}\,L_q N .
\end{align}
Taking average, we get:
\begin{align}
\frac{1}{2N}\sum_{n=0}^{2N-1}\|G(W_{n})\|^{2}
&\le
\frac{\varphi(W_{0})-\varphi^\ast}{2N\,\Delta w_{\min}}
+\frac{3\,\Delta w_{\min}\,q_{\max}^{2}\,L_q}{2}.
\end{align}
\end{proof}

\subsection{Proof of Theorem \ref{theorem:ZS-linear-convergence_cy}: Last-iterate convergence of Algorithm \ref{alg: ZS}, cyclic version}
\label{section:proof-ZS-linear-convergence-cy}

\begin{restatable}[Last-iterate convergence of Algorithm \ref{alg: ZS}, cyclic version]{theorem}{LinearConvergencezeroshiftingcy}
    \label{theorem:ZS-linear-convergence_cy}
    Considering the response functions satisfying Definition \ref{assumption:response-factor} and \ref{assumption:response-factor-optional} and assuming $\mu_q <\frac{1}{3\Delta w_{\min}}$, then it holds that 
    \begin{align*}
    (1-2\Delta w_{\min}\mu_q)^{2N}\,\|W_{0}-W^\diamond\|^{2}
+\frac{\Delta w_{\min}q_{\max}^{2}\left(4+\frac{L_q^{2}}{\mu_q}+2\mu_q\right)}{\mu_q}. 
    \end{align*}
    Moreover, the minimal achievable error $\|W^{2N}-W^\diamond\|^2\le \delta$ is $\delta ={\cal O}(\Delta w_{\min})$, and to achieve $\|W^{2N}-W^\diamond\|^2\le \delta$ with $\delta\geq {\cal O}(\Delta w_{\min})$, one need the number of pulses satisfying 
    \begin{align*}
    N\geq \frac{1}{4\mu_q\Delta w_{\min}}\log\left(\frac{2\|W^0-W^\diamond\|^2}{\delta}\right). 
    \end{align*} 
\end{restatable}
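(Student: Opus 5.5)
The plan is to treat each up--down pair of pulses in \eqref{zero-shifting-cyclic} as one contraction step toward $W^\diamond$ and then unroll over $N$ pairs, as in the proof of Theorem~\ref{theorem:ZS-linear-convergence}. The only new issue is that the zero-mean symmetric term $\varepsilon_n F(W_n)$, which vanished in expectation there, now cancels only approximately inside each pair.

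\textbf{Step 1: one pair of pulses.} Starting from $W_{2n}$, the next two iterates satisfy
\begin{align*}
W_{2n+2}-W^\diamond
&= W_{2n}-W^\diamond-\Delta w_{\min}\big(G(W_{2n})+G(W_{2n+1})\big)\\
&\quad+\Delta w_{\min}\big(F(W_{2n})-F(W_{2n+1})\big).
\end{align*}
I will write $G(W_{2n+1})=G(W_{2n})+\big(G(W_{2n+1})-G(W_{2n})\big)$. The increments satisfy $\|W_{2n+1}-W_{2n}\|\le \Delta w_{\min}q_{\max}$, since each single step changes $W$ by at most $\Delta w_{\min}q_{\max}$. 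Using the Lipschitz constant $L_q$ of $F$ and $G$ (from differentiability and boundedness in Definition~\ref{assumption:response-factor}), the remainder satisfies
\[
r_n:=\Delta w_{\min}\big(F(W_{2n})-F(W_{2n+1})\big)-\Delta w_{\min}\big(G(W_{2n+1})-G(W_{2n})\big),\qquad \|r_n\|\le 2L_q q_{\max}\Delta w_{\min}^2 .
\]
Hence
\[
W_{2n+2}-W^\diamond=W_{2n}-W^\diamond-2\Delta w_{\min}\,G(W_{2n})+r_n .
\]

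\textbf{Step 2: contraction via strong monotonicity.} Expand the squared norm. The cross term with $G$ is handled with $G(W^\diamond)=0$ and Definition~\ref{assumption:response-factor-optional}:
\[
\langle W_{2n}-W^\diamond,\,G(W_{2n})\rangle\ge \mu_q\|W_{2n}-W^\diamond\|^2 .
\]
This gives a factor $1-4\Delta w_{\min}\mu_q$. I will then absorb part of it into $(1-2\Delta w_{\min}\mu_q)^2$ per pair, which is where the condition $\mu_q<1/(3\Delta w_{\min})$ keeps the factor positive.

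The cross term with $r_n$ is handled by Young's inequality:
\[
2\langle W_{2n}-W^\diamond, r_n\rangle\le \Delta w_{\min}\mu_q\|W_{2n}-W^\diamond\|^2+\frac{\|r_n\|^2}{\Delta w_{\min}\mu_q}.
\]
The quadratic terms $4\Delta w_{\min}^2\|G\|^2\le 4\Delta w_{\min}^2 q_{\max}^2$ and $\|r_n\|^2$ are constants of order $\Delta w_{\min}^2$. The result should be
\[
\|W_{2n+2}-W^\diamond\|^2\le(1-2\Delta w_{\min}\mu_q)^2\|W_{2n}-W^\diamond\|^2+c\,\Delta w_{\min}^2 q_{\max}^2,
\]
with $c$ built from $4$, $L_q^2/\mu_q$ and $2\mu_q$. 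This matches the constant $4+\tfrac{L_q^2}{\mu_q}+2\mu_q$ in the statement.

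\textbf{Step 3: unrolling.} Iterating over $N$ pairs and summing the geometric series, $\sum_j(1-2\Delta w_{\min}\mu_q)^{2j}\le 1/(2\Delta w_{\min}\mu_q)$ up to constants. This yields the displayed bound, with the first term decaying as $(1-2\Delta w_{\min}\mu_q)^{2N}$ and the second term of order $\Delta w_{\min}q_{\max}^2(\cdots)/\mu_q$. The minimal error is therefore $\mathcal O(\Delta w_{\min})$.

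For the pulse count, require each term to be at most $\delta/2$. Taking logarithms and using $\log\tfrac{1}{1-x}\ge x$ gives
\[
2N\cdot 2\mu_q\Delta w_{\min}\ge\log\big(2\|W_0-W^\diamond\|^2/\delta\big),
\]
which is the stated bound on $N$.

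\textbf{Main obstacle.} The delicate step is Step 2: the Young split of the $r_n$ cross term must leave a clean contraction factor in the form $(1-2\Delta w_{\min}\mu_q)^2$ while keeping the error constant exactly at order $\Delta w_{\min}$ after division by $\Delta w_{\min}\mu_q$. I will tune the Young weight to $\Delta w_{\min}\mu_q$ first and adjust the numeric constants if needed.
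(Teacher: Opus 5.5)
\textbf{Gap in Step 2.} Your Step~1 is correct, including $\|r_n\|\le 2L_q q_{\max}\Delta w_{\min}^2$. Like the paper, you implicitly assume $|q_\pm'|$ is bounded so that $F,G$ are $L_q$-Lipschitz; differentiability plus bounded values does not give this. The gap is in Step~2, and numeric constants will not fix it. Write $x=\Delta w_{\min}\mu_q$ and $a=W_{2n}-W^\diamond$. The $G$ cross term gives $(1-4x)\|a\|^2$. Since $(1-2x)^2=1-4x+4x^2$, the only slack left for absorbing the Young term is the second-order quantity $4x^2$. With your weight $x$, the coefficient of $\|a\|^2$ becomes $1-3x$, and $(1-2x)^2-(1-3x)=x(4x-1)<0$ for $x<1/4$. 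So the per-pair recursion with factor $(1-2\Delta w_{\min}\mu_q)^2$ does not follow. You would only get $(1-3\Delta w_{\min}\mu_q)^N$, hence $N\ge\frac{1}{3\mu_q\Delta w_{\min}}\log(\cdot)$, not the stated $\frac{1}{4\mu_q\Delta w_{\min}}\log(\cdot)$.

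\textbf{Repair and comparison with the paper.} The fix stays within your framework but changes the \emph{order} of the weight: use $2\langle a,r_n\rangle\le 4x^2\|a\|^2+\|r_n\|^2/(4x^2)$. Because $\|r_n\|=\mathcal{O}(\Delta w_{\min}^2)$, the penalty $\|r_n\|^2/(4x^2)\le L_q^2q_{\max}^2\Delta w_{\min}^2/\mu_q^2$ is still $\mathcal{O}(\Delta w_{\min}^2)$ per pair. Combine this with $\|W_{2n+2}-W_{2n}\|^2\le 4\Delta w_{\min}^2q_{\max}^2$ and $\sum_{j=0}^{N-1}(1-2x)^{2j}\le 1/(4x(1-x))$ to get
\[
\|W_{2N}-W^\diamond\|^2\le(1-2\Delta w_{\min}\mu_q)^{2N}\|W_0-W^\diamond\|^2+\frac{\Delta w_{\min}q_{\max}^2}{4\mu_q(1-\Delta w_{\min}\mu_q)}\Big(4+\frac{L_q^2}{\mu_q^2}\Big).
\]
This gives the stated contraction, the $\mathcal{O}(\Delta w_{\min})$ floor and the stated pulse count. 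The hypothesis $\mu_q<1/(3\Delta w_{\min})$ is now used only to keep $1-2x>0$ and $1/(1-x)\le 3/2$. The constant, however, has $L_q^2/\mu_q^2$ where the statement has $L_q^2/\mu_q+2\mu_q$, so your claim that it ``matches'' the display is unsupported. Do not try to force it. The paper expands each pulse separately and bounds the nearly cancelling $\mp 2\Delta w_{\min}\langle W-W^\diamond,F(W)\rangle$ terms by Young. After loosening, this costs $\Delta w_{\min}\mu_q\|W_n-W^\diamond\|^2$, the same first-order cost as your weight. It then uses $\mu_q<1/(3\Delta w_{\min})$ exactly to get $(1-2x)^2+x\le(1-x)^2$. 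Its chain therefore yields only $(1-\Delta w_{\min}\mu_q)^2$ per pair. The displayed $(1-2\Delta w_{\min}\mu_q)^{2N}$ and $\frac{1}{4\mu_q\Delta w_{\min}}$ are thus not supported by the paper's own argument either; they would become $(1-\Delta w_{\min}\mu_q)^{2N}$ and $\frac{1}{2\mu_q\Delta w_{\min}}$. The obstacle you flagged is the real weak point. Your pair-level decomposition with a second-order Young weight is the route that actually delivers the stated rate.
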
 
% \Convergencezeroshiftingcy*
\begin{proof}[Theorem \ref{theorem:ZS-linear-convergence_cy}]
We begin the proof by one step descent form cyclic updating equation \eqref{zero-shifting-cyclic}
\begin{align}
\label{eq:cyc1_sc}
\|W_{n+1}-W^\diamond\|^{2}
&=\|W_{n}-\Delta w_{\min} F(W_{n})-\Delta w_{\min} G(W_{n})-W^\diamond\|^{2} \notag\\
&=\|W_{n}-W^\diamond\|^{2}
-2\Delta w_{\min}\langle W_{n}-W^\diamond,\,G(W_{n})-G(W^\diamond)\rangle
+\Delta w_{\min}^{2}\,\|F(W_{n})+G(W_{n})\|^{2} \bkeq
-2\Delta w_{\min}\langle W_{n}-W^\diamond,\,F(W_{n})\rangle \notag\\
% &\le \|W_{n}-W^\diamond\|^{2}
% -2\Delta w_{\min}\mu_q\|W_{n}-W^\diamond\|^{2}
% +2\Delta w_{\min}^{2}q_{\max}^{2}
% -2\Delta w_{\min}\langle W_{n}-W^\diamond,\,F(W_{n})\rangle \notag\\
&\le (1-2\Delta w_{\min}\mu_q)\|W_{n}-W^\diamond\|^{2}
+2\Delta w_{\min}^{2}q_{\max}^{2}
-2\Delta w_{\min}\langle W_{n}-W^\diamond,\,F(W_{n})\rangle. 
\end{align}
The second equality holds for  $G_w(W^{\diamond})=0$. The inequality holds for the strongly convex assumptions in definition \ref{assumption:response-factor-optional} and bounded response functions in definition \ref{assumption:response-factor}:
\[\left\langle W_{n}-W^{\diamond},\,G(W_{n})-G_w(W^{\diamond})\right\rangle \ge \mu_q \left\|W_{n}-W^{\diamond}\right\|^{2},
\qquad
\left\|F(W_{n})+G(W_{n})\right\|^{2}\le 2q_{\max}^{2}.
\]
Similarly we have:
\begin{align}
\label{eq:cyc2_sc}
\|W_{n+2}-W^\diamond\|^{2}
&=\|W_{n+1}+\Delta w_{\min} F(W_{n+1})-\Delta w_{\min} G(W_{n+1})-W^\diamond\|^{2} \notag\\
&\le (1-2\Delta w_{\min}\mu_q)\|W_{n+1}-W^\diamond\|^{2}
+2\Delta w_{\min}^{2}q_{\max}^{2}
+2\Delta w_{\min}\langle W_{n+1}-W^\diamond,\,F(W_{n+1})\rangle. 
\end{align}
The last term in \eqref{eq:cyc1_sc} and \eqref{eq:cyc2_sc} can be bounded by:
\begin{align}
\label{eq:cyc3_sc}
& \quad 2\Delta w_{\min} \langle W_{n}-W^\diamond,\;F(W_{n+1})\rangle
 -\,2\Delta w_{\min} \langle W_{n}-W^\diamond,\;F(W_{n})\rangle\,(1-2\Delta w_{\min}\mu_q) \nonumber\\
&\le 2\Delta w_{\min} \langle W_{n}-W^\diamond,\;F(W_{n+1})-F(W_{n})\rangle
+4\Delta w_{\min}^{2}\mu_q \|W_{n}-W^\diamond\|\,\|F(W_{n})\|\nonumber \\
&\le \Delta w_{\min}^2 \mu_q \|W_{n}-W^\diamond\|^{2}
+\frac{\|F(W_{n+1})-F(W_{n})\|^{2}}{\mu_q}
+4\Delta w_{\min}^{2}\mu_q \|W_{n}-W^\diamond\|\,q_{\max} \nonumber \\
&\le \Delta w_{\min}^2  \mu_q \|W_{n}-W^\diamond\|^{2}
+\frac{L_q^{2}\Delta w_{\min}^2  q_{\max}^2}{\mu_q}
+2\Delta w_{\min}^{2}\mu_q \|W_{n}-W^\diamond\|^{2}
+2\Delta w_{\min}^{2}\mu_q q_{\max}^{2} \nonumber\\
&\le 3\Delta w_{\min}^2  \mu_q \|W_{n}-W^\diamond\|^{2}
+\frac{L_q^{2}\Delta w_{\min}^{2}q_{\max}^{2}}{\mu_q}
+2\Delta w_{\min}^{2}\mu_q q_{\max}^{2} \nonumber\\
&\le \Delta w_{\min}\mu_q \|W_{n}-W^\diamond\|^{2}
+\frac{L_q^{2}\Delta w_{\min}^{2}q_{\max}^{2}}{\mu_q}
+2\Delta w_{\min}^{2}\mu_q q_{\max}^{2}.
\end{align}
The first inequality follows by expanding the term and applying Cauchy–Schwarz. The second and fourth inequalities use Young’s inequality together with the bound \(|F(W_n)|\le q_{\max}\). The last step holds under the step-size condition \(3\Delta w_{\min}^2\mu_q \le \Delta w_{\min}\mu_q\).
Combining \eqref{eq:cyc1_sc} and \eqref{eq:cyc2_sc} and substituting \eqref{eq:cyc3_sc}, we get:
\begin{align}
    \|W_{n+2}-W^\diamond\|^{2}
&\le (1-2\Delta w_{\min}\mu_q)^{2}\,\|W_{n}-W^\diamond\|^{2}
+4\Delta w_{\min}^{2}q_{\max}^{2}
+\frac{L_q^{2}\Delta w_{\min}^{2}q_{\max}^{2}}{\mu_q} \bkeq
+\Delta w_{\min}\mu_q\,\|W_{n}-W^\diamond\|^{2}
+2\Delta w_{\min}^{2}\mu_q q_{\max}^{2} \nonumber\\
& \le (1-\Delta w_{\min}\mu_q)^{2}\,\|W_{n}-W^\diamond\|^{2}
+\Delta w_{\min}^{2}q_{\max}^{2}\left(4+\frac{L_q^{2}}{\mu_q}+2\mu_q\right).
\end{align}
Iterating the recursion for \(n=0,\ldots,2N-1\) gives:
\begin{align}
    \|W_{2N}-W^\diamond\|^{2}
&\le (1-2\Delta w_{\min}\mu_q)^{2N}\,\|W_{0}-W^\diamond\|^{2}
+\sum_{n=0}^{2N-1}(1-2\Delta w_{\min}\mu_q)^{n}\Delta w_{\min}^{2}q_{\max}^{2}\left(4+\frac{L_q^{2}}{\mu_q}+2\mu_q\right) \nonumber \\
&\le (1-2\Delta w_{\min}\mu_q)^{2N}\,\|W_{0}-W^\diamond\|^{2}
+\frac{\Delta w_{\min}q_{\max}^{2}\left(4+\frac{L_q^{2}}{\mu_q}+2\mu_q\right)}{\mu_q}.
\end{align}
% To ensure that the error bound is below a given tolerance $\delta$, we split the upper bound into two terms and require each of them to be at most $\frac{\delta}{2}$, which means $\frac{\Delta w_{\min}q_{\max}^{2}\left(4+\frac{L_q^{2}}{\mu_q}+2\mu_q\right)}{\mu_q} \le \frac{\delta}{2}$
% and $ (1-2\mu_q\Delta w_{\min})^{2N}\,\|W_{0}-W^{\diamond}\|^{2}\le \frac{\delta}{2}$.

% The first inequality holds when $\delta\geq \frac{2q_{\max}^2\left(4+\frac{L_q^{2}}{\mu_q}+2\mu_q\right)\Delta w_{\min}}{\mu_q}$.
% Taking logarithms on both sides of the second inequality gives
% \begin{align}
%     \log\!\left(\frac{2\|W_{0}-W^{\diamond}\|^{2}}{\delta}\right)
% \le 2N\,\log\!\left(\frac{1}{1-2\mu_q\Delta w_{\min}}\right).
% \end{align}
% Using the inequality $\log(\frac{1}{p})\ge 1-p$, when \(\mu_q \le\frac{1}{2 \Delta w_{\min}}\), it suffices to require
% \begin{align}
%     N \ge \frac{1}{4\mu_q\Delta w_{\min}}
% \log\!\left(\frac{2\|W_{0}-W^{\diamond}\|^{2}}{\delta}\right)
% \end{align}
% which completes the proof.

To ensure the total error remains within the tolerance $\delta$, we bound each term in the upper bound by $\delta/2$. This yields the following two sufficient conditions:
\begin{align}
\frac{\Delta w_{\min}q_{\max}^{2}\left(4+\frac{L_q^{2}}{\mu_q}+2\mu_q\right)}{\mu_q} \le \frac{\delta}{2} \quad \text{and} \quad (1-2\mu_q\Delta w_{\min})^{2N}\|W_{0}-W^{\diamond}\|^{2}\le \frac{\delta}{2}.
\end{align}
The first condition implies a lower bound on the achievable tolerance, requiring $\delta\geq \frac{2q_{\max}^2\left(4+L_q^{2}/\mu_q+2\mu_q\right)\Delta w_{\min}}{\mu_q}$.
For the second condition, rearranging terms and taking the logarithm yields
\begin{align}
\log\left(\frac{2\|W_{0}-W^{\diamond}\|^{2}}{\delta}\right)
\le 2N\log\left(\frac{1}{1-2\mu_q\Delta w_{\min}}\right).
\end{align}
Applying the inequality $\log(1/x)\ge 1-x$ (valid for $x \in (0,1]$) with $x=1-2\mu_q\Delta w_{\min}$, we observe that the right-hand side is lower-bounded by $2N(2\mu_q\Delta w_{\min}) = 4N\mu_q\Delta w_{\min}$. Therefore, to satisfy the inequality, it suffices to set
\begin{align}
N \ge \frac{1}{4\mu_q\Delta w_{\min}}
\log\left(\frac{2\|W_{0}-W^{\diamond}\|^{2}}{\delta}\right),
\end{align}
which completes the proof.
\end{proof}

\section{Proof of Auxiliary Lemmas}\label{sec:proof_lemmas_aux}

\subsection{Proof of Lemma \ref{lemma:intuition_MA}}\label{sec:proof_lemma_intuition}
\begin{proof}
According to the update of $Q_k$ in \eqref{recursion:HD-Q}, we know 
\begin{align*}
Q_{k+1}-W^\diamond=(1-\eta)(Q_k-W^\diamond)+\eta(P_{k+1}-W^\diamond).
\end{align*}
Taking square norm of both sides yield 
\begin{align*}
\|Q_{k+1}-W^\diamond\|^2&=(1-\eta)^2\|Q_k-W^\diamond\|^2+\eta^2\|P_{k+1}-W^\diamond\|^2+2\eta(1-\eta)\langle P_{k+1}-W^\diamond,Q_k-W^\diamond\rangle\\
&=(1-\eta)^2\|Q_k-W^\diamond\|^2+\eta^2\|P_{k+1}-W^\diamond\|^2\\
&~~~~~+\eta(1-\eta)\|P_{k+1}-W^\diamond\|^2+\eta(1-\eta)\|Q_k-W^\diamond\|^2-\eta(1-\eta)\|P_{k+1}-Q_k\|^2\\
&=(1-\eta)\|Q_k-W^\diamond\|^2+\eta\|P_{k+1}-W^\diamond\|^2-\eta(1-\eta)\|P_{k+1}-Q_k\|^2\numberthis\label{eq:Q-W}
\end{align*}
where the second equation is because $\langle A,B\rangle=\frac{1}{2}\|A\|^2+\frac{1}{2}\|B\|^2-\frac{1}{2}\|A-B\|^2$. On the other hand, letting $\cos(P_{k+1}-W^\diamond,P_{k+1}-Q_k):=\cos\theta$, we have 
\begin{align}\label{eq:angle}
\|Q_k-W^\diamond\|^2=\|P_{k+1}-W^\diamond\|^2+\|P_{k+1}-Q_k\|^2-2\|P_{k+1}-W^\diamond\|\|P_{k+1}-Q_k\|\cos\theta.  
\end{align}
Plugging \eqref{eq:angle} into \eqref{eq:Q-W}, we get 
\begin{align*}
\|Q_{k+1}-W^\diamond\|^2&=\|P_{k+1}-W^\diamond\|^2+(1-\eta)^2\|P_{k+1}-Q_k\|^2-2(1-\eta)\|P_{k+1}-W^\diamond\|\|P_{k+1}-Q_k\|\cos\theta
\end{align*}
Since $\cos\theta>0$ implies $\|P_{k+1}-Q_k\|\neq 0$, then choosing $1>\eta>\max\left\{1-\frac{2\|P_{k+1}-W^\diamond\|\cos\theta}{\|P_{k+1}-Q_k\|},0\right\}$ yields 
\begin{align*}
\|Q_{k+1}-W^\diamond\|^2&<\|P_{k+1}-W^\diamond\|^2. 
\end{align*} 
\end{proof}

\subsection{Proof of Lemma \ref{lemma:DSP}}\label{sec:proof_DSP}
\begin{proof}
From the digital signal processing perspective, the moving average update \eqref{recursion:HD-Q}
defines a stable first-order infinite impulse response (IIR) low-pass filter from $P_k$ to $Q_k$.
Taking the $z$-transform of \eqref{recursion:HD-Q} yields 
\begin{align*}
Q(z)=(1-\eta)z^{-1}Q(z)+\eta P(z),
\end{align*}
which gives the filter's transfer function  
\begin{equation}
H(z)\triangleq \frac{Q(z)}{P(z)}=\frac{\eta}{1-(1-\eta)z^{-1}}.
\end{equation}
The transfer function has a single pole at $z=1-\eta$ with the following magnitude of the frequency response 
\begin{align}
&|H(e^{j\omega})|^2=\frac{\eta^2}{1+(1-\eta)^2-2(1-\eta)\cos\omega}.
\end{align}
\end{proof}

\section{Proof of Theorem \ref{theorem:TT-convergence-scvx}: Convergence of Algorithm \ref{alg: DSPT}}
\label{section:proof-TT-convergence-scvx}
This section provides the convergence analysis details of the Algorithm \ref{alg: DSPT} under the strongly convexity condition.
\ThmTTConvergenceScvx*

\subsection{Main proof}
\begin{proof}[Theorem \ref{theorem:TT-convergence-scvx}]
Define the following notations $M_w(W_k) := Q_+(W_k) \odot Q_-(W_k)\in\reals^D$, $M_p(P_k) := Q_+(P_k) \odot Q_-(P_k)\in\reals^D$.
The proof of the RIDER convergence relies on the following two lemmas, which provide the sufficient descent of $W_k$ and $\bar W_k$, respectively.
\begin{restatable}[Descent lemma of lower-level problem]{lemma}{LemmaTTbarWDescentScvx}
    \label{lemma:TT-barW-descent}
    Under Assumptions \ref{assumption:Lip}-\ref{assumption:noise}, it holds that
    \begin{align}
        \label{inequality:TT-barW-descent}
        \restateeq{inequality-saved:TT-barW-descent}
    \end{align}
    Keeping $W_k$ fixed, we also have
    \begin{align}
        \label{inequality:TT-P-in-barW-descent}
        \restateeq{inequality-saved:TT-P-in-barW-descent}
    \end{align}
\end{restatable}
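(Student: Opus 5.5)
The plan is to use the standard $L$-smoothness descent inequality (Assumption \ref{assumption:Lip}) on $f$ along the mixed iterate $\bar W_k = W_k+\gamma(P_k-Q_k)$. I would prove the second inequality \eqref{inequality:TT-P-in-barW-descent} first, since it only moves $P$. Then I would obtain \eqref{inequality:TT-barW-descent} by adding the contributions of the $W$ and $Q$ moves.

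\textbf{Step 1 ($P$-only move).} With $W_k$ and $Q_k$ frozen, the point $W_k+\gamma(P_{k+1}-Q_k)$ differs from $\bar W_k$ by $\gamma(P_{k+1}-P_k)$. Smoothness gives
\begin{align*}
f(W_k+\gamma(P_{k+1}-Q_k)) \le f(\bar W_k) + \gamma\langle \nabla f(\bar W_k), P_{k+1}-P_k\rangle + \tfrac{L\gamma^2}{2}\|P_{k+1}-P_k\|^2 .
\end{align*}
I substitute \eqref{recursion:HD-P} and take $\mathbb{E}_{\xi_k}$ and the expectation over $b_k$. Since $\mathbb{E}[b_k]=0$ (Assumption \ref{ass:discretization}), $b_k$ drops from the linear term. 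The signed part $-\alpha\gamma\langle\nabla f(\bar W_k),\nabla f(\bar W_k;\xi_k)\odot F_p(P_k)\rangle$ becomes $-\alpha\gamma\|\nabla f(\bar W_k)\|^2_{F_p(P_k)}$ by unbiasedness (Assumption \ref{assumption:noise}). Here $F_p\ge q_{\min}>0$, so this is a genuine weighted-norm descent; Lemma \ref{lemma:properties-weighted-norm} converts it to the Euclidean norm when needed.

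\textbf{Step 2 (asymmetric term).} The linear term also contains $-\alpha\gamma\langle \nabla f(\bar W_k), \mathbb{E}|\nabla f(\bar W_k;\xi_k)|\odot G_p(P_k)\rangle$, which has no sign. This is the main obstacle: it is exactly where the nonzero SP enters. I would split it with Young's inequality into $\tfrac{\alpha\gamma}{2}\|\nabla f(\bar W_k)\|^2_{F_p}$, which absorbs half the descent, plus $\tfrac{\alpha\gamma}{2}\|\mathbb{E}|\nabla f(\bar W_k;\xi_k)|\odot G_p(P_k)\|^2_{F_p^\dag}$. I keep the latter explicitly as a $\|G_p(P_k)\|^2$-type error; the later Lyapunov argument cancels it using the SP-attraction term and the condition on $C_\star$. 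A sharper alternative is to use $|x|\le |x-\nabla f|+|\nabla f|$ coordinatewise. That bounds this term by $\sigma\|G_p(P_k)\|$-type quantities plus a piece controlled by $\|\nabla f(\bar W_k)\|\|G_p(P_k)\|$, which is where a $\sigma/\mu$ ratio like that in Theorem \ref{theorem:TT-convergence-scvx} would arise.

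\textbf{Step 3 (quadratic term).} For the quadratic term I use Lemma \ref{lemma:lip-analog-update} and the bounds $|F_p|,|G_p|\le q_{\max}$ together with the independence of $b_k$. This gives
\begin{align*}
\mathbb{E}\|P_{k+1}-P_k\|^2\le 2\alpha^2 q_{\max}^2(\|\nabla f(\bar W_k)\|^2+\sigma^2)+\Theta(\alpha\Delta w_{\min}).
\end{align*}
For $\alpha\gamma L$ small, the $\alpha^2$ gradient part is absorbed into the remaining descent. What is left is $\mathcal{O}(L\gamma^2\alpha^2\sigma^2)+\Theta(L\gamma^2\alpha\Delta w_{\min})$, which proves \eqref{inequality:TT-P-in-barW-descent}.

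\textbf{Step 4 (full move).} For \eqref{inequality:TT-barW-descent} I write
\begin{align*}
\bar W_{k+1}-(W_k+\gamma(P_{k+1}-Q_k)) = (W_{k+1}-W_k)-\gamma(Q_{k+1}-Q_k).
\end{align*}
By \eqref{recursion:HD-W}, $\|W_{k+1}-W_k\|\le \beta q_{\max}\|P_{k+1}-Q_k\|+\|b'_k\|$. By \eqref{recursion:HD-Q}, $Q_{k+1}-Q_k=\eta(P_{k+1}-Q_k)$. Applying smoothness once more at $W_k+\gamma(P_{k+1}-Q_k)$ and using Young's inequality on the inner product then produces extra error terms of order $(\beta+\gamma\eta)\|P_{k+1}-Q_k\|$ times the gradient, plus quadratic terms. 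These are left in the statement as $\mathcal{O}((\beta^2+\gamma^2\eta^2)/\alpha\gamma)\,\|P_{k+1}-Q_k\|^2$-type terms, to be controlled later by the choices $\beta=\Theta(\alpha\gamma\mu)$ and $\eta=\Theta(\alpha\mu)$.
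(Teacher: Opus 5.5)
\textbf{Step 2, which carries the content of this lemma, has a genuine gap.} After your Young split the leftover is
\[
R_k:=\frac{\alpha\gamma}{2}\sum_{d}\frac{\big(\mathbb{E}_{\xi_k}\big|[\nabla f(\bar W_k;\xi_k)]_d\big|\big)^2\,[G_p(P_k)]_d^2}{[F_p(P_k)]_d}\;\ge\;\frac{\alpha\gamma}{2q_{\max}}\sum_{d}[\nabla f(\bar W_k)]_d^2\,[G_p(P_k)]_d^2 .
\]
You propose to carry it as a $\|G_p(P_k)\|^2$-type error that the SP-attraction term of Lemma \ref{lemma:TT-phi-descent} cancels later. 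That cannot work, for two reasons.

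\emph{First}, the weight on $[G_p(P_k)]_d^2$ grows like $[\nabla f(\bar W_k)]_d^2$, and nothing bounds it a priori: there is no bounded-gradient assumption, and a strongly convex $f$ on $\mathbb{R}^D$ has unbounded gradients. \emph{Second}, the attraction term $-\frac{\alpha C_2C_\star}{2}\|G_p(P_k)\|^2$ has a fixed coefficient. The $C_\star$ condition in Theorem \ref{theorem:TT-convergence-scvx} involves only $\sigma$ precisely because only a $\sigma^2\|G_p\|^2$-type error may survive this lemma.

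The statement reflects this. Its only asymmetry error is $\frac{\alpha\gamma\sigma^2}{2}\|G_p(P_k)/\sqrt{F_p(P_k)}\|_\infty^2$, and its descent is in the $M_p(P_k)$-weighted norm. Your ``sharper alternative'' fails the same way. The piece $\alpha\gamma\sum_d[\nabla f(\bar W_k)]_d^2\,|[G_p(P_k)]_d|$ must be absorbed coordinatewise by the descent, using $[F_p]_d-|[G_p]_d|=\min\{[Q_+(P_k)]_d,[Q_-(P_k)]_d\}\ge q_{\min}$. It should not be kept as a $\|\nabla f\|\,\|G_p\|$ cross term, and it is not the source of the theorem's $\sigma$-dependence.

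\textbf{The missing step} is to absorb the gradient part of the asymmetry inside this lemma, via the identity $F_p^2-G_p^2=Q_+(P_k)\odot Q_-(P_k)=M_p(P_k)$. Set $x=\nabla f(\bar W_k;\xi_k)$. Jensen gives $(\mathbb{E}|x_d|)^2\le[\nabla f(\bar W_k)]_d^2+\operatorname{Var}(x_d)$, so
$R_k\le\frac{\alpha\gamma}{2}\|\nabla f(\bar W_k)\odot G_p(P_k)/\sqrt{F_p(P_k)}\|^2+\frac{\alpha\gamma\sigma^2}{2}\|G_p(P_k)/\sqrt{F_p(P_k)}\|_\infty^2$.
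The first piece then combines with the half of the descent you kept:
\[
-\frac{\alpha\gamma}{2}\sum_{d}[\nabla f(\bar W_k)]_d^2\,\frac{[F_p(P_k)]_d^2-[G_p(P_k)]_d^2}{[F_p(P_k)]_d}\;\le\;-\frac{\alpha\gamma}{2q_{\max}}\|\nabla f(\bar W_k)\|^2_{M_p(P_k)} .
\]
With these two lines your split reproduces the paper's $P$-step bound exactly.

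The paper reaches that bound through a polarization identity paired against $\nabla f(\bar W_k)\odot\sqrt{F_p(P_k)}$. This also keeps the nonpositive residual $-\frac{\gamma}{2\alpha q_{\max}}\mathbb{E}\|P_{k+1}-P_k+\alpha(\nabla f(\bar W_k;\xi_k)-\nabla f(\bar W_k))\odot F_p(P_k)-b_k\|^2$. That residual appears in the statement. It absorbs the gradient part of $\frac{L\gamma^2}{2}\mathbb{E}\|P_{k+1}-P_k\|^2$, and in the theorem it also absorbs the $C_2L$ term of Lemma \ref{lemma:TT-phi-descent}.

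Your other steps differ from the paper but are sound. Step 3 absorbs that gradient part into the descent for small $\alpha\gamma L$. Step 4 makes a second expansion at $W_k+\gamma(P_{k+1}-Q_k)$ and turns the $W$- and $Q$-moves into $\|P_{k+1}-Q_k\|^2$ terms. The paper instead uses a single expansion at $\bar W_k$ and keeps $\|W_{k+1}-W_k\|^2_{M_p(P_k)^\dag}$ and $\|Q_{k+1}-Q_k\|^2_{M_p(P_k)^\dag}$ explicit. Both of your choices would serve the theorem after re-tracking constants. However, they yield a weaker, differently shaped inequality than the one stated.
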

Recall that we use the square norm of $P^*(W,Q)-Q = (W^* - W) / \gamma$ to measure the convergence of the upper-level problem.
\begin{restatable}[Descent lemma of upper-level problem]{lemma}{LemmaTTWDescentScvx}
    \label{lemma:TT-W-descent}
    Under Definition \ref{assumption:response-factor}, it holds that
    \begin{align}
        \label{inequality:TT-W-descent}
        \restateeq{inequality-saved:TT-W-descent}.
    \end{align}
    % where the notations $q_s(w)$ and $q_{-s}(w)$ are defined as follows
    % \begin{align}
    %     \label{definition:qs}
    %     \restateeq{definition-save:qs}
    % \end{align}
\end{restatable}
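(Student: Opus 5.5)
We will bound $\mathbb{E}\|W_{k+1}-W^*\|^2$ by comparing the actual analog step in \eqref{recursion:HD-W} with an idealized analog step. The idealized step feeds the device the exact upper-level direction $P^*(W_k,Q_k)-Q_k=(W^*-W_k)/\gamma$ from \eqref{eq:optimal-P-solution}, instead of the realized $P_{k+1}-Q_k$. The target inequality has the form
\begin{align*}
\mathbb{E}\|W_{k+1}-W^*\|^2 \le \Big(1-\tfrac{\beta q_{\min}}{2\gamma}\Big)\|W_k-W^*\|^2 + \mathcal{O}\Big(\tfrac{\beta\gamma q_{\max}^2}{q_{\min}}\Big)\mathbb{E}\big\|P_{k+1}-Q_k-(W^*-W_k)/\gamma\big\|^2 + \mathbb{E}\|b'_k\|^2 .
\end{align*}
If the hidden statement is written in a weighted norm $\|\cdot\|_{M}$ (e.g.\ $M$ built from $M_w$ as in \citep{wu2025analog}), the same argument goes through using Lemma \ref{lemma:properties-weighted-norm}. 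Here and below, $\Delta$ denotes the coordinatewise increment map of \eqref{recursion:HD-W}.

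\textbf{Step 1 (decomposition).} Write $\Delta_k:=P_{k+1}-Q_k$ and $\Delta_k^*:=(W^*-W_k)/\gamma$, and split
\begin{align*}
W_{k+1}-W^* = \underbrace{W_k+\beta\Delta_k^*\odot F_w(W_k)-\beta|\Delta_k^*|\odot G_w(W_k)-W^*}_{=:A_k} + \underbrace{\beta\big[\Delta(\Delta_k)-\Delta(\Delta_k^*)\big]}_{=:B_k} + b'_k .
\end{align*}
Conditioned on the history, $\mathbb{E}[b'_k]=0$ by Assumption \ref{ass:discretization}. Hence the cross terms with $b'_k$ vanish in expectation, and only $\mathbb{E}\|b'_k\|^2=\Theta(\beta\Delta w_{\min})$ remains.

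\textbf{Step 2 (contraction of the ideal step).} Coordinatewise, $\Delta_k^{*[d]}\odot F_w-|\Delta_k^{*[d]}|\odot G_w$ equals $\Delta_k^{*[d]}\,q_{s}(W_k^{[d]})$. Here $q_s$ is $q_+$ or $q_-$ according to the sign of $\Delta_k^{*[d]}$, by \eqref{analog-update} and \eqref{definition:F-G}. Therefore
\begin{align*}
A_k^{[d]}=\Big(1-\tfrac{\beta}{\gamma}q_s(W_k^{[d]})\Big)(W_k^{[d]}-W^{*[d]}).
\end{align*}
Definition \ref{assumption:response-factor} gives $q_s\in[q_{\min},q_{\max}]$. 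Provided $\beta q_{\max}\le\gamma$, which holds for $\beta=\Theta(\alpha\gamma\mu)$ and small $\alpha$, this yields $\|A_k\|^2\le(1-\beta q_{\min}/\gamma)^2\|W_k-W^*\|^2$.

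\textbf{Step 3 (perturbation).} Lemma \ref{lemma:lip-analog-update} gives $\|B_k\|\le\beta q_{\max}\|\Delta_k-\Delta_k^*\|$. Apply Young's inequality $\|A+B\|^2\le(1+c)\|A\|^2+(1+1/c)\|B\|^2$ with $c=\beta q_{\min}/(2\gamma)$. Then $(1+c)(1-2c)^2\le 1-c$, and the coefficient of $\|\Delta_k-\Delta_k^*\|^2$ becomes $\mathcal{O}(\beta\gamma q_{\max}^2/q_{\min})$. Together with the noise term from Step 1, this gives the displayed inequality.

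\textbf{Main obstacle.} The delicate point is Step 2. The sign-dependent response must produce a genuine contraction: the ideal direction $\Delta_k^*$ points coordinatewise toward $W^*$, so whichever of $q_\pm$ is active acts as a positive stepsize. This requires the stepsize condition $\beta q_{\max}/\gamma\le 1$. It also requires care that Lemma \ref{lemma:lip-analog-update} is applied at the same base point $W_k$, so the two increments share $F_w(W_k)$ and $G_w(W_k)$. The remaining error term $\|P_{k+1}-Q_k-(W^*-W_k)/\gamma\|^2$ is then left for Lemma \ref{lemma:TT-barW-descent} to control, since it equals $\|\bar W_{k+1}-W^*\|^2/\gamma^2$ up to index shifts.
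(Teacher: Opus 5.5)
Your argument is correct, but it is organized differently from the paper's proof.

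\textbf{How the two routes differ.} The paper proceeds in four steps:
\begin{itemize}
\item It expands $\|W_{k+1}-W^*\|^2=\|W_k-W^*\|^2+2\langle W_k-W^*,W_{k+1}-W_k\rangle+\|W_{k+1}-W_k\|^2$.
\item It splits the inner product into an ideal part, with direction $P^*(W_k,Q_k)-Q_k$, and a perturbation part. The ideal part is handled by the polarization identity together with $F_w^2-G_w^2=Q_+\odot Q_-=M_w$ coordinatewise.
\item The perturbation is handled by Young's inequality in the dual pair $M_w(W_k)$, $M_w(W_k)^\dagger$, plus Lemma \ref{lemma:lip-analog-update}.
\item $\|W_{k+1}-W_k\|^2$ is bounded separately.
\end{itemize}
You instead use the exact identity $A_k^{[d]}=(1-\frac{\beta}{\gamma}q_s(W_k^{[d]}))(W_k-W^*)^{[d]}$ and a single Young split of $\|A_k+B_k\|^2$.

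\textbf{What each route buys.} Yours is more transparent. It also gives a sharper unweighted contraction: $\beta q_{\min}/(2\gamma)$, versus the $\beta q_{\min}^2/(2\gamma q_{\max})$ the paper extracts from its $M_w$-weighted term. The paper's route keeps more structure. Its bound contains the extra decrease $-\frac{\beta}{2\gamma q_{\max}}\|(P^*(W_k,Q_k)-Q_k)\odot F_w(W_k)-|P^*(W_k,Q_k)-Q_k|\odot G_w(W_k)\|^2$. The main proof spends this term, through the choice $C_1=\frac{10\beta\gamma q_{\max}^2}{\alpha q_{\min}^2}$, to cancel the $\|W_{k+1}-W_k\|^2$ contribution from Lemma \ref{lemma:TT-barW-descent}. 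Your bound lacks that term. However, the term is at most $q_{\max}^2\|P^*(W_k,Q_k)-Q_k\|^2$, so your contraction can absorb it once $C_1$ is enlarged. Your version would therefore also carry the theorem.

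\textbf{Caveat: your final inequality is not the stated one.} The actual statement is the weighted inequality described above, with $-\frac{\beta}{2\gamma q_{\max}}\|P^*(W_k,Q_k)-Q_k\|^2_{M_w(W_k)}$ and $+\frac{2\beta q_{\max}^3}{\gamma}\|P_{k+1}-P^*(W_k,Q_k)\|^2_{M_w(W_k)^\dagger}$. Your remark that it follows ``using Lemma \ref{lemma:properties-weighted-norm}'' does not hold as stated, because norm equivalence runs the wrong way for the decrease. The paper's two negative terms can total up to $\beta q_{\max}\|P^*(W_k,Q_k)-Q_k\|^2/\gamma$, which exceeds your $\beta q_{\min}/(2\gamma)$ factor. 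So the two final inequalities are not comparable.

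\textbf{How to recover the statement from your identity.} The key is not to replace $q_s$ by $q_{\min}$.
\begin{itemize}
\item Expand $(1-\beta q_s/\gamma)^2$.
\item Use $q_s\ge q_+q_-/q_{\max}$ and $q_s\ge q_s^2/q_{\max}$ to split $-2\beta q_s/\gamma$ into the $M_w$ term and the ideal-increment term.
\item Absorb $\beta^2q_s^2/\gamma^2$ using $\beta\le\gamma/(2q_{\max})$.
\item Bound $2\langle A_k,B_k\rangle$ by Young's inequality with weights $M_w$, $M_w^\dagger$ instead of your $(1+c)$ split, using $0\le 1-\beta q_s/\gamma\le 1$ so that $\|A_k\|_{M_w}\le\|W_k-W^*\|_{M_w}$.
\end{itemize}
After dividing by $\gamma^2$, this reproduces the paper's inequality term by term, with one exception. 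Your last perturbation coefficient is $\beta^2q_{\max}^2/\gamma^2$ rather than $2\beta^2/\gamma^2$. Yours is what Lemma \ref{lemma:lip-analog-update} actually delivers; the paper's constant silently drops the factor $q_{\max}^2$, which is harmless for the theorem.
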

\begin{restatable}[Descent lemma of accumulated asymmetric sequence]{lemma}{LemmaTTPhiDescentScvx}
    \label{lemma:TT-phi-descent}
    Under Definition \ref{assumption:response-factor} and \ref{assump:rayleigh-lower}, it holds that
    \begin{align}
        \label{inequality:TT-phi-descent}
       \restateeq{inequality-saved:TT-phi-descent}
    \end{align}
\end{restatable}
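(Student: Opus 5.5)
The plan is to mirror the potential-function argument used for Theorem \ref{theorem:ZS-convergence}, now applied to the $P_k$ recursion \eqref{recursion:HD-P}. Let $W^\diamond$ be the SP of the $P$-device, so $G_p(W^\diamond)=0$, and define the coordinate-wise separable potential
\[
\varphi_p(P):=\sum_{d=1}^D\int_{[W^\diamond]_d}^{[P]_d} [G_p]_d(w)\,dw,\qquad \nabla\varphi_p(P)=G_p(P).
\]
By Definition \ref{assumption:response-factor}, $q_\pm$ are differentiable with bounded derivatives on the relevant range, so $G_p$ is $L_q$-Lipschitz. Hence $\varphi_p$ is $L_q$-smooth and gives
\[
\varphi_p(P_{k+1})\le\varphi_p(P_k)+\langle G_p(P_k),P_{k+1}-P_k\rangle+\tfrac{L_q}{2}\|P_{k+1}-P_k\|^2 .
\]

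Next I substitute \eqref{recursion:HD-P} and take the conditional expectation over $\xi_k$ and $b_k$. The linear term splits into three pieces.
\begin{itemize}
\item The asymmetric piece equals $-\alpha\langle G_p(P_k),\mathbb{E}_{\xi_k}|\nabla f(\bar W_k;\xi_k)|\odot G_p(P_k)\rangle$. By Assumption \ref{assump:rayleigh-lower} this is at most $-\alpha C_\star\|G_p(P_k)\|^2$; this is the key descent term.
\item The symmetric piece equals $-\alpha\langle G_p(P_k),\nabla f(\bar W_k)\odot F_p(P_k)\rangle$, using unbiasedness from Assumption \ref{assumption:noise}. I bound it by Young's inequality as $\frac{\alpha C_\star}{2}\|G_p(P_k)\|^2+\frac{\alpha q_{\max}^2}{2C_\star}\|\nabla f(\bar W_k)\|^2$.
\item The noise $b_k$ contributes nothing to the linear term, since $\mathbb{E}[b_k]=0$ by Assumption \ref{ass:discretization}.
\end{itemize}

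For the quadratic term, I use $\|u\odot F_p-|u|\odot G_p\|\le q_{\max}\|u\|$ from Lemma \ref{lemma:lip-analog-update}, together with the variance bound, $\mathrm{Var}[b_k]=\Theta(\alpha\Delta w_{\min})$, and independence of $b_k$. This gives a bound of $L_q\alpha^2q_{\max}^2\big(\|\nabla f(\bar W_k)\|^2+\sigma^2\big)+\Theta(\alpha\Delta w_{\min})$.

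Combining the pieces, the target inequality is
\[
\mathbb{E}[\varphi_p(P_{k+1})]\le\mathbb{E}[\varphi_p(P_k)]-\tfrac{\alpha C_\star}{2}\mathbb{E}\|G_p(P_k)\|^2+\Big(\tfrac{\alpha q_{\max}^2}{2C_\star}+L_q\alpha^2q_{\max}^2\Big)\mathbb{E}\|\nabla f(\bar W_k)\|^2+L_q\alpha^2q_{\max}^2\sigma^2+\Theta(\alpha\Delta w_{\min}).
\]
The $\|\nabla f(\bar W_k)\|^2$ term is not controlled here. It is left for the main proof to absorb using the descent of Lemma \ref{lemma:TT-barW-descent}, where, by strong convexity, $\|\nabla f(\bar W_k)\|^2\le L^2\|\bar W_k-W^*\|^2$.

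The main obstacle is the cross term $\langle G_p,\nabla f\odot F_p\rangle$. Its Young-split coefficient scales like $\alpha/C_\star$, so it is of the same order $\alpha$ as the gradient-descent decrease in Lemma \ref{lemma:TT-barW-descent}, and it must be dominated after weighting. I expect this to be exactly where the condition $C_\star\ge\frac{4\sqrt2\sigma}{\mu}\kappa_2^{3/2}$ enters.

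The first thing I will try is a weighted Young split, possibly in the $\|\cdot\|_{M_p}$ norm with $M_p=Q_+\odot Q_-$ and using Lemma \ref{lemma:properties-weighted-norm}. I will choose the weight so that the leftover $\|\nabla f(\bar W_k)\|^2$ coefficient, times the constant in front of $\varphi_p$ in the Lyapunov function, is at most half the $\bar W$-descent coefficient.

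A secondary worry is that $\varphi_p$ is lower bounded only if it is coercive or if the iterates stay in a bounded region. With $\varphi_p\ge 0$ under monotone $G_p$, this is harmless; otherwise I will simply telescope $\varphi_p(P_0)-\inf\varphi_p$, as in the proof of Theorem \ref{theorem:ZS-convergence}.
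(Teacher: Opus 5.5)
Your treatment of the linear term matches the paper's. The $b_k$ contribution vanishes because $\mathbb{E}[b_k]=0$, and the asymmetric part is bounded through Assumption \ref{assump:rayleigh-lower}. The cross term $-\alpha\langle G_p(P_k),\nabla f(\bar W_k)\odot F_p(P_k)\rangle$ is split by Young's inequality, giving $-\frac{\alpha C_\star}{2}\|G_p(P_k)\|^2+\frac{\alpha q_{\max}^2}{2C_\star}\|\nabla f(\bar W_k)\|^2$. The paper weights this split coordinate-wise by $\mathbb{E}_{\xi_k}[|\nabla f(\bar W_k;\xi_k)|_d]$ and applies the Rayleigh bound afterwards; you apply it first and split with the constant weight $C_\star$, with the same outcome.

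The genuine difference is the quadratic term. The paper does not bound $\|P_{k+1}-P_k\|^2$ directly. It writes $P_{k+1}-P_k=R_k-\alpha(\nabla f(\bar W_k;\xi_k)-\nabla f(\bar W_k))\odot F_p(P_k)+b_k$, with $R_k:=P_{k+1}-P_k+\alpha(\nabla f(\bar W_k;\xi_k)-\nabla f(\bar W_k))\odot F_p(P_k)-b_k$. It then leaves $L\,\mathbb{E}\|R_k\|^2$ in the statement next to $\alpha^2Lq_{\max}^2\sigma^2$. That term is cancelled later, in the proof of Theorem \ref{theorem:TT-convergence-scvx}, by the negative $\|R_k\|^2$ term of Lemma \ref{lemma:TT-barW-descent}; this is why the coefficient there reads $-(\frac{1}{2\alpha q_{\max}}-3\gamma^2L-C_2L)$.

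You instead apply Lemma \ref{lemma:lip-analog-update} with $\Delta W'=0$ and obtain $O(\alpha^2)(\|\nabla f(\bar W_k)\|^2+\sigma^2)$. Your final inequality is therefore not literally the displayed one, but it is valid, self-contained, and serves the same purpose. In the Lyapunov argument, the extra $C_2L_q\alpha^2q_{\max}^2\|\nabla f(\bar W_k)\|^2$ is absorbed by $-\frac{\alpha q_{\min}^2}{8q_{\max}}\|\nabla f(\bar W_k)\|^2$ once $\alpha$ is small, and $C_2L$ simply drops out of the $\|R_k\|^2$ coefficient. Your version is also more careful: it uses the smoothness constant $L_q$ of $\varphi_p$ (the paper reuses the objective's $L$), and it gives the sharper $\Theta(\alpha\Delta w_{\min})$ in place of $\Theta(\Delta w_{\min}/(\gamma q_{\max}))$.

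Two remarks on your outlook.
\begin{itemize}
\item The condition on $C_\star$ plays no role inside this lemma, and no weighted-norm split is needed here. It arises in the main proof from choosing the Lyapunov weight $C_2$. First, $C_2$ must be small enough, $C_2\le\frac{\mu q_{\min}^2C_\star}{8q_{\max}^3}$, so that $C_2\frac{\alpha q_{\max}^2}{2C_\star}\|\nabla f(\bar W_k)\|^2$ is absorbed. Second, it must be large enough, $C_2\ge\frac{4\sigma^2}{C_\star q_{\min}}$, so that $C_2\frac{\alpha C_\star}{2}\|G_p(P_k)\|^2$ dominates the $\frac{\alpha\gamma\sigma^2}{2}\|G_p(P_k)/\sqrt{F_p(P_k)}\|_\infty^2$ error from Lemma \ref{lemma:TT-barW-descent}. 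The $C_\star$ condition is exactly the compatibility of these two bounds, and your sketch is missing the second (lower) one.
\item Converting $\|\nabla f(\bar W_k)\|^2$ into $L^2\|\bar W_k-W^*\|^2$ is unnecessary, and it uses smoothness, not strong convexity. Lemma \ref{lemma:TT-barW-descent} already supplies a $-\|\nabla f(\bar W_k)\|^2_{M_p(P_k)}$ term.
\end{itemize}
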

\noindent
The proof of Lemma \ref{lemma:TT-barW-descent},  \ref{lemma:TT-W-descent} and \ref{lemma:TT-phi-descent} are deferred to Appendices \ref{section:proof-lemma:TT-barW-descent}, \ref{section:proof-lemma:TT-W-descent} and \ref{section:proof-accumulated-asymmetric function} respectively.
For the response functions that satisfy Definition \ref{assumption:response-factor}, we have 
\begin{align}\label{eq:bound_M_M_inverse}
    % \min\{[M_w(W_k)]_d: k\in[K], d\in [D]\} \ge q_{\min} > 0, \quad
    % \min\{[M_p(P_k)]_d: k\in[K], d\in [D]\} \ge q_{\min} > 0.
    \min\{[M_w(W_k)]_d: d\in [D]\} \ge q_{\min}^2 > 0, \quad\text{and}\quad
    \min\{[M_p(P_k)]_d: d\in [D]\} \ge q_{\min}^2 > 0.
\end{align}
% For a sufficiently large $\gamma$, $P^*(W_k, Q_k)$ is ensured to be located in the dynamics range of analog tile $P_k$ and hence the constraint in \eqref{problem:P-residual} is never active. Therefore, we may assume both $q_+(P_k)$ and $q_-(P_k)$ are non-zero, equivalently, there exists a non-zero constant $M^{\texttt{RL}}_{\min}$ such that $\min\{M_p(P_k)\}\ge M^{\texttt{RL}}_{\min}$ for all $k$. Under this condition, we have the following inequalities
% Before proceeding, we first deal with the weighted norm in two lemmas by Lemma \ref{lemma:properties-weighted-norm}.
Now we deal with the weighted norm in two lemmas by Lemma \ref{lemma:properties-weighted-norm}
\begin{align}
    \label{inequality:TT-convergence-scvx-T1.0}
    \frac{\beta}{2\gamma q_{\max}}\|P^*(W_k, Q_k)\|^2_{M_p(P_k)}
    \ge&\ 
    \frac{\beta q_{\min}^2}{2\gamma q_{\max}}\|P^*(W_k, Q_k)\|^2 \\
    \label{inequality:TT-convergence-scvx-T1.1}
    \frac{\alpha}{4 q_{\max}}\|\nabla f(\bar W_k)\|^2_{M_p(P_k)}
    \ge&\ 
    % \frac{\alpha M^{(P)}_K}{4 q_{\max}}\|\nabla f(\bar W_k)\|^2 \\
    \frac{\alpha q_{\min}^2}{4 q_{\max}}\|\nabla f(\bar W_k)\|^2 \\
    \label{inequality:TT-convergence-scvx-T1.2}
    \frac{q_{\max}}{\alpha}\|W_{k+1}-W_k\|^2_{M_p(P_k)^\dag}
    \le&\ 
    % \frac{q_{\max}}{\alpha\gamma M^{(P)}_K}\|W_{k+1}-W_k\|^2 \\
    \frac{q_{\max}}{\alpha q_{\min}^2}\|W_{k+1}-W_k\|^2 \\
    \label{inequality:TT-convergence-scvx-T1.3}
    \frac{2\beta q_{\max}^3}{\gamma}\lnorm P_{k+1} - P^*(W_k, Q_k)\rnorm^2_{M_w(W_k)^\dag}
    \le&\ 
    % \frac{2\beta q_{\max}^3}{\gamma M^{(W)}_K}\lnorm P_{k+1} - P^*(W_k, Q_k)\rnorm^2
    \frac{2\beta q_{\max}^3}{\gamma q_{\min}^2}\lnorm P_{k+1} - P^*(W_k, Q_k)\rnorm^2
\\
        \label{inequality:TT-convergence-scvx-T1.4}
    \frac{2 q_{\max} \gamma^2}{\alpha}\lnorm Q_{k+1} - Q_k\rnorm^2_{M_p(P_k)^\dag}
    \le&\ 
    % \frac{2\beta q_{\max}^3}{\gamma M^{(W)}_K}\lnorm P_{k+1} - P^*(W_k, Q_k)\rnorm^2
    \frac{2 q_{\max} \gamma^2}{\alpha q_{\min}^2}\lnorm Q_{k+1} - Q_k \rnorm^2.
\end{align}
% Notice it is only required to have $\min\{M_w(W_k)\} > 0$ for the inequality to hold.
% which is always valid since Theorem \ref{theorem:bounded-variable} ensures that $M_p(P_k)$ and $M_w(W_k)$ are bounded away from zero if $W_0$ and $P_0$ are initialized in the interior of $\ccalR$.

By inequality \eqref{inequality:TT-convergence-scvx-T1.2}, the last two terms in the \ac{RHS} of \eqref{inequality:TT-barW-descent} is bounded by
\begin{align}
    \label{inequality:TT-convergence-scvx-T3}
    &\ \frac{q_{\max}}{\alpha\gamma}\mathbb{E}_{b_k^\prime}\left[\|W_{k+1}-W_k\|^2_{M_p(P_k)^\dag}\right]
    + \frac{3L}{2} \mathbb{E}_{b_k^\prime}\left[\|W_{k+1}-W_k\|^2\right]
    \\
    % ==================
    \le &\ \frac{q_{\max}}{\alpha q_{\min}^2\gamma}\mathbb{E}_{b_k^\prime}\left[\|W_{k+1}-W_k\|^2\right]
    +  \frac{3L}{2} \mathbb{E}_{b_k^\prime}\left[\|W_{k+1}-W_k\|^2\right]
    \nonumber\\
    % ==================
    \lemark{a}&\ \frac{2q_{\max}}{\alpha q_{\min}^2\gamma}\mathbb{E}_{b_k^\prime}\left[\|W_{k+1}-W_k\|^2\right]
    % ==================
    \leq \frac{2\beta^2 q_{\max}}{\alpha q_{\min}^2\gamma} \|(P_{k+1}-Q_k)\odot F_w(W_k) - |P_{k+1}-Q_k|\odot G_w(W_k)\|^2 +{\Theta}\left(\frac{\beta q_{\max}\Delta w_{\min}}{\alpha q_{\min}^2\gamma}\right)
    \nonumber\\
    % ==================
    \le&\ \frac{4\beta^2 q_{\max}}{\alpha q_{\min}^2\gamma} \|(P^*(W_k, Q_k)-Q_k)\odot F_w(W_k) - |P^*(W_k, Q_k)-Q_k|\odot G_w(W_k)\|^2+{\Theta}\left(\frac{\beta q_{\max}\Delta w_{\min}}{\alpha q_{\min}^2\gamma}\right)
    \nonumber\\
    &\lemark{b}\ + \frac{4\beta^2 q_{\max}}{\alpha q_{\min}^2\gamma} \|(P_{k+1}-Q_k)\odot F_w(W_k) - |P_{k+1}-Q_k|\odot G_w(W_k) \bkeq \quad -((P^*(W_k, Q_k)  -Q_k)\odot F_w(W_k) - |P^*(W_k, Q_k)-Q_k|\odot G_w(W_k))\|^2+{\Theta}\left(\frac{\gamma\Delta w_{\min}}{q_{\max}^2}\right)
    \nonumber\\
    % ==================
    \lemark{c}&\ \frac{4\beta^2 q_{\max}}{\alpha q_{\min}^2\gamma} \|(P^*(W_k, Q_k)-Q_k)\odot F_w(W_k) - |P^*(W_k, Q_k)-Q_k|\odot G_w(W_k)\|^2
    + \frac{4\beta^2 q_{\max}^3}{\alpha q_{\min}^2\gamma} \|P_{k+1} - P^*(W_k, Q_k)\|^2
    \nonumber\\
    &~~~~+{\Theta}\left(\frac{\gamma\Delta w_{\min}}{q_{\min}}\right)\nonumber
\end{align}
where $(a)$ holds if the learning rate $\alpha$ is sufficiently small such that $\alpha\le \frac{2 q_{\max}}{3 q_{\min}\gamma L}$; $(b)$ is because and $\beta \le \frac{\gamma}{2q_{\max}},\alpha\leq \frac{q_{\max}}{q_{\min}^2\gamma L}$; and $(c)$ comes from the fact that the analog update is Lipschitz continuous (see Lemma \ref{lemma:lip-analog-update}). 

By inequality \eqref{inequality:TT-convergence-scvx-T1.4}, the two terms related to $\lnorm Q_{k+1} - Q_k \rnorm^2$ in the \ac{RHS} of \eqref{inequality:TT-barW-descent} is bounded by
\begin{align}
    \label{inequality:TT-convergence-scvx-T4}
    \frac{2q_{\max} \gamma}{\alpha} \mathbb{E}_{\xi_k}\lB\|Q_{k+1}-Q_k\|^2_{M_p(P_k)^\dag}\rB  +L \gamma^2 \mathbb{E}_{\xi_k}\lB\|Q_{k+1}-Q_k\|^2\rB \le &\frac{3q_{\max} \gamma }{\alpha q_{\min}^2} \mathbb{E}_{\xi_k}\lB\|Q_{k+1}-Q_k\|^2\rB. 
\end{align}
where the last inequality holds when choosing stepsize $\alpha\leq \frac{q_{\max}}{q_{\min}^2\gamma L}$. 

With all the inequalities and lemmas above, we are ready to prove the main conclusion in Theorem \ref{theorem:TT-convergence-scvx} now. 
Define a Lyapunov function by
\begin{align}
    V_k :=&\ f(\bar W_k)-f^*
	+ C_1\|P^*(W_k, Q_k)-Q_k\|^2 + C_2 (\varphi(P_k)-\varphi^*).
    % + \frac{3\beta q_{\max}^3}{\gamma M^{\texttt{RL}}_{\min}} C\lnorm P_k - P^*(W_k, Q_k)\rnorm^2.
\end{align}
By Lemmas \ref{lemma:TT-barW-descent} and \ref{lemma:TT-W-descent}, we show that $V_k$ has sufficient descent in expectation
\begin{align}
    \label{inequality:TT-convergence-Lya-1}
    &\ 
    \mbE_{\xi_k,b_k,b_k^\prime]}[V_{k+1}] 
    % \\
    % ========================
    =
    \mbE_{\xi_k,b_k,b_k^\prime}\lB 
        f(\bar W_{k+1})-f^*
        + C_1\|P^*(W_{k+1}, Q_{k+1})-Q_{k+1}\|^2 + C_2 (\varphi(P_{k+1})-\varphi^*)
    \rB
    % \nonumber
    \\
    % ========================
    % bar W
    \le&\ f(\bar W_k) - f^*
    -\frac{\alpha\gamma}{8 q_{\max}}\|\nabla f(\bar W_k)\|^2_{M_p(P_k)}
    + \frac{\alpha\sigma^2\gamma}{2}\lnorm\frac{G_p(P_k)}{\sqrt{F_p(P_k)}} \rnorm^2_\infty
    + 3\alpha^2 L q_{\max}^2 \sigma^2+{\Theta}\left(\frac{\gamma\Delta w_{\min}}{q_{\min}}\right) \bkeq 
    + \frac{6q_{\max} \gamma^2 \eta^2}{\alpha q_{\min}^2} \mathbb{E}_{\xi_k,b_k}\lB\|P_{k+1}-P^*(W_k, Q_k)\|^2 + \|P^*(W_k, Q_k)-Q_{k}\|^2\rB + \frac{4\beta^2 q_{\max}^3}{\alpha q_{\min}^2} \mbE_{\xi_k,b_k}[\|P_{k+1} - P^*(W_k, Q_k)\|^2]
    \bkeq
    - \lp\frac{1}{2\alpha q_{\max}}-3\gamma^2 L\rp~\mathbb{E}_{\xi_k,b_k}\lB\lnorm P_{k+1}-P_k + \alpha (\nabla f(\bar W_k; \xi_k)-\nabla f(\bar W_k))\odot F_p(P_k)-b_k\rnorm^2\rB
    \bkeq
    + \frac{4\beta^2 q_{\max}}{\alpha q_{\min}^2}\|(P^*(W_k, Q_k)-Q_k)\odot F_w(W_k) - |P^*(W_k, Q_k)-Q_k|\odot G_w(W_k)\|^2
    \nonumber\\
    % ------------------------------
    % W
    &\ 
    + C_1 \Bigg(~
        \|P^*(W_k, Q_k)-Q_k\|^2 - \frac{\beta}{2\gamma q_{\max}} \|P^*(W_k, Q_k)-Q_k\|^2_{M_w(W_k)}
        + \frac{3\beta q_{\max}^3}{\gamma q_{\min}^2} \mbE_{\xi_k}[\|P_{k+1} - P^*(W_k, Q_k)\|^2]
        \bkeq\hspace{3em}
        - \frac{\beta}{2\gamma q_{\max}} \|(P^*(W_k, Q_k)-Q_k)\odot F_w(W_k) - |P^*(W_k, Q_k)-Q_k|\odot G_w(W_k)\|^2+{\Theta}\left(\frac{\Delta w_{\min}}{\gamma q_{\max}}\right)
    ~\Bigg)
    \nonumber\\
     % ------------------------------
    % W
    &\ 
    + C_2 \Bigg(~
        \varphi(P_{k}) - \varphi^* -\frac{\alpha C_\star}{2}  \|G_p(P_k)\|^2
    + \alpha^2 L q_{\max}^2 \sigma^2  +  \frac{\alpha q_{\max}^2}{2C_\star}  \|\nabla f(\bar W_k)\|^2 \bkeq  \quad +  L\mathbb{E}_{\xi_k,b_k}\lB\lnorm P_{k+1}-P_k + \alpha (\nabla f(\bar W_k; \xi_k)-\nabla f(\bar W_k))\odot F_p(P_k)-b_k\rnorm^2\rB+{\Theta}\left(\frac{\Delta w_{\min}}{\gamma q_{\max}}\right)
    ~\Bigg)
    \nonumber\\
    % ------------------------------
    % P
    % &\ 
    % + \frac{3\beta q_{\max}^3}{\gamma M^{\texttt{RL}}_{\min}} C \Bigg(~
    %     \lp
    %     1- \frac{\alpha}{2\gamma} \frac{\mu L}{\mu+L}
    %     \rp \|P_k-P^*(W_k, Q_k)\|^2 
    %     + \frac{\alpha q_{\max} \sigma^2}{\gamma(\mu+L)}\lnorm\frac{G_p(P_k)}{\sqrt{F_p(P_k)}}\rnorm_\infty^2
    %     + 4\alpha^2 q_{\max}^2 \sigma^2
    % ~\Bigg)
    % \nonumber\\
    % ==========================================================================
    % P
    \le&\ V_k
    -\lp \frac{\alpha q_{\min}^2}{8 q_{\max}} - \frac{C_2 \alpha q_{\max}^2}{2C_\star} \rp \|\nabla f(\bar W_k)\|^2
   -\lp \frac{\alpha C_2 C_\star}{2}  -  \frac{\alpha\sigma^2}{2 q_{\min}}\rp \|G_p(P_k)\|^2+{\Theta}\left(\frac{(C_1+C_2)\Delta w_{\min}}{\gamma q_{\max}}+\frac{\gamma\Delta w_{\min}}{q_{\min}}\right)
    \bkeq
     + (3+C_2)\alpha^2 L q_{\max}^2 \sigma^2  - \lp \frac{\beta q_{\min}^2}{2\gamma q_{\max}} C_1 - \frac{6q_{\max} \gamma^2 \eta^2}{\alpha q_{\min}^2}\rp \|P^*(W_k, Q_k)-Q_k\|^2
    \bkeq
    - \lp\frac{\beta}{2\gamma q_{\max}} C_1 - \frac{4\beta^2 q_{\max}}{\alpha q_{\min}^2}\rp \lnorm (P^*(W_k, Q_k)-Q_k) \odot F_w(W_k) - |P^*(W_k, Q_k)-Q_k|\odot G_w(W_k) \rnorm^2 
    \bkeq
    + \lp \frac{3\beta q_{\max}^3}{\gamma q_{\min}^2} C_1 + \frac{4\beta^2 q_{\max}^3}{\alpha q_{\min}^2}+ \frac{6q_{\max} \gamma^2 \eta^2}{\alpha q_{\min}^2}\rp  \mbE_{\xi_k}[\|P_{k+1} - P^*(W_k, Q_k)\|^2]
    \bkeq
    -\lp\frac{1}{2\alpha q_{\max}}-3\gamma^2 L - C_2L\rp ~ \mathbb{E}_{\xi_k,b_k}\lB\lnorm P_{k+1}-P_k + \alpha (\nabla f(\bar W_k; \xi_k)-\nabla f(\bar W_k))\odot F_p(P_k)-b_k\rnorm^2\rB
   .
    \nonumber
\end{align}

\noindent
The first inequality holds for:  
\begin{align}
 \mathbb{E}_{\xi_k}\lB\|Q_{k+1}-Q_k\|^2\rB &= \mathbb{E}_{\xi_k}\lB\|-\eta(P_{k+1}-Q_k)\|^2\rB\nonumber \\ &\le 2\eta^2 \mathbb{E}_{\xi_k}\lB\|P_{k+1}-P^*(W_k, Q_k)\|^2 + \|P^*(W_k, Q_k)-Q_{k}\|^2\rB
\end{align}
and the learning rate $\beta$ is sufficiently small such that $\frac{2\beta^2}{\gamma^2} \le \frac{\beta q_{\max}^3}{\gamma q_{\min}^2}$.
Let $C_1=\frac{10\beta \gamma q_{\max}^2}{\alpha q_{\min}^2}$, which leads to the positive coefficient in front of $\lnorm P^*(W_k, Q_k) \odot F_w(W_k) - |P^*(W_k, Q_k)|\odot G_w(W_k) \rnorm^2$. Let $\alpha$ large enough, which leads to the positive coefficient in front of $\mathbb{E}_{\xi_k,b_k} \lB\lnorm P_{k+1}-P_k + \alpha (\nabla f(\bar W_k; \xi_k)-\nabla f(\bar W_k))\odot F_p(P_k)-b_k\rnorm^2\rB$.
% i.e.
\begin{align}
    \label{inequality:TT-convergence-Lya-2}
    \mbE_{\xi_k,b_k}[V_{k+1}]
    \le&\ 
    V_k
    -\lp \frac{\alpha q_{\min}^2}{8 q_{\max}} - \frac{C_2 \alpha q_{\max}^2}{2C_\star} \rp \|\nabla f(\bar W_k)\|^2
   -\lp \frac{\alpha C_2 C_\star}{2}  -  \frac{\alpha\sigma^2}{2 q_{\min}}\rp \|G_p(P_k)\|^2 
    \\
    &\ 
    + \lp \frac{30\beta^2 q_{\max}^5}{\alpha q_{\min}^4} + \frac{4\beta^2 q_{\max}^3}{\alpha q_{\min}^2}+ \frac{6q_{\max} \gamma^2 \eta^2}{\alpha q_{\min}^2}\rp  \mbE_{\xi_k,b_k}[\|P_{k+1} - P^*(W_k, Q_k)\|^2]
    % - \frac{5\beta^2 q_{\max}}{\alpha \gamma^2 q_{\min}^2} \|W_k-W^*\|^2_{M_w(W_k)}.
    \bkeq
    - \lp \frac{\beta q_{\min}^2}{2\gamma q_{\max}} C_1 - \frac{6q_{\max} \gamma^2 \eta^2}{\alpha q_{\min}^2}\rp \|P^*(W_k, Q_k)-Q_k\|^2
     + (3+C_2)\alpha^2 L q_{\max}^2 \sigma^2+{\Theta}(\Delta w_{\min}).
    \nonumber
\end{align}
Now we bound the term $\lnorm \nabla f(\bar W_k)\rnorm^2$ in \eqref{inequality:TT-convergence-Lya-2} by $\mu$-PL condition (which is implied by Assumption \ref{assumption:SC})
\begin{align}
    \label{inequality:TT-convergence-scvx-T2}
    % \lnorm P_k - P^*(W_k, Q_k)\rnorm^2
    % =&\  \lnorm P_k - \frac{W^*-W_k}{\gamma}\rnorm^2
    % = \frac{1}{\gamma^2}\lnorm W_k + \gamma P_k - W^*\rnorm^2
    % = \frac{1}{\gamma^2}\lnorm \bar W_k - W^*\rnorm^2 \\
    % % ==================
    % \le&\ \frac{1}{\mu^2\gamma^2}\lnorm \nabla f(\bar W_k)\rnorm^2
    % \nonumber
    \frac{\alpha q_{\min}^2}{8 q_{\max}}\lnorm \nabla f(\bar W_k)\rnorm^2
    \ge&\ \frac{\alpha\mu q_{\min}^2}{4 q_{\max}}(f(\bar W_k) - f^*).
\end{align}
Notice that the $\|P_{k+1} - P^*(W_k, Q_k)\|^2$ appears in the \ac{RHS} of \eqref{inequality:TT-convergence-Lya-2} above, we also need the following lemma to bound it.

% Before proceeding, we still need to bound the term $\lnorm P_k - P^*(W_k, Q_k)\rnorm^2$ in \eqref{inequality:TT-W-descent} by the following lemma.
\begin{lemma}[Quadratic growth, Theorem 2, \cite{karimi2016linear}]
    \label{lemma:QG}
    Under Assumption \ref{assumption:SC}, defining the optimal solution as $W^* := \argmin_W f(W)$, it holds that
    \begin{align}
        \label{inequality:QG}
        \frac{2}{\mu}(f(W) - f^*) \ge \|W - W^*\|^2.
    \end{align}
\end{lemma}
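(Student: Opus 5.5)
The statement is the quadratic growth inequality $\frac{2}{\mu}(f(W)-f^*)\ge \|W-W^*\|^2$ under Assumption \ref{assumption:SC}. It is quoted from \cite{karimi2016linear}, but in the strongly convex setting a direct argument is short, and that is what I would give. The plan is to use the first-order characterization of $\mu$-strong convexity, evaluated at the minimizer $W^*$.

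\textbf{Step 1 (existence and stationarity of $W^*$).} Strong convexity with $\mu>0$ makes $f$ coercive and strictly convex. Hence the minimizer $W^*=\argmin_W f(W)$ exists and is unique. Since $f$ is differentiable (it is $L$-smooth by Assumption \ref{assumption:Lip}), first-order optimality gives $\nabla f(W^*)=0$.

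\textbf{Step 2 (apply the strong convexity inequality).} For all $W,W'$, strong convexity gives
\begin{align*}
f(W)\ge f(W')+\langle \nabla f(W'),W-W'\rangle+\frac{\mu}{2}\|W-W'\|^2 .
\end{align*}
Taking $W'=W^*$ and using $\nabla f(W^*)=0$ and $f(W^*)=f^*$, this becomes $f(W)-f^*\ge \frac{\mu}{2}\|W-W^*\|^2$. Multiplying both sides by $2/\mu$ yields \eqref{inequality:QG}.

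\textbf{Main obstacle and fallback.} There is essentially no obstacle. The only care needed is Step 1, justifying that $W^*$ exists and is a stationary point, which is standard. If one instead wants to follow the route of \cite{karimi2016linear}, which uses only the PL inequality $\frac12\|\nabla f(W)\|^2\ge\mu(f(W)-f^*)$, the argument is different. Consider $g(W)=\sqrt{f(W)-f^*}$ and follow the gradient flow of $f$ starting from $W$. The PL inequality gives $\|\nabla g\|\ge\sqrt{\mu/2}$ along the trajectory, so its length is at most $\sqrt{2/\mu}\,g(W)$. The trajectory converges to a point of the solution set, which is $\{W^*\}$ here, so $\|W-W^*\|\le \sqrt{2(f(W)-f^*)/\mu}$, giving the same bound. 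In the strongly convex setting of this paper, the two-line argument in Step 2 suffices.
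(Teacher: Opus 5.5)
Your proof is correct, and your main argument takes a different route from the paper's. The paper gives no proof of its own. It imports the lemma from Theorem~2 of \cite{karimi2016linear}, where quadratic growth sits at the end of a chain of implications (strong convexity $\Rightarrow\cdots\Rightarrow$ PL $\Rightarrow$ QG). The last link is exactly the gradient-flow/path-length argument on $\sqrt{f-f^*}$ that you sketch as your fallback.

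Your main route is more elementary. Once $\nabla f(W^*)=0$, the quadratic growth inequality is just the first-order strong convexity inequality with base point $W^*$. You need no gradient flow, no argument that the trajectory converges, and no Lipschitz-gradient hypothesis; differentiability, or even just $0\in\partial f(W^*)$, is enough.

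The cited route buys generality in the other direction. It uses only the PL inequality, so it covers nonconvex objectives and non-unique minimizers, with $W^*$ replaced by the projection onto the solution set. That is the form the paper implicitly leans on elsewhere: it invokes the $\mu$-PL condition and projects onto the solution set in the proof of Lemma~\ref{lemma:TT-W-descent}.

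Under Assumption~\ref{assumption:SC} the solution set is the singleton $\{W^*\}$. So your two-line argument fully suffices and is arguably the cleaner justification for how the lemma is used here.
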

Replacing $W$ in \eqref{inequality:QG} with $W_k + \gamma (P_{k+1}-Q_k)$, we bound the term $\|P_{k+1} - P^*(W_k, Q_k)\|^2$ in \eqref{inequality:TT-W-descent} by
\begin{align}
    \label{inequality:QG-Wk}
    \frac{2}{\mu}(f(W_k + \gamma (P_{k+1}-Q_k)) - f^*) \ge \|W_k + \gamma P_{k+1} -\gamma Q_k - W^*\|^2
    = \gamma^2 \|P_{k+1} - P^*(W_k, Q_k)\|^2.
\end{align}

% \begin{restatable}[Descent Lemma of $P_k$]{lemma}{LemmaTTPDescentScvx}
%     \label{lemma:TT-P-descent}
%     Suppose Assumptions \ref{assumption:response-factor}-\ref{assumption:noise} and \ref{assumption:strongly-cvx} hold.
%     It holds for \TT~that
%     \begin{align}
%         \label{inequality:TT-P-descent}
%         \restateeq{inequality-saved:TT-P-descent}.
%     \end{align}
% \end{restatable}
% The proof of Lemma \ref{lemma:TT-P-descent} is deferred to Section \ref{section:proof-lemma:TT-P-descent}. 
\noindent
By inequality \eqref{inequality:TT-P-in-barW-descent} in Lemma \ref{lemma:TT-barW-descent} and \eqref{inequality:QG-Wk}, we bound the $\|P_{k+1} - P^*(W_k, Q_k)\|^2$ by
\begin{align}
    \label{inequality:TT-convergence-Lya-2-T5}
    &\ \lp \frac{30\beta^2 q_{\max}^5}{\alpha q_{\min}^4} + \frac{4\beta^2 q_{\max}^3}{\alpha q_{\min}^2}+ \frac{6q_{\max} \gamma^2 \eta^2}{\alpha q_{\min}^2}\rp  \mbE_{\xi_k,b_k}[\|P_{k+1} - P^*(W_k, Q_k)\|^2]
    \\
    % ==========================================================================
    \lemark{a}&\ \lp \frac{34\beta^2 q_{\max}^5}{\alpha q_{\min}^4} + \frac{6q_{\max} \gamma^2 \eta^2}{\alpha q_{\min}^2}\rp   \mbE_{\xi_k,b_k}[\|P_{k+1} - P^*(W_k, Q_k)\|^2]
    \nonumber\\
    % ==========================================================================
    \lemark{b}&\ \lp\frac{68\beta^2 q_{\max}^5}{\alpha \gamma^2 \mu q_{\min}^4} + \frac{12q_{\max}  \eta^2}{\alpha q_{\min}^2 \mu} \rp \mbE_{\xi_k,b_k}[f(W_k + \gamma (P_{k+1}-Q_k)) - f^*]
    \nonumber\\
    % ==========================================================================
    \lemark{c}&\ \lp\frac{68\beta^2 q_{\max}^5}{\alpha \gamma^2 \mu q_{\min}^4} + \frac{12q_{\max}  \eta^2}{\alpha q_{\min}^2 \mu} \rp 
    \lp
        f(\bar W_k) - f^* 
        + \frac{\alpha \sigma^2}{2}\lnorm\frac{G_p(P_k)}{\sqrt{F_p(P_k)}}\rnorm_\infty^2
        + \alpha^2 L q_{\max}^2 \sigma^2+{\Theta}\left(\frac{\gamma\Delta w_{\min}}{ q_{\min}}\right)
    \rp 
    \bkeq
    - \lp\frac{68\beta^2 q_{\max}^5}{\alpha \gamma^2 \mu q_{\min}^4} + \frac{12q_{\max}  \eta^2}{\alpha q_{\min}^2 \mu} \rp 
           \frac{\alpha}{2q_{\max}}\|\nabla f(\bar W_k)\|^2 
    \nonumber\\
    % ==========================================================================
    \lemark{d}&\ \lp\frac{68\beta^2 q_{\max}^5}{\alpha \gamma^2 \mu q_{\min}^4} + \frac{12q_{\max}  \eta^2}{\alpha q_{\min}^2 \mu} \rp (f(\bar W_k) - f^*)
    + \ccalO\lp
        \beta^2\sigma^2\lnorm\frac{G_p(P_k)}{\sqrt{F_p(P_k)}}\rnorm_\infty^2
        + \alpha\beta^2 q_{\max}^2 \sigma^2
    \rp+{\Theta}\left(\frac{\gamma q_{\max}\Delta w_{\min}}{ q_{\min}^2}\right)
    % \bkeq
    % - \frac{\alpha}{2\gamma} \frac{\mu L}{\mu+L} \|P_k-P^*(W_k, Q_k)\|^2 
    \nonumber\\
    % ==========================================================================
    \lemark{e}&\ \lp\frac{68\beta^2 q_{\max}^5}{\alpha \gamma^2 \mu q_{\min}^4} + \frac{12q_{\max}  \eta^2}{\alpha q_{\min}^2 \mu} \rp  (f(\bar W_k) - f^*)
    + \alpha\sigma^2\lnorm\frac{G_p(P_k)}{\sqrt{F_p(P_k)}}\rnorm_\infty^2
    + \alpha^2L q_{\max}^2 \sigma^2+{\Theta}\left(\frac{\gamma q_{\max}\Delta w_{\min}}{ q_{\min}^2}\right)
    % \bkeq
    % - \frac{\alpha}{2\gamma} \frac{\mu L}{\mu+L} \|P_k-P^*(W_k, Q_k)\|^2 
    \nonumber
\end{align}
% where $(a)$ uses $\frac{30\beta^2 q_{\max}^5}{\alpha \gamma^2 q_{\min}^4} \le \frac{4\beta^2 q_{\max}^3}{\alpha q_{\min}^2}$ since $\gamma$ is chosen to be sufficiently large, i.e., $\gamma^2 q_{\min}^2 \ge 8/ q_{\max}^2$; 
where $(a)$ uses $\frac{q_{\max}}{q_{\min}} \ge 1$; 
(b) comes from \eqref{inequality:QG-Wk};
(c) comes from \eqref{inequality:TT-P-in-barW-descent} in Lemma \ref{lemma:TT-barW-descent}; (d) holds by setting $\eta\leq\frac{\beta q_{\max}^2}{\gamma q_{\min}}$ and $\beta\leq\frac{\gamma}{q_{\max}}$; 
(e) holds given $\alpha$ and $\beta$ is sufficiently small.

% \noindent
% In addition, the 
% {strong convexity of the objective} 
% % \blue{strong convexity of the objective} 
% (c.f. Assumption \ref{assumption:strongly-cvx}) implies that
% \begin{align}
%     \label{inequality:TT-convergence-scvx-T2}
%     % \lnorm P_k - P^*(W_k, Q_k)\rnorm^2
%     % =&\  \lnorm P_k - \frac{W^*-W_k}{\gamma}\rnorm^2
%     % = \frac{1}{\gamma^2}\lnorm W_k + \gamma P_k - W^*\rnorm^2
%     % = \frac{1}{\gamma^2}\lnorm \bar W_k - W^*\rnorm^2 \\
%     % % ==================
%     % \le&\ \frac{1}{\mu^2\gamma^2}\lnorm \nabla f(\bar W_k)\rnorm^2
%     % \nonumber
%     \frac{\alpha M^{\texttt{RL}}_{\min}}{8 q_{\max}}\lnorm \nabla f(\bar W_k)\rnorm^2
%     \ge&\ \frac{\alpha\mu^2 M^{\texttt{RL}}_{\min}}{8 q_{\max}}\lnorm \bar W_k - W^*\rnorm^2
%     = \frac{\alpha\mu^2 M^{\texttt{RL}}_{\min}}{8 q_{\max}} \lnorm W_k + \gamma P_k - W^*\rnorm^2
%     \\
%     % ==================
%     =&\ \frac{\alpha\mu^2\gamma^2 M^{\texttt{RL}}_{\min}}{8 q_{\max}} \lnorm P_k - \frac{W^*-W_k}{\gamma}\rnorm^2
%     = \frac{\alpha\mu^2\gamma^2 M^{\texttt{RL}}_{\min}}{8 q_{\max}} \lnorm P_k - P^*(W_k, Q_k)\rnorm^2.
%     \nonumber
% \end{align}

% \noindent
Substituting \eqref{inequality:TT-convergence-scvx-T2} and \eqref{inequality:TT-convergence-Lya-2-T5} back into \eqref{inequality:TT-convergence-Lya-2} yields 
\begin{align}
    \label{inequality:TT-convergence-Lya-3}
    &\ \mbE_{\xi_k,b_k}[V_{k+1}] 
    \\
    % ==========================================================================
    \le&\ V_k    -\lp \frac{\alpha C_2 C_\star}{2}  -  \frac{3\alpha\sigma^2}{2 q_{\min}}\rp \|G_p(P_k)\|^2 
    - \lp \frac{\alpha\mu q_{\min}^2}{4 q_{\max}} - \frac{68\beta^2 q_{\max}^5}{\alpha \gamma^2 \mu q_{\min}^4}- \frac{12q_{\max}  \eta^2}{\alpha q_{\min}^2 \mu}  - \frac{C_2 \alpha q_{\max}^2}{2C_\star} \rp  (f(\bar W_k) - f^*) \bkeq -
    \lp \frac{\beta q_{\min}^2}{2\gamma q_{\max}} C_1 - \frac{6q_{\max} \gamma^2 \eta^2}{\alpha q_{\min}^2}\rp \|P^*(W_k, Q_k)-Q_k\|^2  + (4+C_2)\alpha^2 L q_{\max}^2 \sigma^2+{\Theta}(\Delta w_{\min})
    \nonumber\\
    % ==========================================================================
    =&\ V_k
    -\lp \frac{\alpha C_2 C_\star}{2}  -  \frac{3\alpha\sigma^2}{2 q_{\min}}\rp \|G_p(P_k)\|^2 
    - \lp \frac{\alpha \mu q_{\min}^2}{8 q_{\max}}- \frac{12q_{\max}  \eta^2}{\alpha q_{\min}^2 \mu} - \frac{C_2 \alpha q_{\max}^2}{2C_\star} \rp  (f(\bar W_k) - f^*) \bkeq
    -  \lp \frac{\alpha \mu q_{\min}^5}{8\sqrt{34}  q_{\max}^4}  - \frac{12\sqrt{34} q_{\max}^2  \eta^2}{5 \alpha \mu q_{\min}^3}\rp C_1 \|P^*(W_k, Q_k)-Q_k\|^2 + (4+C_2)\alpha^2 L q_{\max}^2 \sigma^2+{\Theta}(\Delta w_{\min})
    \nonumber \\
    % ==========================================================================
    \le &\ V_k
    -  \lp \frac{\alpha \mu q_{\min}^5}{8\sqrt{34}  q_{\max}^4}  - \frac{12\sqrt{34} q_{\max}^2  \eta^2}{5 \alpha \mu q_{\min}^3}\rp \lp   (f(\bar W_k) - f^*) +C_1 \|P^*(W_k, Q_k)-Q_k\|^2 + \|G_p(P_k)\|^2  \rp \bkeq + (4+C_2)\alpha^2 L q_{\max}^2 \sigma^2+{\Theta}(\Delta w_{\min})
    \nonumber  \\
    % ==========================================================================
    \le &\ V_k
    -   \frac{\alpha \mu q_{\min}^5}{16\sqrt{34}  q_{\max}^4}   \lp (f(\bar W_k) - f^*) +C_1 \|P^*(W_k, Q_k)-Q_k\|^2 + \|G_p(P_k)\|^2  \rp  + 5\alpha^2 L q_{\max}^2 \sigma^2+{\Theta}(\Delta w_{\min})
    \nonumber
\end{align}
where the second step chooses the learning rate by $\beta = 
    \saveeq{inequality-RHS:beta}{
        \frac{\alpha\gamma\mu q_{\min}^3}{4\sqrt{34} q_{\max}^3}
    }$.
The third step holds since $\frac{q_{\max}^2}{q_{\min}^2}\ge 1$, and choose $\frac{4 \sigma^2}{C_{\star} q_{\min}}\le C_2 \le \frac{\mu q_{\min}^2 C_{\star}}{8 q_{\max}^3}$, such that $\frac{\alpha C_2 C_\star}{2}  -  \frac{3\alpha\sigma^2}{2 q_{\min}} \ge \frac{\alpha \mu q_{\min}^5}{8\sqrt{34}  q_{\max}^4}  - \frac{12\sqrt{34} q_{\max}^2  \eta^2}{5 \alpha \mu q_{\min}^3}$ and $ \frac{C_2 \alpha q_{\max}^2}{2C_\star} \le \frac{\alpha \mu q_{\min}^2}{16 q_{\max}} $. The last step chooses $\eta \le \frac{\alpha \mu q_{\min}^{4}}{52 q_{\max}^3}$ such that $\frac{12\sqrt{34} q_{\max}^2  \eta^2}{5 \alpha \mu q_{\min}^3} \leq \frac{\alpha \mu q_{\min}^5}{32\sqrt{34} q_{\max}^4}.$

Rearranging \eqref{inequality:TT-convergence-Lya-3}, taking expectations with respect to
$\{\xi_k\}_{k=0}^{K}$, and averaging over $k=0,\ldots,K$, and choosing the stepsize
$\alpha=\saveeq{inequality-RHS:alpha-faster}{\Theta(K^{-1/2})}$, we obtain
% \begin{align}
%     \mbE[V_K] 
%     % ==========================================================================
%     & \le \lp 1- \frac{\alpha \mu q_{\min}^5}{32\sqrt{34} q_{\max}^4}\rp^K V_0
%     + \frac{128\sqrt{34}\alpha L q_{\max}^5 \sigma^2}{\mu q_{\min}^4} \nonumber \\
%     & \le \exp\lp -\frac{\alpha \mu q_{\min}^5}{32\sqrt{34} q_{\max}^4}\rp V_0
%     + \frac{128\sqrt{34}\alpha L q_{\max}^6 \sigma^2}{\mu q_{\min}^5}
%     \le 
%     \saveeq{inequality-RHS:TT-convergence-upperbound-faster}{
%         \ccalO\lp \frac{\log K}{K}\rp
%     }.
% \end{align}
\begin{align}
& \frac{1}{K}\sum_{k=0}^{K}\mathbb{E}\lp \Big( f(\bar W_k)-f^* \Big)
 +  C_1 \big\|P^*(W_k,Q_k)-Q_k\big\|^2
 +  \big\|G_p(P_k)\big\|^2 \rp \nonumber \\ \le &
\frac{16\sqrt{34}\, q_{\max}^4}{\mu\, q_{\min}^5}
\left(
\frac{V_0 - V_{[K]}}{\alpha K}
+ 5\alpha L q_{\max}^2 \sigma^2
\right)+{\Theta}(\Delta w_{\min})
 \le  \mathcal{O}\left(\frac{\kappa_2^5}{\sqrt{K}}\right)+{\Theta}(\Delta w_{\min}).
\end{align}
Since $\|P^*(W_k,Q_k)-Q_k\|=\|W_k-W^*\|/\gamma$ and $f(\bar W_k)-f^*\geq \frac{\mu}{2}\|\bar W_k-W^*\|^2=\frac{\mu}{2}\|W_k-W^*+\gamma(P_k-Q_k)\|^2$, we know that 
\begin{align*}
\|P_k-Q_k\|^2&\leq 2\gamma^{-2}\|\bar W_k-W^*\|^2+2\gamma^{-2}\|W_k-W^*\|^2\leq 2\gamma^{-2}\|\bar W_k-W^*\|^2+2\|P^*(W_k,Q_k)-Q_k\|^2\\
&\leq \frac{4}{\mu}\gamma^{-2}(f(\bar W_k)-f^*)+2\|P^*(W_k,Q_k)-Q_k\|^2
\end{align*}
and thus by choosing $C_1={\cal O}(\gamma^2)$, we have 
\begin{align}
& \frac{1}{K}\sum_{k=0}^{K}\mathbb{E}\lp 
 \big\|W_k-W^*\big\|^2+ C_1\|P_k-Q_k\|^2 
 +  \big\|G_p(P_k)\big\|^2 \rp \nonumber \\ \le &
\frac{16\sqrt{34}\, q_{\max}^4}{\mu\, q_{\min}^5}
\left(
\frac{V_0 - V_{[K]}}{\alpha K}
+ 5\alpha L q_{\max}^2 \sigma^2
\right)+{\Theta}(\Delta w_{\min})
 \le  \mathcal{O}\left(\frac{\kappa_2^5}{\kappa_1\sqrt{K}}\right)+{\Theta}(\Delta w_{\min}).
\end{align}
The proof of Theorem \ref{theorem:TT-convergence-scvx} is completed. 
\end{proof}

\subsection{Proof of Lemma \ref{lemma:TT-barW-descent}: Descent of sequence $\bar W_k$}
\label{section:proof-lemma:TT-barW-descent}
\LemmaTTbarWDescentScvx*
\begin{proof}[Lemma \ref{lemma:TT-barW-descent}]
The $L$-smooth assumption (Assumption \ref{assumption:Lip}) implies that
\begin{align}
    \label{inequality:TT-convergence-1}
    &\ \mathbb{E}_{\xi_k,b_k}[f(\bar W_{k+1})] \le f(\bar W_k)
    +\mathbb{E}_{\xi_k,b_k}[\la \nabla f(\bar W_k), \bar W_{k+1}-\bar W_k\ra]
    + \frac{L}{2}\mathbb{E}_{\xi_k,b_k}[\|\bar W_{k+1}-\bar W_k\|^2] \\
    % ========================
    =&\ f(\bar W_k)
    + \gamma \underbrace{\mathbb{E}_{\xi_k,b_k}[\la \nabla f(\bar W_k), P_{k+1}-P_k\ra]}_{(a)}
    + \underbrace{\mathbb{E}_{\xi_k}[\la \nabla f(\bar W_k), W_{k+1}-W_k\ra]}_{(b)}
    + \underbrace{\frac{L}{2}\mathbb{E}_{\xi_k,b_k}[\|\bar W_{k+1}-\bar W_k\|^2]}_{(c)}.
    \nonumber \\
    % ========================
    & - \gamma \underbrace{\mathbb{E}_{\xi_k}[\la \nabla f(\bar W_k), Q_{k+1}-Q_k\ra]}_{(d)} \nonumber
\end{align}
Next, we will handle each term in the \ac{RHS} of \eqref{inequality:TT-convergence-1} separately.

\noindent
\textbf{Bound of the second term (a).}
To bound term (a) in the \ac{RHS} of \eqref{inequality:TT-convergence-1}, we leverage the assumption that noise has expectation $0$ (Assumption \ref{assumption:noise})
\begin{align}
    \label{inequality:TT-convergence-1-T2}
    &\ \mathbb{E}_{\xi_k,b_k}[\la \nabla f(\bar W_k), P_{k+1}-P_k\ra] \\
    % ========================
    =&\ \alpha\mathbb{E}_{\xi_k,b_k}\lB \la 
    \nabla f(\bar W_k)\odot \sqrt{F_p(P_k)}, 
    \frac{P_{k+1}-P_k}{\alpha\sqrt{F_p(P_k)}}+(\nabla f(\bar W_k; \xi_k)-\nabla f(\bar W_k))\odot \sqrt{F_p(P_k)} -\frac{b_k} {\alpha\sqrt{F_p(P_k)}}
    \ra\rB
    \nonumber \\
    % ========================
    =&\ - \frac{\alpha}{2} \|\nabla f(\bar W_k)\odot \sqrt{F_p(P_k)}\|^2 
    \nonumber \\
    &\ - \frac{1}{2\alpha}\mathbb{E}_{\xi_k,b_k}\lB\lnorm\frac{P_{k+1}-P_k}{\sqrt{F_p(P_k)}}+\alpha (\nabla f(\bar W_k; \xi_k)-\nabla f(\bar W_k))\odot \sqrt{F_p(P_k)}-\frac{b_k} {\sqrt{F_p(P_k)}}\rnorm^2\rB
    \nonumber \\
    % ========================
    &\ + \frac{1}{2\alpha}\mathbb{E}_{\xi_k,b_k}\lB\lnorm\frac{P_{k+1}-P_k}{\sqrt{F_p(P_k)}}+\alpha \nabla f(\bar W_k; \xi_k)\odot \sqrt{F_p(P_k)}-\frac{b_k} {\sqrt{F_p(P_k)}}\rnorm^2\rB.
    \nonumber
\end{align}
The second term in the \ac{RHS} of \eqref{inequality:TT-convergence-1-T2} can be bounded by 
\begin{align}
    \label{inequality:TT-convergence-1-T2-T2}
    &\frac{1}{2\alpha}\mathbb{E}_{\xi_k,b_k}\lB\lnorm\frac{P_{k+1}-P_k}{\sqrt{F_p(P_k)}}+\alpha (\nabla f(\bar W_k; \xi_k)-\nabla f(\bar W_k))\odot \sqrt{F_p(P_k)}-\frac{b_k} {\sqrt{F_p(P_k)}}\rnorm^2\rB\\
    =&\ \frac{1}{2\alpha}\mathbb{E}_{\xi_k,b_k}\lB\lnorm\frac{P_{k+1}-P_k+\alpha (\nabla f(\bar W_k; \xi_k)-\nabla f(\bar W_k))\odot F_p(P_k)-b_k}{\sqrt{F_p(P_k)}}\rnorm^2\rB
    \nonumber\\
    \ge&\ \frac{1}{2\alpha q_{\max}}\mathbb{E}_{\xi_k,b_k}\lB\| P_{k+1}-P_k + \alpha (\nabla f(\bar W_k; \xi_k)-\nabla f(\bar W_k))\odot F_p(P_k)-b_k\|^2\rB
    \nonumber.
\end{align}

The third term in the \ac{RHS} of \eqref{inequality:TT-convergence-1-T2} can be bounded by the bounded variance (Assumption \ref{assumption:noise})
\begin{align}
    \label{inequality:TT-convergence-1-T2-T3}
    &~~~~~ \frac{1}{2\alpha}\mathbb{E}_{\xi_k,b_k}\lB\lnorm\frac{P_{k+1}-P_k}{\sqrt{F_p(P_k)}}+\alpha \nabla f(\bar W_k; \xi_k)\odot \sqrt{F_p(P_k)}-\frac{b_k} {\sqrt{F_p(P_k)}}\rnorm^2\rB
    \\
    &\le \frac{\alpha}{2}\mathbb{E}_{\xi_k,b_k}\lB\lnorm|\nabla f(\bar W_k; \xi_k)|\odot\frac{G_p(P_k)}{\sqrt{F_p(P_k)}} -\frac{b_k} {\alpha\sqrt{F_p(P_k)}}\rnorm^2\rB
    \\
    &\stackrel{(a)}{=} \frac{\alpha}{2}\mathbb{E}_{\xi_k,b_k}\lB\lnorm\nabla f(\bar W_k; \xi_k)\odot\frac{G_p(P_k)}{\sqrt{F_p(P_k)}} -\frac{b_k} {\alpha\sqrt{F_p(P_k)}}\rnorm^2\rB
    \\
    &\stackrel{(b)}{\leq} \frac{\alpha}{2}\lnorm\nabla f(\bar W_k)\odot\frac{G_p(P_k)}{\sqrt{F_p(P_k)}} \rnorm^2
    +\frac{\alpha\sigma^2}{2}\lnorm\frac{G_p(P_k)}{\sqrt{F_p(P_k)}} \rnorm^2_\infty+{\Theta}\left(\frac{\Delta w_{\min}}{ q_{\min}}\right). 
    \nonumber 
\end{align}
where $(a)$ is because $\mathbb{E}_{b_k}\left\langle|\nabla f(\bar W_k; \xi_k)|\odot\frac{G_p(P_k)}{\sqrt{F_p(P_k)}},\frac{b_k} {\alpha\sqrt{F_p(P_k)}}\right\rangle=0$ according to Assumption \ref{ass:discretization} and $(b)$ comes from $\mathbb{E}[\|A\|^2]=\|\mathbb{E}[A]\|^2+\mathbb{E}[\|A-\mathbb{E}[A]\|^2]$ and Assumption \ref{assumption:noise}. 

% \QX{Remark: $\sigma^2$ is originated from here and the term essentially matters is $\mathbb{E}[|\nabla f(\bar W_k; \xi_k)|]$. }

Notice that the first term in the \ac{RHS} of \eqref{inequality:TT-convergence-1-T2} and the second term in the \ac{RHS} of \eqref{inequality:TT-convergence-1-T2-T3} can be bounded together
\begin{align}
    \label{inequality:TT-convergence-1-T2-mixed}
    &\ - \frac{\alpha}{2} \|\nabla f(\bar W_k) \odot \sqrt{F_p(P_k)}\|^2 
    + \frac{\alpha}{2} \lnorm\nabla f(\bar W_k) \odot \frac{G_p(P_k)}{\sqrt{F_p(P_k)}}\rnorm^2 \\
    % ========================
    =&\ -\frac{\alpha}{2}\sum_{d\in[D]} \lp [\nabla f(\bar W_k)]_d^2 \lp [F_p(P_k)]_d-\frac{[G_p(P_k)]^2_d}{[F_p(P_k)]_d}\rp\rp
    \nonumber \\
    % ========================
    =&\ -\frac{\alpha}{2}\sum_{d\in[D]} \lp [\nabla f(\bar W_k)]_d^2 \lp \frac{[F_p(P_k)]^2_d-[G_p(P_k)]^2_d}{[F_p(P_k)]_d}\rp\rp
    \nonumber \\
    % ========================
    \le&\ -\frac{\alpha}{2 q_{\max}}\sum_{d\in[D]} \lp [\nabla f(\bar W_k)]_d^2 \lp [F_p(P_k)]_d^2-[G_p(P_k)]^2_d\rp\rp
    \nonumber\\
    % ========================
    =&\ -\frac{\alpha}{2 q_{\max}}\|\nabla f(\bar W_k)\|^2_{M_p(P_k)}
    \le 0.
    \nonumber
\end{align}
Plugging \eqref{inequality:TT-convergence-1-T2-T2} to \eqref{inequality:TT-convergence-1-T2-mixed} into \eqref{inequality:TT-convergence-1-T2}, we bound the term (a) by
\begin{align}
    \label{inequality:TT-convergence-1-T2-2}
    \gamma\mathbb{E}_{\xi_k,b_k}[\la \nabla f(\bar W_k), P_{k+1}-P_k\ra]
    % ========================
    \le&\ -\frac{\alpha\gamma}{2 q_{\max}}\|\nabla f(\bar W_k)\|^2_{M_p(P_k)}
    +\frac{\alpha\gamma\sigma^2}{2}\lnorm\frac{G_p(P_k)}{\sqrt{F_p(P_k)}} \rnorm^2_\infty+{\Theta}\left(\frac{\gamma\Delta w_{\min}}{ q_{\min}}\right)
    \\
    &\ - \frac{\gamma}{2\alpha q_{\max}}\mathbb{E}_{\xi_k}\lB\lnorm P_{k+1}-P_k + \alpha (\nabla f(\bar W_k; \xi_k)-\nabla f(\bar W_k))\odot F_p(P_k)-b_k\rnorm^2\rB.
    \nonumber
\end{align}

\noindent
\textbf{Bound of the third term (b).} By Young's inequality, we have
\begin{align}
    \label{inequality:TT-convergence-1-T3}
    \mathbb{E}_{\xi_k}[\la \nabla f(\bar W_k), W_{k+1}-W_k\ra]
    \le&\ \frac{\alpha\gamma}{4q_{\max}}\|\nabla f(\bar W_k)\|^2_{M_p(P_k)} 
    + \frac{q_{\max}}{\alpha\gamma}\mathbb{E}_{\xi_k}[\|W_{k+1}-W_k\|^2_{M_p(P_k)^\dag}].
    % \\
    % =&\ \mathbb{E}_{\xi_k}[\la \nabla f(\bar W_k), T(\alpha P_{k+1}, W_k)\ra]
    % \nonumber \\
    % =&\ \la \nabla f(\bar W_k), T(\alpha P^*(W_k, Q_k), W_k)\ra
    % + \mathbb{E}_{\xi_k}[\la \nabla f(\bar W_k), T(\alpha P_{k+1}, W_k)-T(\alpha P^*(W_k, Q_k), W_k)\ra]
\end{align}

\noindent
\textbf{Bound of the fourth term (c).} Repeatedly applying inequality $\|U+V\|^2\le 2\|U\|^2+2\|V\|^2$ for any $U, V\in\reals^D$, we have
\begin{align}
    \label{inequality:TT-convergence-1-T4}
    &\ \frac{L}{2}\mathbb{E}_{\xi_k,b_k}[\|\bar W_{k+1}-\bar W_k\|^2]
    \\
    % ========================
    \le&\ \frac{3L}{2}\mathbb{E}_{\xi_k}[\|W_{k+1}-W_k\|^2]
    + \frac{3\gamma^2 L}{2}\mathbb{E}_{\xi_k,b_k}[\|P_{k+1}-P_k\|^2]  + \frac{3\gamma^2 L}{2} \mathbb{E}_{\xi_k}[\|Q_{k+1}-Q_k\|^2]
    \nonumber\\
    % ========================
    \le&\ \frac{3 L}{2}\mathbb{E}_{\xi_k}[\|W_{k+1}-W_k\|^2]
    + 3\gamma^2L\mathbb{E}_{\xi_k,b_k}\lB\lnorm P_{k+1}-P_k + \alpha (\nabla f(\bar W_k; \xi_k)-\nabla f(\bar W_k))\odot F_p(P_k)\rnorm^2\rB
    \bkeq
    + 3\alpha^2 L\gamma^2 \mathbb{E}_{\xi_k}\lB\lnorm (\nabla f(\bar W_k; \xi_k)-\nabla f(\bar W_k))\odot F_p(P_k)\rnorm^2\rB + \frac{3\gamma^2 L}{2}\mathbb{E}_{\xi_k}[\|Q_{k+1}-Q_k\|^2]
    \nonumber\\
    % ========================
    \le&\ \frac{3 L}{2}\mathbb{E}_{\xi_k}[\|W_{k+1}-W_k\|^2]
    + 3\gamma^2L\mathbb{E}_{\xi_k,b_k}\lB\lnorm P_{k+1}-P_k + \alpha (\nabla f(\bar W_k; \xi_k)-\nabla f(\bar W_k))\odot F_p(P_k)-b_k\rnorm^2\rB
    % \bkeq
    + 3\alpha^2\gamma^2 L q_{\max}^2 \sigma^2
    \nonumber \\
     & + \frac{3\gamma^2 L}{2}\mathbb{E}_{\xi_k}[\|Q_{k+1}-Q_k\|^2]+{\Theta}\left(3\gamma^2 L\alpha\Delta w_{\min} \right)\nonumber
\end{align}
where the last inequality comes from Assumption \ref{ass:discretization} and the bounded variance assumption (Assumption \ref{assumption:noise})
\begin{align}
    \label{inequality:TT-convergence-1-T4-S1-T3}
    &\ 3\alpha^2\gamma^2 L\mathbb{E}_{\xi_k}\lB\lnorm (\nabla f(\bar W_k; \xi_k)-\nabla f(\bar W_k))\odot F_p(P_k)\rnorm^2\rB
    \\
    \le&\ 3\alpha^2\gamma^2 Lq_{\max}^2\mathbb{E}_{\xi_k}\lB\lnorm \nabla f(\bar W_k; \xi_k)-\nabla f(\bar W_k)\rnorm^2\rB
    % \nonumber\\
    \le 3\alpha^2\gamma^2 Lq_{\max}^2\sigma^2.
    \nonumber
\end{align}
\noindent
\textbf{Bound of the fifth term (d).} By Young's inequality, we have
\begin{align}
    \label{inequality:TT-convergence-1-T5}
    -\gamma \mathbb{E}_{\xi_k,b_k}[\la \nabla f(\bar W_k), Q_{k+1}-Q_k\ra]
    \le&\ \frac{\alpha\gamma}{8q_{\max}}\|\nabla f(\bar W_k)\|^2_{M_p(P_k)} 
    + \frac{2q_{\max}\gamma}{\alpha}\mathbb{E}_{\xi_k}[\|Q_{k+1}-Q_k\|^2_{M_p(P_k)^\dag}].
    % \\
    % =&\ \mathbb{E}_{\xi_k}[\la \nabla f(\bar W_k), T(\alpha P_{k+1}, W_k)\ra]
    % \nonumber \\
    % =&\ \la \nabla f(\bar W_k), T(\alpha P^*(W_k, Q_k), W_k)\ra
    % + \mathbb{E}_{\xi_k}[\la \nabla f(\bar W_k), T(\alpha P_{k+1}, W_k)-T(\alpha P^*(W_k, Q_k), W_k)\ra]
\end{align}

\noindent

\noindent
\textbf{Combination of the upper bound $(a)$, $(b)$, $(c)$ and $(d)$.} Plugging \eqref{inequality:TT-convergence-1-T2-2}, \eqref{inequality:TT-convergence-1-T3}, \eqref{inequality:TT-convergence-1-T4} into \eqref{inequality:TT-convergence-1} and letting $\alpha\le\frac{1}{2 \gamma  Lq_{\max}}$, 
\begin{align}
    \label{inequality:TT-convergence-2}
 \saveeq{inequality-saved:TT-barW-descent}{\mathbb{E}_{\xi_k,b_k}[f(\bar W_{k+1})]  
    \le&\ f(\bar W_k)
    -\frac{\alpha\gamma}{8 q_{\max}}\|\nabla f(\bar W_k)\|^2_{M_p(P_k)}
    +\frac{\alpha\gamma\sigma^2}{2}\lnorm\frac{G_p(P_k)}{\sqrt{F_p(P_k)}} \rnorm^2_\infty+{\Theta}\left(\frac{\gamma\Delta w_{\min}}{ q_{\min}}\right) \bkeq + \frac{2q_{\max} \gamma}{\alpha} \mathbb{E}_{\xi_k}\lB\|Q_{k+1}-Q_k\|^2_{M_p(P_k)^\dag}\rB  +\frac{3\gamma^2 L}{2} \mathbb{E}_{\xi_k}\lB\|Q_{k+1}-Q_k\|^2\rB  
    \\ \nonumber
    \hspace{-1em}
    &\ - \lp\frac{\gamma}{2\alpha q_{\max}}-3\gamma^2 L\rp~\mathbb{E}_{\xi_k,b_k}\lB\lnorm P_{k+1}-P_k + \alpha (\nabla f(\bar W_k; \xi_k)-\nabla f(\bar W_k))\odot F_p(P_k)-b_k\rnorm^2\rB
    \bkeq
    + 3\alpha^2 \gamma^2 L q_{\max}^2 \sigma^2
    + \frac{q_{\max}}{\alpha\gamma}~\mathbb{E}_{\xi_k}\lB\|W_{k+1}-W_k\|^2_{M_p(P_k)^\dag}\rB
    + \frac{3 L}{2}\mathbb{E}_{\xi_k}[\|W_{k+1}-W_k\|^2]
    % \nonumber \\
    % &\ + L\mathbb{E}_{\xi_k}[\|W_{k+1}-W_k\|^2]
    % + L\mathbb{E}_{\xi_k}[\|P_{k+1}-P_k\|^2]
    \nonumber.}
\end{align}
Now the proof of \eqref{inequality:TT-barW-descent} is completed. 
Leveraging the $L$-smooth assumption (Assumption \ref{assumption:Lip}), we have
\begin{align}
    \label{inequality:TT-P-convergence-1}
    &\ \mathbb{E}_{\xi_k,b_k}[f(W_k+\gamma (P_{k+1}-Q_k))] 
    % ========================
    = f(\bar W_k)
    + \gamma {\mathbb{E}_{\xi_k,b_k}[\la \nabla f(\bar W_k), P_{k+1}-P_k\ra]}
    + {\frac{L\gamma^2}{2}\mathbb{E}_{\xi_k,b_k}[\|P_{k+1}-P_k\|^2]}.
\end{align}
Plugging \eqref{inequality:TT-convergence-1-T2-2} and \eqref{inequality:TT-convergence-1-T4} into \eqref{inequality:TT-P-convergence-1}, we have 
\begin{align}
    \saveeq{inequality-saved:TT-P-in-barW-descent}{
        & \mathbb{E}_{\xi_k,b_k}[f(W_k+\gamma P_{k+1}-\gamma Q_k)] 
        \\ \le&\  f(\bar W_k)
        -\frac{\alpha\gamma}{2 q_{\max}}\|\nabla f(\bar W_k)\|^2_{M_p(P_k)}
        + \frac{\alpha\gamma\sigma^2}{2}\lnorm\frac{G_p(P_k)}{\sqrt{F_p(P_k)}} \rnorm^2_\infty +{\Theta}\left(\frac{\gamma\Delta w_{\min}}{ q_{\min}}\right) \nonumber
        \\
        &\ 
        -  \lp\frac{\gamma}{2\alpha q_{\max}}-\gamma^2 L\rp~\mathbb{E}_{\xi_k,b_k}\lB\lnorm P_{k+1}-P_k + \alpha (\nabla f(\bar W_k; \xi_k)-\nabla f(\bar W_k))\odot F_p(P_k)-b_k\rnorm^2\rB
        + \alpha^2\gamma^2 L q_{\max}^2 \sigma^2
        \nonumber.
        \\ \le&\  f(\bar W_k)
        -\frac{\alpha\gamma}{2 q_{\max}}\|\nabla f(\bar W_k)\|^2_{M_p(P_k)}
        + \frac{\alpha\gamma\sigma^2}{2}\lnorm\frac{G_p(P_k)}{\sqrt{F_p(P_k)}} \rnorm^2_\infty  + \alpha^2\gamma^2 L q_{\max}^2 \sigma^2+{\Theta}\left(\frac{\gamma\Delta w_{\min}}{ q_{\min}}\right)
        \nonumber 
    }
\end{align}
where the second inequality holds by $\alpha\le\frac{1}{2 \gamma  Lq_{\max}}$. Now the proof of \eqref{inequality:TT-P-in-barW-descent} is completed.
\end{proof}

\subsection{Proof of Lemma \ref{lemma:TT-W-descent}: Descent of sequence $W_k$}
\label{section:proof-lemma:TT-W-descent}
\LemmaTTWDescentScvx*
\begin{proof}[Lemma \ref{lemma:TT-W-descent}]
Recall the definition $P^*(W,Q)-Q = (W^* - W) / \gamma$, it holds that 
% The proof begins from manipulating the norm $\|W_{k+1}-W^*\|^2$
\begin{align}
    \label{inequality:TT-convergence-scvx-S1}
    &\ \|P^*(W_{k+1}, Q_{k+1})-Q_{k+1}\|^2 = \frac{1}{\gamma^2}\|W_{k+1}-\proj_{\ccalW^*}(W_{k+1})\|^2 
    \le \frac{1}{\gamma^2}\|W_{k+1}-\proj_{\ccalW^*}(W_k)\|^2 \\
    =&\ \frac{1}{\gamma^2}\|W_k-\proj_{\ccalW^*}(W_k)\|^2 + \frac{2}{\gamma^2}\la W_k-\proj_{\ccalW^*}(W_k), W_{k+1}-W_k\ra + \frac{1}{\gamma^2}\|W_{k+1}-W_k\|^2
    \nonumber \\
    =&\ \|P^*(W_k, Q_k)-Q_k\|^2 - \frac{2}{\gamma}\la P^*(W_k, Q_k)-Q_k, W_{k+1}-W_k\ra + \frac{1}{\gamma^2}\|W_{k+1}-W_k\|^2
    \nonumber
\end{align}
where the first inequality comes from the fact that $\proj_{\ccalW^*}(W_{k+1})$ is the closest point to $W_{k+1}$ in $\ccalW^*$.
We bound the second term in the \ac{RHS} of \eqref{inequality:TT-convergence-scvx-S1} by
\begin{align}
    \label{inequality:TT-convergence-scvx-S1-T2-S1}
    & \quad -\mathbb{E}_{b^\prime_k}\left[\frac{2}{\gamma}\langle P^*(W_k, Q_k)-Q_k, W_{k+1}-W_k\rangle\right]\\
    &= -\frac{2}{\gamma}\langle P^*(W_k, Q_k)-Q_k, 
    \beta (P_{k+1}-Q_k)\odot F_w(W_k) - \beta|P_{k+1}-Q_k|\odot G_w(W_k)\rangle
    \nonumber\\
    &= -\frac{2\beta}{\gamma} \langle P^*(W_k, Q_k)-Q_k,
    (P^*(W_k, Q_k)-Q_k)\odot F_w(W_k) - |P^*(W_k, Q_k)-Q_k|\odot G_w(W_k)\rangle
    \nonumber\\
     &\quad -\frac{2\beta}{\gamma} \langle P^*(W_k, Q_k)-Q_k,
    (P_{k+1}-Q_k)\odot F_w(W_k)
    - |P_{k+1}-Q_k|\odot G_w(W_k)
    \nonumber\\
    &\qquad
    - ((P^*(W_k, Q_k)-Q_k)\odot F_w(W_k)
    - |P^*(W_k, Q_k)-Q_k|\odot G_w(W_k))\rangle. \nonumber
\end{align}
where the first equality holds because $\mathbb{E}[b_k^\prime]=0$. 

The first term in the \ac{RHS} of \eqref{inequality:TT-convergence-scvx-S1-T2-S1} is bounded by
\begin{align}
    \label{inequality:TT-convergence-scvx-S1-T2-S1-T1}
    &\ -\frac{2\beta}{\gamma} \langle P^*(W_k, Q_k)-Q_k,
    (P^*(W_k, Q_k)-Q_k)\odot F_w(W_k) - |P^*(W_k, Q_k)-Q_k|\odot G_w(W_k)\rangle
    \\
    % ==================
    =&\ -\frac{2\beta}{\gamma}\la (P^*(W_k, Q_k)-Q_k)\odot \sqrt{F_w(W_k)}, \frac{(P^*(W_k, Q_k)-Q_k)\odot F_w(W_k) - |P^*(W_k, Q_k)-Q_k|\odot G_w(W_k)}{\sqrt{F_w(W_k)}}\ra
    \nonumber\\
    % ========================
    % =&\ -\frac{2\beta}{\gamma}\la P^*(W_k, Q_k)\odot \sqrt{F_w(W_k)}, P^*(W_k, Q_k)\odot \sqrt{F_w(W_k)}\ra
	% \bkeq
	% + \frac{2\beta}{\gamma}\la P^*(W_k, Q_k)\odot \sqrt{F_w(W_k)}, |P^*(W_k, Q_k)|\odot \frac{G_w(W_k)}{\sqrt{F_w(W_k)}}\ra
    % \nonumber\\
    % ========================
    \eqmark{a}&\ - \frac{\beta}{\gamma} \|(P^*(W_k, Q_k)-Q_k)  \odot \sqrt{F_w(W_k)}\|^2 
	+ \frac{\beta}{\gamma} \lnorm |P^*(W_k, Q_k)-Q_k| \odot \frac{G_w(W_k)}{\sqrt{F_w(W_k)}}\rnorm^2 
    \bkeq
	- \frac{\beta}{\gamma} \lnorm (P^*(W_k, Q_k)-Q_k) \odot \sqrt{F_w(W_k)} + |P^*(W_k, Q_k)-Q_k|\odot \frac{G_w(W_k)}{\sqrt{F_w(W_k)}} \rnorm^2 
    \nonumber\\
    % ========================
    \lemark{b}&\ - \frac{\beta}{\gamma q_{\max}} \|P^*(W_k, Q_k)-Q_k\|^2_{M_w(W_k)}
	- \frac{\beta}{\gamma} \lnorm (P^*(W_k, Q_k)-Q_k) \odot \sqrt{F_w(W_k)} + |P^*(W_k, Q_k)-Q_k|\odot \frac{G_w(W_k)}{\sqrt{F_w(W_k)}} \rnorm^2 
    \nonumber\\
    % ========================
    \le&\ - \frac{\beta}{\gamma q_{\max}} \|P^*(W_k, Q_k)-Q_k\|^2_{M_w(W_k)}
	- \frac{\beta}{\gamma q_{\max}} \lnorm (P^*(W_k, Q_k)-Q_k) \odot F_w(W_k) - |P^*(W_k, Q_k)-Q_k|\odot G_w(W_k) \rnorm^2 
    \nonumber
\end{align}
where $(a)$ leverages $2\la U, V\ra = \|U\|^2-\|V\|^2-\|U-V\|^2$ for any $U, V\in\reals^D$, $(b)$ is achieved by 
\begin{align}
    &\ \label{inequality:ASGD-converge-linear-3}
    - \frac{\beta}{2} \|(P^*(W_k, Q_k)-Q_k) \odot \sqrt{F_w(W_k)}\|^2 
    + \frac{\beta}{2} \lnorm|P^*(W_k, Q_k)-Q_k| \odot \frac{G_w(W_k)}{\sqrt{F_w(W_k)}}\rnorm^2 \\
    % ========================
    =&\ -\frac{\beta}{2}\sum_{d\in[D]} \lp [P^*(W_k, Q_k)-Q_k]_d^2 \lp [F_w(W_k)]_d-\frac{[G_w(W_k)]^2_d}{[F_w(W_k)]_d}\rp\rp
    \nonumber \\
    % ========================
    =&\ -\frac{\beta}{2}\sum_{d\in[D]} \lp [P^*(W_k, Q_k)-Q_k]_d^2 \lp \frac{[F_w(W_k)]^2_d-[G_w(W_k)]^2_d}{[F_w(W_k)]_d}\rp\rp
    \nonumber \\
    % ========================
    \le&\ -\frac{\beta}{2 q_{\max}}\sum_{d\in[D]} \lp [P^*(W_k, Q_k)-Q_k]_d^2 \lp [F_w(W_k)]_d^2-[G_w(W_k)]^2_d\rp\rp
    % ========================
    = \ -\frac{\beta}{2 q_{\max}}\|P^*(W_k, Q_k)-Q_k\|^2_{M_w(W_k)}.
    \nonumber
\end{align}
The second term in the \ac{RHS} of \eqref{inequality:TT-convergence-scvx-S1-T2-S1} follows the Lipschitz continuity of analog update (see Lemma \ref{lemma:lip-analog-update})
\begin{align}
    \label{inequality:TT-convergence-scvx-S1-T2-S1-T2}
    & -\frac{2\beta}{\gamma} \langle P^*(W_k, Q_k)-Q_k,
    (P_{k+1}-Q_k)\odot F_w(W_k)
    - |P_{k+1}-Q_k|\odot G_w(W_k)
   \\
    &\qquad
    - ((P^*(W_k, Q_k)-Q_k)\odot F_w(W_k)
    - |P^*(W_k, Q_k)-Q_k|\odot G_w(W_k))\rangle
    \nonumber \\
    & =\frac{2\beta}{\gamma} \langle P^*(W_k, Q_k)-Q_k,
    -(P_{k+1}-Q_k)\odot F_w(W_k)
    + |P_{k+1}-Q_k|\odot G_w(W_k)
    \nonumber\\
    &\qquad
    +((P^*(W_k, Q_k)-Q_k)\odot F_w(W_k)
    - |P^*(W_k, Q_k)-Q_k|\odot G_w(W_k))\rangle
    \nonumber \\
    % ==================
    % \le&\ 
    % \frac{2\beta}{\gamma}\|W_k-W^*\|
    % \lnorm P_{k+1}\odot F_w(W_k) - |P_{k+1}|\odot G_w(W_k)-(P^*(W_k, Q_k)\odot F_w(W_k) - |P^*(W_k, Q_k)|\odot G_w(W_k)) \rnorm
    % \nonumber \\
    % % ==================
    % \le&\ 
    % \frac{2\beta q_{\max}^2}{\gamma}\|W_k-W^*\|\lnorm P_{k+1} - P^*(W_k, Q_k)\rnorm
    % \\
    % % ==================
    % \le&\ 
    % 2\beta q_{\max}^2\|P^*(W_k, Q_k)\|\lnorm P_{k+1} - P^*(W_k, Q_k)\rnorm
    % \nonumber\\
    % % ==================
    &\le 
    \frac{\beta}{2\gamma q_{\max}}\|P^*(W_k, Q_k)-Q_k\|^2_{M_w(W_k)}
    + \frac{2\beta q_{\max}}{\gamma}
    \times\Bigl\|
    (P_{k+1}-Q_k)\odot F_w(W_k)
    \nonumber\\
    & \quad
    - |P_{k+1}-Q_k|\odot G_w(W_k)
    - \bigl((P^*(W_k, Q_k)-Q_k)\odot F_w(W_k)
    - |P^*(W_k, Q_k)-Q_k|\odot G_w(W_k)\bigr)
    \Bigr\|^2_{M_w(W_k)^\dag}.
    \nonumber \\
    % ==================
    &\le\ 
    \frac{\beta}{2\gamma q_{\max}}\|P^*(W_k, Q_k)-Q_k\|^2_{M_w(W_k)}
    + \frac{2\beta q_{\max}^3}{\gamma}\lnorm P_{k+1} - P^*(W_k, Q_k)\rnorm^2_{M_w(W_k)^\dag}.\nonumber
\end{align}
Substituting \eqref{inequality:TT-convergence-scvx-S1-T2-S1-T1} and \eqref{inequality:TT-convergence-scvx-S1-T2-S1-T2} into \eqref{inequality:TT-convergence-scvx-S1-T2-S1}, we bound the second term in the \ac{RHS} of \eqref{inequality:TT-convergence-scvx-S1} by
\begin{align}
    \label{inequality:TT-convergence-scvx-S1-T2S2}
    &\ -\frac{2}{\gamma}\mathbb{E}_{b^\prime_k}\left[\la P^*(W_k, Q_k)-Q_k, W_{k+1}-W_k\ra\right]
    \\
    % ==================
    \le&\ - \frac{\beta}{2\gamma q_{\max}} \|P^*(W_k, Q_k)-Q_k\|^2_{M_w(W_k)}
	- \frac{\beta}{\gamma q_{\max}} \lnorm (P^*(W_k, Q_k)-Q_k) \odot F_w(W_k) - |P^*(W_k, Q_k)-Q_k|\odot G_w(W_k) \rnorm^2 
    \bkeq
    + \frac{2\beta q_{\max}^3}{\gamma}\lnorm P_{k+1} - P^*(W_k, Q_k)\rnorm^2_{M_w(W_k)^\dag}.
    % + 2\beta q_{\max}^2\|P^*(W_k, Q_k)\|\lnorm P_{k+1} - P^*(W_k, Q_k)\rnorm
    \nonumber
\end{align}
The third term in the \ac{RHS} of \eqref{inequality:TT-convergence-scvx-S1} follows the Lipschitz continuity of the analog update (Lemma \ref{lemma:lip-analog-update})
\begin{align}
    \label{inequality:TT-convergence-scvx-S1-T3}
    &\ \frac{1}{\gamma^2}\mathbb{E}_{b^\prime_k}\left[\|W_{k+1}-W_k\|^2\right]
    = \frac{\beta^2}{\gamma^2} \|(P_{k+1}-Q_k)\odot F_w(W_k) - |P_{k+1}-Q_k|\odot G_w(W_k)\|^2+{\Theta}\left(\frac{\beta\Delta w_{\min}}{\gamma^2}\right)\nonumber
    \\
    % ==================
    \le&\ \frac{2\beta^2}{\gamma^2} \|(P^*(W_k, Q_k)-Q_k)\odot F_w(W_k) - |P^*(W_k, Q_k)-Q_k|\odot G_w(W_k)\|^2
    \nonumber \\
    & + \frac{2\beta^2}{\gamma^2}
    \bigl\|
    (P_{k+1}-Q_k)\odot F_w(W_k)
    - |P_{k+1}-Q_k|\odot G_w(W_k)
    \nonumber\\
    &\qquad
    - \bigl((P^*(W_k, Q_k)-Q_k)\odot F_w(W_k)
    - |P^*(W_k, Q_k)-Q_k|\odot G_w(W_k)\bigr)
    \bigr\|^2+{\Theta}\left(\frac{\beta\Delta w_{\min}}{\gamma^2}\right)
    \nonumber\\
    % ==================
    \le&\ \frac{2\beta^2}{\gamma^2} \|(P^*(W_k, Q_k)-Q_k)\odot F_w(W_k) - |P^*(W_k, Q_k)-Q_k|\odot G_w(W_k)\|^2
    + \frac{2\beta^2}{\gamma^2} \|P_{k+1} - P^*(W_k, Q_k)\|^2\nonumber\\
    &~~+{\Theta}\left(\frac{\beta\Delta w_{\min}}{\gamma^2}\right)
\end{align}
Plugging \eqref{inequality:TT-convergence-scvx-S1-T2S2} and \eqref{inequality:TT-convergence-scvx-S1-T3} into \eqref{inequality:TT-convergence-scvx-S1} yields
\begin{align}
    \mathbb{E}_{b^\prime_k}\left[\|P^*(W_{k+1}, Q_{k+1})-Q_{k+1}\|^2 \right]\le&\ \|P^*(W_k, Q_k)-Q_k\|^2 - \frac{\beta}{2\gamma q_{\max}} \|P^*(W_k, Q_k)-Q_k\|^2_{M_w(W_k)}+{\Theta}\left(\frac{\beta\Delta w_{\min}}{\gamma^2}\right)\nonumber
    \\
	&\ - \lp\frac{\beta}{\gamma q_{\max}} - \frac{2\beta^2}{\gamma^2}\rp\lnorm (P^*(W_k, Q_k)-Q_k) \odot F_w(W_k) - |P^*(W_k, Q_k)-Q_k|\odot G_w(W_k) \rnorm^2 
    \bkeq
    + \frac{2\beta q_{\max}^3}{\gamma}\lnorm P_{k+1} - P^*(W_k, Q_k)\rnorm^2_{M_w(W_k)^\dag}
    % + 2\beta q_{\max}^2\|P^*(W_k, Q_k)\|\lnorm P_{k+1} - P^*(W_k, Q_k)\rnorm
    + \frac{2\beta^2}{\gamma^2} \|P_{k+1} - P^*(W_k, Q_k)\|^2.
\end{align}
Notice the learning rate $\beta$ is chosen as $\beta \le \frac{\gamma}{2q_{\max}}$, we have
\begin{align}
\saveeq{inequality-saved:TT-W-descent}{
    \mathbb{E}_{b^\prime_k}\left[\|P^*(W_{k+1}, Q_{k+1})-Q_{k+1}\|^2\right] \le&\ \|P^*(W_k, Q_k)-Q_k\|^2 - \frac{\beta}{2\gamma q_{\max}} \|P^*(W_k, Q_k)-Q_k\|^2_{M_w(W_k)}+{\Theta}\left(\frac{\Delta w_{\min}}{\gamma q_{\max}}\right)\nonumber
    \\
	&\ - \frac{\beta}{2\gamma q_{\max}} \lnorm (P^*(W_k, Q_k)-Q_k) \odot F_w(W_k) - |P^*(W_k, Q_k)-Q_k|\odot G_w(W_k) \rnorm^2 
    \bkeq
    + \frac{2\beta q_{\max}^3}{\gamma}\lnorm P_{k+1} - P^*(W_k, Q_k)\rnorm^2_{M_w(W_k)^\dag}
    % + 2\beta q_{\max}^2\|P^*(W_k, Q_k)\|\lnorm P_{k+1} - P^*(W_k, Q_k)\rnorm
    + \frac{2\beta^2}{\gamma^2} \|P_{k+1} - P^*(W_k, Q_k)\|^2
}
\end{align}
which completes the proof. 
\end{proof}

\subsection{Proof of Lemma \ref{lemma:TT-phi-descent}: Descent of accumulated asymmetric  sequence $\varphi(P_k)$}
	\label{section:proof-accumulated-asymmetric function}
	In this section, we formally define the accumulated asymmetric function $\varphi(P):\reals^D\to\reals^D$ element-wise  defined by
	\begin{align}
		[\varphi(P)]_d := \int_{\tau_i^{\min}}^{[P]_d} ~ [G_p(P)]_d ~\diff{[P]_d}.
	\end{align}
\LemmaTTPhiDescentScvx*
\begin{proof}[Lemma \ref{lemma:TT-phi-descent}]
The Lipschitz continuity of the generic response function ensures that 
the accumulated asymmetric function $\varphi(P)$ is $L$-smooth.
\begin{align}
    \label{inequality:TT-phi-convergence-1}
    \ \mathbb{E}_{\xi_k,b_k}[\varphi(P_{k+1})] \le \varphi(P_k)
    +\underbrace{\mathbb{E}_{\xi_k,b_k}[\la G_p(P_k), P_{k+1}-P_k\ra]}_{(a)}
    + \underbrace{\frac{L}{2}\mathbb{E}_{\xi_k,b_k}[\|P_{k+1}-P_k\|^2]}_{(b)} 
\end{align}
Next, we will handle each term in the \ac{RHS} of \eqref{inequality:TT-phi-convergence-1} separately.

\noindent
\textbf{Bound of the second term (a).} 
\begin{align}
\label{inequality:TT-phi-convergence-2}
   & \mathbb{E}_{\xi_k,b_k} \big[\langle G_p(P_k),  P_{k+1}-P_k\rangle\big] \nonumber \\[3pt]
=  & \mathbb{E}_{\xi_k} \big[\langle G_p(P_k),  
    -\alpha  \nabla f(\bar W_k,\xi_k) \odot  F_p(P_k)
    -\alpha  |\nabla f(\bar W_k,\xi_k)| \odot  G_p(P_k)\rangle\big] \nonumber \\[3pt]
=  & \langle G_p(P_k),  
    -\alpha  \nabla f(\bar W_k) \odot  F_p(P_k)
    -\alpha  \mathbb{E}_{\xi_k}[|\nabla f(\bar W_k,\xi_k)|] \odot  G_p(P_k)\rangle \nonumber \\[3pt]
=  & \Big\langle 
    \sqrt{\alpha  \mathbb{E}_{\xi_k}[|\nabla f(\bar W_k,\xi_k)|]} \odot  G_p(P_k), 
    - \frac{\sqrt{\alpha}}{\sqrt{\mathbb{E}_{\xi_k}[|\nabla f(\bar W_k,\xi_k)|]}} 
      \nabla f(\bar W_k) \odot  F_p(P_k)
    - \sqrt{\alpha  \mathbb{E}_{\xi_k}[|\nabla f(\bar W_k,\xi_k)|]} \odot  G_p(P_k)
    \Big\rangle \nonumber \\[3pt]
\eqmark{a}  & 
-\frac{1}{2}\Big\|\sqrt{\alpha  \mathbb{E}_{\xi_k}[|\nabla f(\bar W_k,\xi_k)|]} \odot  G_p(P_k)\Big\|^2
+\frac{1}{2}\Big\|\frac{\sqrt{\alpha}}{\sqrt{\mathbb{E}_{\xi_k}[|\nabla f(\bar W_k,\xi_k)|]}} 
\nabla f(\bar W_k) \odot  F_p(P_k)\Big\|^2 \nonumber \\[3pt]
&\hspace{1cm}
-\frac{1}{2}\Big\|
\frac{\sqrt{\alpha}}{\sqrt{\mathbb{E}_{\xi_k}[|\nabla f(\bar W_k,\xi_k)|]}} 
\nabla f(\bar W_k) \odot  F_p(P_k)
+\sqrt{\alpha  \mathbb{E}_{\xi_k}[|\nabla f(\bar W_k,\xi_k)|]} \odot  G_p(P_k)
\Big\|^2 \nonumber \\[3pt]
\le  &
-\frac{\alpha}{2}\sum_{d\in[D]}
   \mathbb{E}_{\xi_k} \big[[|\nabla f(\bar W_k,\xi_k)|]_d\big] 
   [G_p(P_k)]_d^2
+\frac{\alpha}{2}\sum_{d\in[D]}
   \frac{[|\nabla f(\bar W_k)|]_d^2}{\mathbb{E}_{\xi_k} \big[[|\nabla f(\bar W_k,\xi_k)|]_d\big]} 
   [F_p(P_k)]_d^2  \nonumber \\
\lemark{b} &\ -\frac{\alpha C_\star}{2}  \|G_p(P_k)\|^2
 +  \frac{\alpha}{2C_\star}  \|\nabla f(\bar W_k)\odot F_p(P_k)\|^2 \nonumber\\
\le &\ -\frac{\alpha C_\star}{2}  \|G_p(P_k)\|^2
 +  \frac{\alpha q_{\max}^2}{2C_\star}  \|\nabla f(\bar W_k)\|^2.
\end{align}
% The third equality holds for $
% \mathbb{E}_{\xi_k} \big[|\nabla f(\bar W_k, \xi_k)| \big]
% =
% c_1 + |\nabla f(\bar W_k)| \odot c_2
% $
% by assuming \(\nabla f(\bar{W}_k, \xi_k)
% = \nabla f(\bar{W}_k) + \epsilon_k, 
% \epsilon_{k,i} \sim \mathcal{N}(0, \sigma_{k,i}^2)
% \) and defining $c_1 :=  \sigma_k \sqrt{\tfrac{2}{\pi}}
%    e^{-\tfrac{(\nabla f(\bar W_k))^{\odot 2}}{2\sigma_k^{\odot 2}}}$, $c_2 :=  \mathrm{erf} \left(
%     \tfrac{|\nabla f(\bar W_k)|}{\sqrt{2}\sigma_k}
%   \right)$.
The equality (a) holds by  $\la A,-B \ra = -\frac{\|A\|^2}{2} -\frac{\|B\|^2}{2} +\frac{\|A-B\|^2}{2}$, the equality (b) holds by Assumption \ref{assump:rayleigh-lower}.

\noindent
\textbf{Bound the third term (b).}
Follow \eqref{inequality:TT-convergence-1-T4}, we have:
\begin{align}
\label{inequality:TT-phi-convergence-3}
& \frac{L}{2}\mathbb{E}_{\xi_k,b_k}[\|P_{k+1}-P_k\|^2]
    \nonumber\\
    % ========================
    \le&\ L\mathbb{E}_{\xi_k,b_k}\lB\lnorm P_{k+1}-P_k + \alpha (\nabla f(\bar W_k; \xi_k)-\nabla f(\bar W_k))\odot F_p(P_k)-b_k\rnorm^2\rB
    + \alpha^2 L q_{\max}^2 \sigma^2+{\Theta}\left(\alpha L\Delta w_{\min}\right).
\end{align}
Substituting \eqref{inequality:TT-phi-convergence-2} and \eqref{inequality:TT-phi-convergence-3} into \eqref{inequality:TT-phi-convergence-1} and note that $\alpha\le\frac{1}{2 \gamma  Lq_{\max}}$, we have:
\begin{align}
        \label{inequality:TT-phi-convergence}
        \saveeq{inequality-saved:TT-phi-descent}{
    \mathbb{E}_{\xi_k,b_k}[\varphi(P_{k+1})]
    \le & 
    \varphi(P_{k}) -\frac{\alpha C_\star}{2}  \|G_p(P_k)\|^2
    + \alpha^2 L q_{\max}^2 \sigma^2  +  \frac{\alpha q_{\max}^2}{2C_\star}  \|\nabla f(\bar W_k)\|^2+{\Theta}\left(\frac{\Delta w_{\min}}{\gamma q_{\max}}\right) \bkeq  +  L\mathbb{E}_{\xi_k,b_k}\lB\lnorm P_{k+1}-P_k + \alpha (\nabla f(\bar W_k; \xi_k)-\nabla f(\bar W_k))\odot F_p(P_k)-b_k\rnorm^2\rB.}
\end{align}

\end{proof}

\section{Additional Experiments and Experimental Setups}
\label{section:experimental_details}
\subsection{Device implementation details}\label{sec:implementation_device}

% To illustrate the negative impact of $\Delta w_{\min}$ on the computational complexity of the zero-shifting algorithm,

In our experiments, we use two ReRAM array device presets from IBM AIHWKit simulator~\cite{rasch2021flexible}: the HfO$_2$-based RRAM preset and the ReRamArrayOM preset in ~\citep{gong2022deep}. Both presets inherit from the \texttt{SoftBoundsReferenceDevice} model with potentiation and
depression response functions as 
\begin{align}\label{soft-bound-eq}
q_{+}(w)
=
\alpha_{+}\left(1-\frac{w}{\tau_{\max}}\right),
\qquad
q_{-}(w)
=
\alpha_{-}\left(1+\frac{w}{\tau_{\min}}\right),
\end{align}
where $\alpha_{+},\alpha_{-}>0$ determine the effective update magnitudes
for potentiation and depression, respectively. Following the
SoftBounds-reference parameterization, we decompose these slopes as
\begin{align}
\alpha_{+}=\gamma+\rho,
\qquad
\alpha_{-}=\gamma-\rho,
\end{align}
where $\gamma$ controls the common slope magnitude and $\rho$ controls the
up/down asymmetry. In our model, these quantities are sampled independently
for each cross-point as
\begin{align}
\gamma_{ij}=\exp\!\left(\sigma_{\mathrm{d2d}}\xi^{(\gamma)}_{ij}\right),
\qquad
\rho_{ij}=\sigma_{\pm}\xi^{(\rho)}_{ij},
\end{align}
where $\xi^{(\gamma)}_{ij},\xi^{(\rho)}_{ij}\sim\mathcal{N}(0,1)$.
Thus, $\sigma_{\mathrm{d2d}}$ controls device-to-device variation in the
slope magnitude, while $\sigma_{\pm}$ controls device-to-device variation
in the potentiation/depression asymmetry. 

At the array level, the response functions are applied elementwise. For a
weight matrix $W$, we define
\begin{align}
\bigl[Q_{+}(W)\bigr]_{ij}
&=
q_{+,ij}(W_{ij})
=
\alpha_{+,ij}\left(1-\frac{W_{ij}}{\tau_{\max}}\right),\\
\bigl[Q_{-}(W)\bigr]_{ij}
&=
q_{-,ij}(W_{ij})
=
\alpha_{-,ij}\left(1+\frac{W_{ij}}{\tau_{\min}}\right)
\end{align}
where each cross-point $(i,j)$ has its own sampled device parameters
$(\gamma_{ij},\rho_{ij})$. 

Besides, IBM AIHWKit simulator~\cite{rasch2021flexible} also provides the cycle-to-cycle variation, which is modeled separately as a fresh random perturbation
at each update pulse. For example, the scalar update response at cycle $k$
can be written as
\begin{align}
\Delta w^{+}_{ij,k}
&=
\Delta w_{\min}\,
q_{+,ij}(w_{ij,k})
\left(1+\sigma_{\mathrm{c2c}}\zeta^{+}_{ij,k}\right),\\
\Delta w^{-}_{ij,k}
&=
-\Delta w_{\min}\,
q_{-,ij}(w_{ij,k})
\left(1+\sigma_{\mathrm{c2c}}\zeta^{-}_{ij,k}\right),
\end{align}
where $\zeta^{+}_{ij,k},\zeta^{-}_{ij,k}\sim\mathcal{N}(0,1)$ are sampled
independently across devices and update cycles.

Under this model, there exists a unique SP at which the average positive and negative update sizes become equal; this weight is the SP \(w^\diamond\). The ground truth value for SP of the device can be computed as:
\begin{align}
    w^\diamond = \frac{\alpha_+ - \alpha_-}{\frac{\alpha_+}{\tau_{\max}} - \frac{\alpha_-}{\tau_{\min}}} = \frac{2\rho}{\frac{\gamma+\rho}{\tau_{\max}} - \frac{\gamma-\rho}{\tau_{\min}}}
\end{align}

We evaluate two HfO$_2$-based ReRAM device models, whose non-idealities include device-to-device (d2d) and cycle-to-cycle (c2c) variation, update asymmetry, and limited conductance states, as summarized in Table~\ref{tab:device_params}.

\begin{table}[t]
\centering
\caption{Summary of the device parameters for the two HfO$_2$-based ReRAM models.}
\label{tab:device_params}
\begin{tabular}{lcccc}
\toprule
Device model & $(\tau_{\min}, \tau_{\max})$ & $\Delta w_{\min}$ & d2d variation $\sigma_\pm$ & c2c variation $\sigma_{\mathrm{c2c}}$ \\
\midrule
HfO$_2$-based ReRAM model~\citep{gong2022deep} 
& $(-1.0, 1.0)$ & $0.4622$ & $0.7125$ & $0.2174$ \\
ReRamArrayOMPresetDevice~\citep{gong2022deep} 
& $(-1.0, 1.0)$ & $0.0949$ & $0.7829$ & $0.4158$ \\
\bottomrule
\end{tabular}
\end{table} 

\begin{table}[t]
  \centering
  \caption{Hyperparameters for TT-v2, AGAD, and E-RIDER used in the analog training runs for Table~\ref{tab:tt_rl_ref_mean_std}.}
  \label{tab:tt_rl_hparams}
  \small
  \setlength{\tabcolsep}{5.5pt}
  \renewcommand{\arraystretch}{1.12}
  \begin{tabularx}{\linewidth}{@{}>{\raggedright\arraybackslash}X|>{\centering\arraybackslash}p{0.16\linewidth}|>{\centering\arraybackslash}p{0.16\linewidth}|>{\centering\arraybackslash}p{0.16\linewidth}@{}}
    \hline
    \textbf{Hyperparameter} & \textbf{TT-v2} & \textbf{AGAD} & \textbf{E-RIDER} \\
    \hline
    % Construction seed & 123 & 123 & 123 \\
    Units in mini-batch & \textbackslash & True & True \\
    Transfer frequency (\texttt{transfer\_every}) & 1.0 & 1.0 & 1.0 \\
    Transfer columns (\texttt{transfer\_columns}) & True & True & True \\
    Self-transfer (\texttt{no\_self\_transfer}) & True & True & True \\
    Reads per transfer (\texttt{n\_reads\_per\_transfer}) & 1 & 1 & 1 \\
    \hline
    Input chopping (\texttt{in\_chop\_prob}) & \textbackslash & 0.1 & 0.05 \\
    Input chopper random (\texttt{in\_chop\_random}) & \textbackslash & False & False \\
    Output chopping (\texttt{out\_chop\_prob}) & \textbackslash & 0.0 & 0.0 \\
    \hline
    Fast learning rate (\texttt{fast\_lr}) & 0.005 & 0.01 & 0.5 \\
    Scale fast LR (\texttt{scale\_fast\_lr}) & \textbackslash & False & False \\
    Transfer learning rate (\texttt{transfer\_lr}) & 0.005 & 0.2 & 0.05 \\
    Scale transfer LR (\texttt{scale\_transfer\_lr}) & True & True & False \\
    \hline
    Auto granularity (\texttt{auto\_granularity}) & \textbackslash & 1000 & 1000 \\
    Auto scale (\texttt{auto\_scale}) & \textbackslash & False & False \\
    Auto momentum (\texttt{auto\_momentum}) & \textbackslash & 0.99 & 0.99 \\
    Momentum (\texttt{momentum}) & 0.1 & 0.0 & 0.0 \\
    Forget buffer (\texttt{forget\_buffer}) & True & True & True \\
    Tail weighting (\texttt{tail\_weightening}) & \textbackslash & 5.0 & 5.0 \\
    \hline
    Threshold scale (\texttt{thres\_scale}) & 0.8 & \textbackslash & \textbackslash \\
    Gamma (\texttt{gamma}) & \textbackslash & \textbackslash & 0.1 \\
    \hline
  \end{tabularx}
\end{table}

\subsection{Details of implementation for Figure \ref{fig:tradeoff_accuracy_pulse_cost} and Figure \ref{fig:training_loss_vs_pulses}}\label{sec:implementation_fig1_fig2} 

In practice, however, the SP of an analog device is usually determined
experimentally by applying alternating positive and negative update pulses until
the weight converges. We denote by $r(N)$ the SP estimated in this
way after $N$ alternating pulses. To study how well this procedure recovers
the true SP, we run simulations on the same $512\times512$ array with different
pulse budgets $N\in\{500, 1000, 2000, 4000, 8000\}$ , using a SoftBounds-based RPU preset with 2000 states. Figure~\ref{fig:sp_mean_offset} shows the empirical offsets of the SP mean and standard deviation over the $512\times512$ array, defined as the ground-truth statistics minus the corresponding estimated statistics. 
Here, the estimated SP statistics are computed by first estimating the SP for each array element, and then taking the mean and standard deviation across the $512\times512$ per-element SP estimates. As $N$ increases, the
distribution of $r(N)$ gradually moves towards and eventually almost
coincides with the ground-truth distribution of $w^\diamond$.
For smaller pulse counts, however, both the mean and the  standard deviation  of
$r$ can deviate significantly from those of $w^\diamond$.
We model this mismatch element-wise using the following stochastic offset model:
\begin{equation}
  {r}
  = w^\diamond + \mu_r + \sigma_r \xi,\qquad
  \xi\sim\mathcal{N}(0,1),
\end{equation}
where $\mu_r$ captures the systematic offsets between the means of the two
distributions and $\sigma_r$ accounts for the variance
offsets introduced by using a finite number of pulses. The combined term
% \begin{equation}
 $ o_r = \mu_r + \sigma_r \xi$
% \end{equation}
represents the residual offset on the reference device after SP subtraction.
Our experiments in Figures~\ref{fig:sp_mean_offset}  demonstrate that when the number of alternating
pulses is not sufficiently large, this offset $o_r$ 
introduces a non-negligible error in the estimated SP and the subsequent training. This effect is clearly visible in Figure~\ref{fig:training_loss_vs_pulses}, where we train a LeNet-5(MNIST) with TT-v1 using SPs estimated from different numbers of pulses: smaller $N$ leads to significant degradation and even failure to converge.

Furthermore, the pulse cost required for accurate SP estimation increases as the device update granularity $\Delta w_{\min}$ improves.
In Figure~\ref{fig:dwmin_vs_pulses}, we sweep $\Delta w_{\min}$ from $5\times 10^{-3}$ down to $1.6\times 10^{-6}$ and, for each setting, estimate the SP using alternating pulses with a discrete pulse-budget schedule
$N\in\{200,500,10^3,2\times10^3,\ldots,8.192\times10^6\}$.
We then report the smallest $N$ such that the relative error of the estimated SP mean  is within $1\%$ of the ground-truth mean.
The results show that higher-precision devices  require substantially more pulses to reach the same accuracy target.
Consequently, determining
the SP by alternating pulses exhibits an inherent trade-off between pulse
overhead and accuracy.

\subsection{Hyperparameter settings for Tables \ref{tab:tt_rl_ref_mean_std} and \ref{tab:tt_rl_ref_mean_std_fcn}}
\label{sec:hyperparameter_t1_t2}
We report the hyperparameter settings  used in our MNIST experiments with fully analog LeNet-5 and FCN models  and the IO/analog readout noise configuration used in both MNIST and CIFAR 100 tasks. We use a fully analog LeNet-5–style CNN with two \(5\times5\) convolution layers with 16 and 32 channels and two analog fully connected layers of sizes 512 and 128. The network uses \(\tanh\) activations in the hidden layers. For the fully connected network, we set the input dimension of 784, two hidden layers of sizes 256 and 128, and a 10-class output layer. The model uses sigmoid activations in the hidden layers. For LeNet-5, we use a batch size of 8, while for the FCN we use a batch size of 10.

\begin{table}[t]
  \centering
  \caption{Hyperparameters for baseline algorithms and E-RIDER used in the CIFAR-100 fine-tuning experiments.}
  \label{tab:cifar100_hparams}
  \small
  \setlength{\tabcolsep}{6pt}
  \renewcommand{\arraystretch}{1.15}
  \begin{tabularx}{\linewidth}{@{}>{\raggedright\arraybackslash}X|>{\centering\arraybackslash}p{0.19\linewidth}|>{\centering\arraybackslash}p{0.19\linewidth}|>{\centering\arraybackslash}p{0.19\linewidth}@{}}
    \hline
    \textbf{Category} & \textbf{TT-v2} & \textbf{AGAD} & \textbf{E-RIDER} \\
    \hline
    Global learning rate ($\texttt{lr}$) & 0.15 & 0.15 & 0.2 \\
    \hline
    Fast learning rate ($\texttt{fast\_lr}$) & 0.01 & 0.01 & 0.1 \\
    Transfer learning rate ($\texttt{transfer\_lr}$) & 0.5 & 0.5 & 0.01 \\
    Scale transfer LR ($\texttt{scale\_transfer\_lr}$) & True & True & True \\
    Momentum ($\texttt{momentum}$) & 0.1 & 0.0 & 0.0 \\
    Threshold scale ($\texttt{thres\_scale}$) & 1.0 & \textbackslash & \textbackslash \\
    \hline
    Input chopping ($\texttt{in\_chop\_prob}$) & \textbackslash & 0.1 & 0.05 \\
    Output chopping ($\texttt{out\_chop\_prob}$) & \textbackslash & 0.0 & 0.0 \\
    Auto granularity ($\texttt{auto\_granularity}$) & \textbackslash & 1000 & 1000 \\
    \hline
    Gamma ($\texttt{gamma}$) & \textbackslash & \textbackslash & 0.1 \\
    \hline
  \end{tabularx}
\end{table}

\begin{table}[t]
  \centering
  \caption{Hyperparameters for TT-v2, AGAD, and E-RIDER used in the analog training runs for Table~\ref{tab:tt_rl_ref_mean_std_fcn}.}
  \label{tab:tt_rl_hparams_fcn}
  \small
  \setlength{\tabcolsep}{6pt}
  \renewcommand{\arraystretch}{1.15}
  \begin{tabularx}{\linewidth}{@{}>{\raggedright\arraybackslash}X|>{\centering\arraybackslash}p{0.20\linewidth}|>{\centering\arraybackslash}p{0.20\linewidth}|>{\centering\arraybackslash}p{0.20\linewidth}@{}}
    \hline
    \textbf{Category} & \textbf{TT-v2} & \textbf{AGAD} & \textbf{E-RIDER} \\
    \hline
    Input chopping (\texttt{in\_chop\_prob}) & \textbackslash & 0.1 & 0.05 \\
    Input chopper random (\texttt{in\_chop\_random}) & \textbackslash & False & False \\
    Output chopping (\texttt{out\_chop\_prob}) & \textbackslash & 0.0 & 0.0 \\
    \hline
    Fast learning rate (\texttt{fast\_lr}) & 0.03 & 0.05 & 0.5 \\
    Scale fast LR (\texttt{scale\_fast\_lr}) & \textbackslash & False & False \\
    Transfer learning rate (\texttt{transfer\_lr}) & 0.01 & 0.3 & 0.05 \\
    Scale transfer LR (\texttt{scale\_transfer\_lr}) & True & True & False \\
    \hline
    Threshold scale (\texttt{thres\_scale}) & 0.8 & \textbackslash & \textbackslash \\
    Gamma (\texttt{gamma}) & \textbackslash & \textbackslash & 0.1 \\
    \hline
  \end{tabularx}
\end{table}
\begin{table}[h]
  \centering
  \caption{IO and analog readout noise settings for the forward, backward, and transfer-forward passes.}
  \label{tab:io_noise_settings}
  \small
  \setlength{\tabcolsep}{6pt}
  \renewcommand{\arraystretch}{1.15}
  \begin{tabular}{l c c c}
    \hline
    \textbf{Parameter} & \textbf{Forward} & \textbf{Backward} & \textbf{Transfer-forward} \\
    \hline
    MV type & ONE\_PASS & ONE\_PASS & ONE\_PASS \\
    Noise management & ABS\_MAX & ABS\_MAX & NONE \\
    Bound management & ITERATIVE & ITERATIVE & NONE \\
    \hline
    Input bound ($\texttt{inp\_bound}$) & 1.0 & 1.0 & 1.0 \\
    Input resolution ($\texttt{inp\_res}$) & 0.0079365 & 0.0079365 & 0.0079365 \\
    Input noise ($\texttt{inp\_noise}$) & 0.0 & 0.0 & 0.0 \\
    % Input asymmetry ($\texttt{inp\_asymmetry}$) & 0.0 & 0.0 & 0.0 \\
    \hline
    Output bound ($\texttt{out\_bound}$) & 12.0 & 12.0 & 12.0 \\
    Output resolution ($\texttt{out\_res}$) & 0.0019608 & 0.0019608 & 0.0019608 \\
    Output noise ($\texttt{out\_noise}$) & 0.06 & 0.06 & 0.06 \\
    % Output asymmetry ($\texttt{out\_asymmetry}$) & 0.0 & 0.0 & 0.0 \\
    % \hline
    % Weight read noise type & NONE & NONE & NONE \\
    % Weight read noise ($\texttt{w\_noise}$) & 0.0 & 0.0 & 0.0 \\
    % IR-drop ($\texttt{ir\_drop}$) & 0.0 & 0.0 & 0.0 \\
    % Voltage offset std ($\texttt{v\_offset\_std}$) & 0.0 & 0.0 & 0.0 \\
    % Output nonlinearity ($\texttt{out\_nonlinearity}$) & 0.0 & 0.0 & 0.0 \\
    \hline
    Input stochastic rounding & False & False & False \\
    Output stochastic rounding & False & False & False \\
    \hline
  \end{tabular}
\end{table}

We focus our hyperparameter search on the learning-rate-related parameters, auto-granularity (threshold scale), and the input chopping probability. Consequently, the main distinctions between TT-v2, AGAD and E-RIDER are reflected in these tuned parameters. In addition, E-RIDER includes an extra residual scaling parameter \(\gamma\), which controls the contribution of the residual term during gradient computation and is absent in AGAD. For the global learning rate on the LeNet model, we set TT-v2 to 0.005, and AGAD and E-RIDER to 0.05. Other parameters are kept identical across methods unless stated otherwise. It is worth noting that we use a very small learning rate for TT-v2, because with a small number of states and a large reference-point offset, training becomes unstable and can diverge after several epochs when a larger learning rate is used. Despite the smaller learning rate, the accuracy we report for TT-v2 is measured after training has converged.
 The complete set of hyperparameter is reported in Table~\ref{tab:tt_rl_hparams} for Lenet-5 and Table \ref{tab:tt_rl_hparams_fcn} for FCN. 

 Unless otherwise stated, TT-v2, AGAD and E-RIDER share the same IO and readout noise settings across both architectures: the input and output quantization resolutions are set to 7-bit and 9-bit, respectively; the output read noise is set to 0.06; and the input/output bounds are set to 1 and 12. This configuration is intended to reflect realistic analog hardware non-idealities; the corresponding parameters are summarized in Table~\ref{tab:io_noise_settings}.

 \subsection{Hyperparameter settings for Figure 
\ref{fig:tradeoff_accuracy_pulsecost}}
\label{sec:figure4}
For the ablation study comparing the total pulse budget of E-RIDER against the baseline zero-shifting algorithm, both methods use the same SoftBounds-based preset in the RPU configuration, and we set the desired update pulse length to be 5. For E-RIDER, we set the global learning rate to 0.1; the device-side learning rates are \texttt{fast\_lr}=0.1 and \texttt{transfer\_lr}=1, with \texttt{auto\_granularity}=1000 and \texttt{in\_chop\_prob}=0.2. For the baseline zero-shifting setup, we use the TT-v2 algorithm after SP estimation is completed, train with a global learning rate of 0.05, and set \texttt{fast\_lr}=0.01 and \texttt{transfer\_lr}=1.
We report the hyperparameter settings used in our CIFAR-100 experiments with ResNet-18. We replace the final fully connected layer and the last residual block with analog counterparts. For E-RIDER and AGAD, we report results after 80 epochs; for TT-v2, which consistently diverges during training, we report the best accuracy achieved before divergence.
 We use a batch size of 128, and decay the learning rate by a factor of 0.1 every 35 epochs.
 The complete set of hyperparameter is reported in Table~\ref{tab:cifar100_hparams}.

 \subsection{Larger-scale ImageNet-1K fine-tuning experiment}
\label{sec:imagenet_experiment}
To test the performance of E-RIDER on larger-scale dataset, we additionally evaluated E-RIDER on a reduced-scale ImageNet-1K setting. We construct a class-balanced training subset by sampling 200 images per class from the ImageNet-1K training split, and retain the full 50k-image validation set for evaluation. The architecture is a standard VGG-11-BN model with 8 convolutional layers and 3 fully connected layers, containing about 132M parameters. We replace the final two fully connected layers, fc2 and fc3, with analog layers of dimensions \(4096\times4096\) and \(4096\times1000\), respectively.

For the non-ideal setting, we use the same non-perfect IO configuration as in Table~\ref{tab:io_noise_settings}. The hardware device we use is the ReRamArrayOMPresetDevice~\citep{gong2022deep}. Its device-to-device variation, cycle-to-cycle variation, asymmetry, and limited conductance states are summarized in Table~\ref{tab:device_params}. The update asymmetry is modeled through soft bounds with separate up and down update responses provided in IBM AIHWKit simulator~\cite{rasch2021flexible}. 

The pretrained VGG-11-BN model achieves \(70.18\%\) top-1 test accuracy under digital evaluation. Direct deployment with non-ideal analog layers reduces test accuracy to \(63.17\%\). We then fine-tune the analog model and compare E-RIDER against the dynamic SP-tracking baseline AGAD. Table~\ref{tab:imagenet_results} reports the resulting top-1 test accuracy under different offsets of the initial reference mean and standard deviation.

\begin{table}[t]
  \centering
  \caption{Top-1 test accuracy for two dynamic SP tracking methods E-RIDER and AGAD on the reduced-scale ImageNet-1K setting under different reference mean/std. Best results are highlighted in bold.}
  \label{tab:imagenet_results}
  \small
  \setlength{\tabcolsep}{7pt}
  \renewcommand{\arraystretch}{1.05}
  \begin{tabular}{c|c|cccc}
    \hline
    \textbf{Method} &
    {\diagbox[dir=SE,width=6.2em,height=1.2em]{Mean}{Std}}
    & 0.05 & 0.4 & 0.7 & 1.0 \\
    \hline
    AGAD
      & \multirow{2}{*}{0.05}
      & 63.82 & 63.40 & 63.60 & 63.67 \\
    E-RIDER
      &
      & \textbf{66.41} & \textbf{66.48} & \textbf{66.62} & \textbf{66.65} \\
    \hline
    AGAD
      & \multirow{2}{*}{0.20}
      & 63.61 & 63.77 & 64.06 & 64.05 \\
    E-RIDER
      &
      & \textbf{66.56} & \textbf{66.64} & \textbf{66.66} & \textbf{66.32} \\
    \hline
    AGAD
      & \multirow{2}{*}{0.30}
      & 63.74 & 64.05 & 63.75 & 64.09 \\
    E-RIDER
      &
      & \textbf{66.51} & \textbf{66.35} & \textbf{66.21} & \textbf{66.37} \\
    \hline
    AGAD
      & \multirow{2}{*}{0.40}
      & 64.02 & 63.66 & 63.64 & 64.07 \\
    E-RIDER
      &
      & \textbf{66.35} & \textbf{66.34} & \textbf{66.19} & \textbf{66.13} \\
    \hline
  \end{tabular}
\end{table}

E-RIDER consistently improves over AGAD by $2-3\%$ across all tested reference offsets. Both methods remain robust to imperfect initial SP estimation, but E-RIDER maintains a clear improvement. These results indicate that the proposed E-RIDER scales beyond the MNIST and CIFAR benchmarks used in the main paper.

\subsection{Ablation studies of key hyperparameters}
\label{sec:hyperparameter_ablations}
We provide additional ablation studies for three key E-RIDER hyperparameters: chopping probability $p$, moving averaging stepsize $\eta$ and residual learning perturbation $\gamma$. First, Figure~\ref{fig:chop_prob} shows the effect of the chopper probability \(p\) on MNIST-FCN. When \(p=0\), E-RIDER reduces to RIDER and the test accuracy drops sharply. For all tested \(p>0\), however, the final accuracy remains stable. We therefore use \(p=0.1\) as the default setting.

The moving-average stepsize \(\eta\) controls the update speed of the SP-tracking estimate. As shown in Table~\ref{tab:eta_ablation}, performance is stable once \(\eta>0\). The best final accuracy occurs near our default \(\eta=0.5\). 

\begin{table}[t]
  \centering
  \caption{Ablation of the moving-average stepsize \(\eta\) in E-RIDER.}
  \label{tab:eta_ablation}
  \small
  \setlength{\tabcolsep}{8pt}
  \renewcommand{\arraystretch}{1.12}
  \begin{tabular}{lcccccc}
    \toprule
    \(\eta\) & 0.0 & 0.2 & 0.4 & 0.6 & 0.8 & 1.0 \\
    \midrule
    Final acc. (\%) & 89.43 & 91.46 & \textbf{93.57} & 92.06 & 91.38 & 92.01 \\
    Best acc. (\%) & 89.43 & 93.42 & \textbf{93.93} & 92.06 & 92.33 & 93.36 \\
    \bottomrule
  \end{tabular}
\end{table}

The residual learning perturbation \(\gamma\) controls the contribution of the residual term in the gradient computation. Table~\ref{tab:gamma_ablation} shows stable convergence for \(\gamma\in[0.1,0.4]\), with the best performance at \(\gamma=0.1\), which is set as the default value for E-RIDER. Starting at \(\gamma=0.5\), the final accuracy collapses. 

\begin{table}[t]
  \centering
  \caption{Ablation of the residual learning perturbation \(\gamma\) in E-RIDER.}
  \label{tab:gamma_ablation}
  \small
  \setlength{\tabcolsep}{7pt}
  \renewcommand{\arraystretch}{1.12}
  \begin{tabular}{lccccccc}
    \toprule
    \(\gamma\) & 0.1 & 0.2 & 0.3 & 0.4 & 0.5 & 0.6 & 0.7 \\
    \midrule
    Final acc. (\%) & \textbf{93.72} & 93.01 & 92.81 & 89.09 & 11.35 & 11.61 & 11.35 \\
    Best acc. (\%) & \textbf{93.72} & 93.01 & 92.81 & 89.09 & 33.04 & 56.88 & 49.61 \\
    \bottomrule
  \end{tabular}
\end{table}

\end{document}